\documentclass[magisterska,en]{pracamgr}
\usepackage[utf8]{inputenc}
\usepackage[
    natbib=true,
    style=numeric,
    sorting=none,
    maxbibnames=99
]{biblatex}
\usepackage{graphicx}
\usepackage{amsmath}
\usepackage{amssymb}
\usepackage{amsthm}
\usepackage{xspace}
\usepackage{subcaption}
\usepackage{algorithm}
\usepackage[noend]{algpseudocode}
\usepackage{graphbox}
\usepackage{hyperref}
\usepackage{multirow}
\usepackage{pythonhighlight}

\bibliography{bibliography}

\newtheorem{theorem}{Theorem}

\newcommand*{\eg}{\textit{e}.\textit{g}.\@\xspace}
\newcommand*{\ie}{\textit{i}.\textit{e}.\@\xspace}
\newcommand*{\Eg}{\textit{E}.\textit{g}.\@\xspace}

\newcommand*{\etal}{\textit{el~al}.\@\xspace}
\makeatletter
\newcommand*{\etc}{%
    \@ifnextchar{.}%
        {etc}%
        {etc.\@\xspace}%
}
\makeatother

\newcommand{\funarg}[1]{\mathopen{}\left(#1\right)\mathclose{}}
\newcommand{\parens}[1]{\left(#1\right)}
\newcommand\newtag[2]{#1\def\@currentlabel{#1}\label{#2}}

\newcommand{\SWp}[0]{\mathcal{SW}^+}
\newcommand{\NSWp}[0]{\mathcal{NSW}^+}
\newcommand{\FSWp}[1]{\mathcal{FSW}^+_{#1}}
\newcommand{\FNSWp}[1]{\mathcal{FNSW}^+_{#1}}
\newcommand{\FWp}[1]{\mathcal{FW}^+_{#1}}
\newcommand{\FNWp}[1]{\mathcal{FNW}^+_{#1}}
\newcommand{\W}[0]{\mathcal{W}}
\newcommand{\Wp}[0]{\mathcal{W}^+}
\newcommand{\NWp}[0]{\mathcal{NW}^+}
\newcommand{\generator}[1]{\mathcal{G}_{#1}}

\newcommand{\algparbox}[2]{\parbox[t]{\dimexpr#1\linewidth-\algorithmicindent}{#2}}

\DeclareMathOperator{\mix}{mix}
\DeclareMathOperator{\Var}{Var}
\DeclareMathOperator{\EX}{\mathbb{E}}

\algrenewcommand\algorithmicrequire{\textbf{Data:}}


\autor{Maciej Sypetkowski}{386094}

\title{Spatial Latent Representations \\
in Generative Adversarial Networks \\
for Image Generation}
\titlepl{Przestrzenne Ukryte Reprezentacje w Generatywnych Sieciach Przeciwstawnych \\ do Generowania Obrazów}

\kierunek{Computer Science}

\opiekun{
Piotr Biliński, PhD \\
Faculty of Mathematics, Informatics and Mechanics \\
University of Warsaw
}

\date{December 2021}

\dziedzina{ 
11.4 Artificial Intelligence
}

\klasyfikacja{
I.2.10 Vision and Scene Understanding
}
\keywords{generative adversarial networks, StyleGAN, latent representations, GAN inversion, image generation}

\begin{document}
\maketitle

\begin{abstract}
Generative Adversarial Networks (GANs) are currently state-of-the-art methods in image
generation tasks. They generate new images by transforming a latent space into an image data
distribution. In the vast majority of GAN architectures, the latent space is defined as a set of
vectors of given dimensionality. Such representations are not easily interpretable and
do not capture spatial information of image content directly.
In this work,
we define a family of spatial latent spaces for StyleGAN2,
capable of capturing more details and representing images that are out-of-sample in terms of the number and arrangement of object parts, such as an image of multiple faces or a face with more than two eyes.
We propose a method for encoding images into our spaces, together with an attribute model
capable of performing attribute editing in these spaces. We show that our spaces are effective for image manipulation purposes
and encode semantic information well.
Our approach can be used on pre-trained generator models,
and attribute edition can be done using pre-generated direction vectors making the barrier to entry for experimentation and use extremely low.

We propose a regularization method for optimizing latent representations,
which equalizes distributions of parts of latent spaces, making representations
much closer to generated ones.
We use it for encoding images into spatial spaces to obtain significant improvement
in quality while keeping semantics and ability to use our attribute model for edition purposes.
In total, using our methods gives encoding quality boost even as high as 30\% in terms of LPIPS score comparing to standard methods, while keeping semantics.

Additionally, we propose a StyleGAN2 training procedure on our spatial latent spaces,
together with a custom spatial latent representation distribution to make spatially closer elements in the representation more dependent on each other than farther elements. Such approach improves the FID score by 29\% on SpaceNet, and is able to generate consistent images of arbitrary sizes on spatially homogeneous datasets, like satellite imagery.
\end{abstract}

\tableofcontents

\chapter{Introduction}

State-of-the-art GAN models are able to generate photorealistic images, which often are extremely difficult to distinguish from real images for some datasets \cite{Karras2019stylegan2, DBLP:journals/corr/abs-1809-11096}.
Even though GANs can produce very diverse images that seemingly capture input distribution well, GAN inversion (\ie projecting real images into the latent space)
is still a challenging problem. There are two major common issues with obtaining good latent representations:
\begin{itemize}
    \item the image obtained from the latent representation differs from the original image. It may happen when a representation has too few parameters or a suboptimal representation was found, \eg did not converge, fell into a bad local minimum, poor performing loss function was used. 
    \item the representation is not meaningful. It may happen when a representation has too many parameters or appropriate regularization methods are not used.
        In such case, the obtained image from the representation is very visually similar to the original one,
        however it's not possible to use edition methods to obtain a targeted effect,
        like changing the age of a person on the image or obtaining a meaningful interpolation between images.
\end{itemize}
One approach to solving these problems is to define an appropriate latent space that has enough, but not too many degrees of freedom, \ie a latent space is capable of representing images of interest and is not prone to overfitting.
Besides changing parameters of a latent space to achieve that, it's possible to add regularization techniques during GAN inversion \cite{Karras2019stylegan2, DBLP:journals/corr/ZhuKSE16, DBLP:journals/corr/abs-1911-11544}.

Most works related to GAN inversion and image manipulation, use latent spaces defined as a set of one dimensional vectors of numbers (\eg $\mathcal{Z}$, $\mathcal{W}$, $\Wp$ spaces in StyleGAN) \cite{DBLP:journals/corr/ZhuKSE16, DBLP:journals/corr/abs-1802-05701, DBLP:journals/corr/abs-1904-03189}.
Such definitions don't capture spatial information directly, therefore:
\begin{itemize}
    \item translated or differently unaligned images (\eg zoomed or rotated) are not representable in such spaces.
    In particular, the process of encoding images into latent space is not invariant/equivariant to translation and projected representations lose reconstruction quality greatly
    \item there's no mechanism for generalization to out-of-sample images in terms of the number and arrangement of object parts.
    For example in the context of face generation: asymmetric faces, multiple people on one image, diprosopus, \etc
\end{itemize}

In this work, we address these issues.
In Chapter~\ref{chap:def}, we define fully spatial latent spaces for StyleGAN2 and describe their properties.
In Chapter~\ref{chap:mix}, we look at mixing capabilities of latent representations in these spaces.
In Chapter~\ref{chap:edit}, we propose a technique, which is capable of performing attribute edition on our representations.
In Chapter~\ref{chap:proj}, we define a method for inverting images into our spatial latent spaces and show that encoded representations are meaningful.
Our approaches don't require retraining any GAN model, and attribute editing can be done using pre-generated attribute direction vectors (\eg using latent directions from \cite{DBLP:journals/corr/abs-1907-10786, DBLP:journals/corr/abs-2002-03754, DBLP:journals/corr/abs-2001-10238}),
which drastically reduces computing resources needed.

In Chapter~\ref{chap:equaliz}, we further improve our methods by analyzing distribution of projected and generated latent representations and proposing a regularization method for projection into our latent spaces.

Additionally, in Chapter~\ref{chap:train} we argue that standard non-spatial one-dimensional input latent representations for GANs (\eg the $\mathcal{Z}$ space for StyleGAN) may be suboptimal for training on certain types of image data across various domains, \eg cellular images, satellite images or cosmological images. We suspect that spatial representations may be more suitable.
\begin{itemize}
    \item Such data, is invariant/equivariant to translation and rotation,
        \ie the image can be freely translated and rotated depending on the position and rotation of the scope.
    \item More distant objects on the image are more independent from each other and some common style is kept.
        For example, for cell images, one cell has usually one nucleus, so close objects are dependent, however distant cells don't affect each other, but some common style is shared, like cell type or disease, depending on the dataset.
        Similarly for satellite imagery (locally: terrain type; non-locally: architectural style) and cosmological images (locally: stronger forces between closer objects, gravitational lensing; non-locally: zoom).
\end{itemize}

We propose a spatial latent space for training and a training procedure that helps with these problems, and obtain much better scores than the baseline.


\section{Contributions}
\begin{itemize}
    
    \item We define new latent spaces for StyleGAN2, which capture all spatial information and don't require retraining the model
    \item We show that our latent spaces can be used effectively for image editing purposes, including spatial mixing and attribute editing
    \item We propose a method for projecting images directly into our latent space and show that this method is equivariant to translation and gives much greater flexibility, \eg supporting multiple faces on one image
    \item We identify a potential general issue about projected and generated latent representation discrepancies and propose a regularization method for tackling that.
    \item We propose a StyleGAN2 training procedure in a spatial latent space,
    which samples latent representations from a custom distribution to make spatially closer elements in the representation more dependent on each other than farther elements. Such approach improves the FID score by 29\% on SpaceNet, and is able to generate consistent images of arbitrary sizes.
\end{itemize}

\chapter{Related Works}

\section{Generative Adversarial Networks}
Generative adversarial networks (GANs) are powerful generative models that were first introduced in \cite{DBLP:journals/corr/GoodfellowPMXWOCB14}.
They described a framework for training a model capable of generating artificial samples from a given data distribution.
GANs use two neural network models -- a generator and a discriminator, which are trained simultaneously using adversarial learning,
\ie the objective of a discriminator is to distinguish real samples coming from the input data distribution, from the ones generated by a generator.
The objective of a generator is to trick a discriminator into incorrect classification of generated samples.
It can be described as a minimax game between generator $G$ and discriminator $D$:
\begin{equation}
\min_G \max_D V\funarg{D, G},
\end{equation}
\begin{equation}
V\funarg{D, G} = \EX_{x \sim p_{data}}[\log D\funarg{x}]+\EX_{z \sim p_z}[\log\funarg{1 - D\funarg{G\funarg{z}}}].
\end{equation}


Since that time a lot of improvements and extensions were published and GANs become very popular in various computer vision problems, \eg
image-to-image translation \cite{DBLP:journals/corr/ZhuPIE17, DBLP:journals/corr/abs-1711-09020,DBLP:journals/corr/abs-1912-01865, DBLP:journals/corr/abs-1711-11585, DBLP:journals/corr/abs-1903-07291},
superresolution \cite{DBLP:journals/corr/LedigTHCATTWS16, DBLP:journals/corr/abs-1809-00219},
inpainting \cite{DBLP:journals/corr/PathakKDDE16},
video generation \cite{DBLP:journals/corr/VondrickPT16, DBLP:journals/corr/abs-1807-11152}.
A lot of works focus on improving GANs in terms of
loss functions, training stability, regularizations, using higher resolutions, \etc \cite{DBLP:journals/corr/RadfordMC15, DBLP:conf/iccv/MaoLXLWS17, DBLP:journals/corr/Wu0ZH17, DBLP:journals/corr/ArjovskyCB17, DBLP:journals/corr/GulrajaniAADC17, DBLP:journals/corr/ZhangXLZHWM16, DBLP:journals/corr/abs-1710-10916, DBLP:journals/corr/abs-1710-10196, DBLP:journals/corr/abs-1802-05957}.

StyleGAN family models \cite{DBLP:journals/corr/abs-1812-04948, Karras2019stylegan2, DBLP:journals/corr/abs-2006-06676}
are considered to be among models capable of generating highest-fidelity images, especially
face images.
StyleGAN \cite{DBLP:journals/corr/abs-1812-04948} generator consist of mapping network and synthesis network.
Mapping network non-lineary transforms a latent code $z \in \mathcal{Z}$, usually $z \sim \mathcal{N}\funarg{0, I}$
into $w \in \mathcal{W}$, which is commonly called style vector.
Synthesis network consists of multiple synthesis layers.
Input to a first layer is a learned $512 \times 4 \times 4$ tensor.
Every two layers, the activation tensor is up-scaled.
In every layer, a style vector is linearly transformed and used as coefficients for adaptive instance normalization (AdaIN) \cite{DBLP:journals/corr/DumoulinSK16, DBLP:journals/corr/HuangB17}.
For stochastic variation, activation tensors are perturbed using random noise with learnable scale.

StyleGAN2 \cite{Karras2019stylegan2} improves the generator architecture by exchanging AdaIN with convolution weight demodulation.
The second notable architecture change is building output RGB map from the first layer using skip connections,
\ie the RGB map at a given layer is a sum of the RGB map from the previous layer and predicted addend from the feature map.
It can be thought as boosting.

StyleGAN2-ADA \cite{DBLP:journals/corr/abs-2006-06676} takes the architecture and training procedure from StyleGAN2 and proposes an adaptive discriminator augmentation mechanism, which allows to much better generalization and performance on datasets with lower number of samples. The mechanism augments the input for the discriminator using an adaptive augmentation strength determined by a custom heuristic measure of overfitting.


\section{Image to Latent Representation Projection}

There are two commonly used methods for mapping an image into a latent representation.
\subsection{Latent Representation Encoder}
This method consists of training a latent representation encoder that takes an image and encodes it directly into a corresponding latent variable.
These method are very fast during inference -- they require only one forward pass through the encoder model,
however they usually tend not to generalize outside the training dataset well.
One of the most known such models are variational auto-encoders \cite{DBLP:journals/corr/KingmaW13} (VAEs), which are consist of two parts -- encoder and decoder.
Encoder takes a sample and maps it into a lower dimensional latent variable, whereas decoder takes the latent variable and maps it back to the input image.
VAE-GANs \cite{DBLP:journals/corr/LarsenSW15} combines VAE and GAN architectures into one, allowing GAN to benefit from a built-in module for projection into the latent space.
Similar approach was taken in \cite{DBLP:journals/corr/abs-2004-00049} where an additional loss for latent variable similarity of reconstructed image is added.
Such approach allows to ensure that latent representations produced by the encoder lie in the native latent space of the generator.
Authors show that their representations can be used successfully for interpolation between images and face attribute manipulation.
In \cite{DBLP:journals/corr/abs-2104-14754}, the image is encoded into a latent space of a custom StyleGAN-based architecture, which is transformed into style maps for the synthesis network.

\subsection{Latent Representation Optimization}

The other popular method consists of taking an initial latent representation and using optimization methods (\eg backpropagation) to obtain the best latent representation
according to some loss function between the generated and the input image \cite{DBLP:journals/corr/abs-1802-05701, DBLP:journals/corr/ZhuKSE16}.
Zhu \etal \cite{DBLP:journals/corr/ZhuKSE16} additionally combines this method with the image encoder for a given GAN model,
and use it for initialization of the searched latent representation and an additional regularization to ensure the searched representation to be close to the initial one. 
To improve the quality of projected images, some works \cite{DBLP:journals/corr/abs-1904-03189, DBLP:journals/corr/DosovitskiyB16} add an additional perceptual loss.
Karras \etal \cite{Karras2019stylegan2} and Abdal \etal \cite{DBLP:journals/corr/abs-1904-03189} extend the latent representation from a single vector into multiple (style) vectors coming from the StyleGAN's mapping network ($\Wp$ space).
Karras \etal \cite{Karras2019stylegan2} additionally shows a way to optimize stochastic noise inputs by regularizing them to not carry a coherent signal.
Image2StyleGAN++ \cite{DBLP:journals/corr/abs-1911-11544} proposes noise space optimization to restore the high frequency features in an image, however obtained representations are unsuitable for attribute edition purposes.
Huh \etal \cite{DBLP:journals/corr/abs-2005-01703} shows a way to counteract problems of object-center bias, and introduces an additional step
for finding a transformation to center and crop the object of interest.

\section{GAN-based image editing}

Generally there are two approaches for editing existing images using generative models.
One is to use the image as an input to the network and transform it to achieve a targeted effect
\cite{DBLP:journals/corr/ZhuPIE17, DBLP:journals/corr/abs-1711-09020, DBLP:journals/corr/abs-1912-01865, DBLP:journals/corr/abs-1911-12861, DBLP:journals/corr/abs-2104-14754}.
Such approaches may require more supervision, like pairs of input-output images which differs in only some attributes.

Another, is first projecting the image into the latent variable,
and then editing the latent variable and generating the image.
Shen \etal \cite{DBLP:journals/corr/abs-1907-10786}
shows a way to find attribute direction vectors in the latent space to change certain face attributes of an image (\eg smile, age, gender, eyeglasses)
by finding attribute separating hyperplanes.
Goetschalckx \etal \cite{DBLP:journals/corr/abs-1906-10112} focuses on learning latent directions for visual image properties that are hard to define in words like aesthetics, memorability, emotional valence, \etc.
Plumerault \etal \cite{DBLP:journals/corr/abs-2001-10238} optimizes latent vectors along a latent direction by maximizing the estimated value for an attribute.
Jahanian \etal \cite{DBLP:journals/corr/abs-1907-07171} uses data augmentation to learn latent space trajectories for transformations like shift, rotation, scale, brightness.
Voynov \etal \cite{DBLP:journals/corr/abs-2002-03754} proposes an unsupervised approach for
the discovery of semantically meaningful directions in the GAN latent space \eg background blur, background removal, luminance, zoom, rotation.
Structural Noise \cite{DBLP:journals/corr/abs-2004-12411} edits
the input tensor in a GAN, allowing to spatially edit parts of an image.
StyleMapGAN \cite{DBLP:journals/corr/abs-2104-14754} proposes a new architecture based on StyleGAN by extending StyleGAN's mapping network to produce style maps instead of style vectors. They show that it is possible to transfer style between fragments of images.

\section{Datasets}
In this section, we describe datasets that we use in this work.

\subsection{Flickr-Faces-HQ}
Flickr-Faces-HQ (FFHQ) \cite{DBLP:journals/corr/abs-1812-04948} was introduced together with StyleGAN.
This dataset consists of 70\,000 high-quality images at $1024 \times 1024$ resolution of human faces.
It contains a lot of age, ethnicity, image background and use of accessories variation. Images were crawled from Flickr.
Images are aligned and cropped consistently.

\subsection{LSUN}
The Large-scale Scene Understanding (LSUN) \cite{DBLP:journals/corr/YuZSSX15} is a dataset containing 10 scene categories
such as dining room, bedroom, chicken, outdoor church, and so on. Every category has a considerable number of images ranging from around 120\,000 to 3\,000\,000. The size of images is $256 \times 256$.

\subsection{SpaceNet}
SpaceNet \cite{spacenet} is a dataset of satellite imagery.
Currently it contains around 67\,000 square km of very high-resolution imagery, >11M building footprints, and 20\,000 km of road labels.
There are eleven different areas of interests, featuring different cities and regions, \eg Rio De Janeiro, Vegas, Paris, Shanghai.
SpaceNet provides us with 3 different kinds of image representation: panchromatic, RGB, and 8-band channels.
The dataset is associated with SpaceNet Challenges focusing on different problems, \eg building detection, road network detection, multi-sensor all-weather mapping, multi-temporal urban development.

\section{Metrics and Losses}
In this section, we describe non-trivial metrics and losses that we use in this work.

\subsection{Fréchet Inception Distance}
The Fréchet Inception Distance (FID) \cite{DBLP:journals/corr/HeuselRUNKH17} is a method for comparing similarity of two image distributions to each other. It is commonly used between real images and generated images by a GAN as a measure of GAN performance.

The FID passes images from both distributions through the pre-trained inception network \cite{DBLP:journals/corr/SzegedyVISW15} obtaining embeddings for every image, \eg 2048 dimensional embedding vectors for Inception V3. For both distribution, the estimated mean embedding vector and the covariance matrix is calculated. The final distance is defined as following:
\begin{equation}
d^2\funarg{(m_1, C_1), (m_2, C_2)} = || m_1 - m_2 ||^2_2 + \mathrm{Tr}\funarg{C_1 + C_2 - 2(C_1 C_2)^{1/2}},
\end{equation}
where $(m_1, C_1)$ is an estimated mean embedding and covariance matrix from one distribution and $(m_2, C_2)$ from the other.

\subsection{Learned Perceptual Image Patch Similarity}
Learned Perceptual Image Patch Similarity (LPIPS) \cite{DBLP:journals/corr/abs-1801-03924} is used as a metric or a loss function of visual similarity between two images.
It passes both images through a pre-trained convolutional backbone, extracting intermediate activation feature maps.
After calculating the squared distance between corresponding activation maps, that difference is passed into
a trained point-wise convolution layer for tuning of channel importance and then is averaged. The final value of LPIPS is a mean of tuned distances of all intermediate activations.

\subsection{Noise Regularization}
Noise regularization method from StyleGAN2 \cite{Karras2019stylegan2} was introduced for GAN inversion problem in order to optimize latent representation together with noise maps ensuring that they carry a coherent signal.
This regularization method tries to preserve some properties of $\mathcal{N}\funarg{0, I}$ distribution.
It consists of two steps.
\begin{itemize}
    \item The first step is formulated as a regularization loss and tries to preserve independence between elements.
        For every noise maps, it is done by minimizing a product of following noise map elements at multiple resolution scales obtained by averaging $2 \times 2$ pixel neighborhoods up to $8 \times 8$ size. See Algorithm~\ref{alg:noise-reg-loss} for pseudocode.
    \item The second step is done after optimization step and focus on enforcing zero mean and unit standard deviation.
        It is done by noise map standardization, \ie from every noise map its mean is subtracted and divided by its standard deviation.
\end{itemize}

\begin{algorithm}
    \caption{Noise Regularization Loss}
    \label{alg:noise-reg-loss}
    \begin{algorithmic}
        \State $\mathrm{loss} \gets 0$
        \For{every noise map $n$ in the representation}
        \While{spatial size of noise map $n$ is greater or equal $8 \times 8$}
        \State $\mathrm{loss} \gets \mathrm{loss} + \mathrm{mean}\funarg{n \odot \mathrm{roll}\funarg{n, \mathrm{shift}=1,\mathrm{axis}=\mathrm{``width"}}}^2$
        \State $\mathrm{loss} \gets \mathrm{loss} + \mathrm{mean}\funarg{n \odot \mathrm{roll}\funarg{n, \mathrm{shift}=1,\mathrm{axis}=\mathrm{``height"}}}^2$
        \State $n \gets \mathrm{\text{average\_pool}}\funarg{n, \mathrm{kernel\_size}=2, \mathrm{stride}=2}$
        \EndWhile
        \EndFor
        \State \Return $\mathrm{loss}$
    \end{algorithmic}
\end{algorithm}

\chapter{Spatial Latent Space Definition}
\label{chap:def}

In this work, we use StyleGAN2 \cite{Karras2019stylegan2} generator architecture.
Originally StyleGAN2 takes an latent vector $z \in \mathcal{Z}$ and noise maps $n \in \mathcal{N}$ and generates an image.
To make latent representation spatially aware, we trim some part of the architecture and push latent representations further. We propose two ideas:
\begin{itemize}
    \item Trimming a certain number of building blocks in synthesis network. This exposes intermediate feature and RGB maps, which we add to the latent representation.
        Because some style vectors and noise maps are only used in removed layers, we can also remove them from the latent space.
        Note that both feature and RGB maps are spatial and features at a given coordinate affect only surrounding of this coordinate in the generated image.
    \item Trimming mapping network and expanding spatially style vectors. Removing mapping network exposes a style vector ($\mathcal{W}$ space) or a list of style vectors ($\mathcal{W}^+$ space) depending if we account for style vector duplication. Style vectors are not spatial alone, however we expand them on spatial dimensions in a way that spatial dimensions match these of a feature map in a given synthesis network block.
    Style vectors are used for demodulation of convolution weights, and in spatial version we apply them independently at every spatial coordinate.
    For implementation details see \ref{style-layer-impl}.
\end{itemize}
We show components used for our proposed spatial latent spaces in Figure~\ref{fig:latent-space-definition} and definitions in Table~\ref{def:spaces}.

\begin{figure}[t]
    \centering
    \includegraphics[width=\textwidth, align=c]{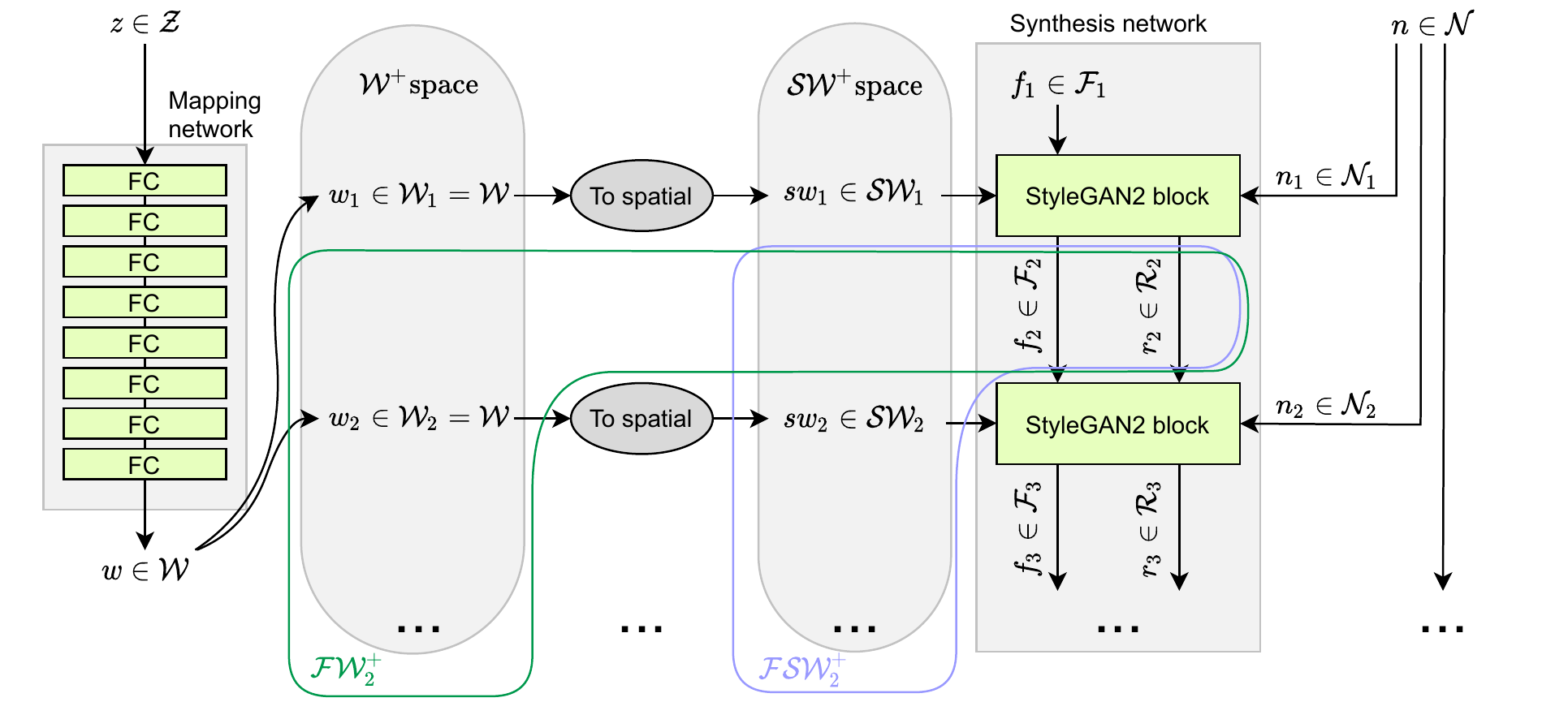}
    \caption{
        Components used for latent space definitions.
        A latent $z \in \mathcal{Z} = \mathbb{R}^{512}$ is mapped into $w \in \mathcal{W} = \mathbb{R}^{512}$ using mapping network, which is replicated for every layer into $w^+ \in \mathcal{W}^+ = \mathbb{R}^{18 \times 512}$ in the same way as in StyleGAN2.
        Style vectors $w_i \in \mathcal{W}_i$ are expanded spatially into $\mathcal{SW}_i$ space in a way that spatial dimensions match these in StyleGAN2 block.
        Note that $\mathcal{SW}_i$ have different spatial size depending on $i$.
        $f_i \in \mathcal{F}_i$ represents a feature map and $r_i \in \mathcal{R}_i$ represents an RGB map before $i$-th block.
        Note that $F_i$ and $R_i$ may have different spatial dimensions for the same $i$, and in particular $\mathcal{R}_1$ doesn't exist.
        In StyleGAN2, $f_1$ is a learnable $\mathbb{R}^{512 \times 4 \times 4}$ tensor.
        $\mathcal{N} = \prod_j \mathcal{N}_j$ is a space of noise maps.
        We construct spatial latent spaces from all $\mathcal{SW}_i$ maps ($\SWp$ space),
        a combination of $\mathcal{F}_i$, $\mathcal{R}_i$ and $\mathcal{W}_j$, $j \geq i$ ($\FWp{i}$ space),
        and a combination of $\mathcal{F}_i$, $\mathcal{R}_i$ and $\mathcal{SW}_j$, $j \geq i$ ($\FSWp{i}$ space).
        In addition we define similarly spaces with noise maps, \ie $\NSWp$, $\FNWp{i}$ and $\FNSWp{i}$.
    }
    \label{fig:latent-space-definition}
\end{figure}

\begin{table}
    \setlength{\tabcolsep}{0.9ex}
    \renewcommand\arraystretch{2}
    \centering
    \begin{tabular}{lp{0.5\linewidth}cr}
        Name & Description & Formula & \\\hline\hline
        $\Wp            $ & A space containing style vectors. Note that this space is already used in some works (\eg \cite{Karras2019stylegan2, DBLP:journals/corr/abs-1904-03189}), however terminology is inconsistent when it comes to containing noise maps as noise maps are not always treated as a proper part of a latent space. In this work, $\Wp$ do not contain noise maps. & $\prod_{j} \mathcal{W}_j $ & (3.1) \\\hline
        $\SWp           $ & A space containing spatial style maps. Generated images will be aligned as in the training set, however styles can be different for different spatial coordinates. & $\prod_{j} \mathcal{SW}_j                                                                    $ & (3.2) \\\hline
        $\FWp{i}        $ & A space containing $i$-th feature and RGB map, and non-spatial style vectors. Generated images will have the same style spatially, however the space is expected to be capable of encoding images that are unaligned (\eg translated spatially) or have different number or arrangement of object parts. & $\mathcal{F}_i \times \mathcal{R}_i \times \prod_{j \geq i} \W_j                             $ & (3.3) \\\hline
        $\FSWp{i}       $ & A space containing $i$-th feature and RGB map, and spatial style maps. It combines $\SWp$ and $\FWp{i}$ ideologically. & $\mathcal{F}_i \times \mathcal{R}_i \times \prod_{j \geq i} \mathcal{SW}_j                   $ & (3.4) \\\hline
        $\NWp           $ & A space containing non-spatial style vectors and noise maps. The $\Wp$ space with additional noise maps.   & $\Wp \times \prod_{j} \mathcal{N}_j                                                          $ & (3.5) \\\hline
        $\NSWp          $ & A space containing spatial style maps and noise maps. The $\SWp$ space with additional noise maps.         & $\SWp \times \prod_{j} \mathcal{N}_j                                                         $ & (3.6) \\\hline
        $\FNWp{i}       $ & A space containing $i$-th feature and RGB map, non-spatial style vectors, and noise maps. The $\FWp{i}$ space with additional noise maps. & $\FWp{i} \times \prod_{j \geq i} \mathcal{N}_j                                               $ & (3.7) \\\hline
        $\FNSWp{i}      $ & A space containing $i$-th feature and RGB map, spatial style maps, and noise maps. The $\FSWp{i}$ space with additional noise maps. & $\FSWp{i} \times \prod_{j \geq i} \mathcal{N}_j                                              $ & (3.8) \\\hline
        $\mathcal{FNZ}_i$ & A space containing $i$-th feature and RGB map, latent vector from the $\mathcal{Z}$ space, and noise maps. & $\mathcal{F}_i \times \mathcal{R}_i \times \mathcal{Z} \times \prod_{j \geq i} \mathcal{N}_j $ & (3.9) \\\hline
    \end{tabular}
    \caption{
        Latent spaces for StyleGAN2.
    }
    \label{def:spaces}
\end{table}

We define the $\FSWp{i}$ latent space as the Cartesian product of $F_i$, $R_i$, $\mathcal{SW}_j$, $j \geq i$,
and similarly the $\FWp{i}$ latent space, from $F_i$, $R_i$, and a fragment of $\Wp$ for the block $i$ and later.
Spaces $\Wp$, $\SWp$, $\FWp{i}$, $\FSWp{i}$ don't contain noise maps.
We define $\NWp$, $\NSWp$, $\FNWp{i}$, $\FNSWp{i}$ the same way but with appropriate noise maps,
\ie $\FNWp{i}$ and $\FNSWp{i}$ contain only noise maps for block $i$ and later.

Intuitively, $\mathcal{F}_i$ and $\mathcal{R}_i$ fragments should contain information about positions of certain objects \eg eyes, noses, mouths and background,
whereas $\Wp$ and $\SWp$ keep styles for the image \eg colors, open/close mouth/eyes, gender.
$\SWp$ can represent different styles depending on spatial position.

Given model parameters, we can convert some latent representations into others without changing the generated image
by forward passing the representation through the network up to a certain point.
\Eg the $\SWp$ representation can be converted into $\FSWp{1}$ by adding to the representation constant input tensor as a feature map,
$\FSWp{1}$ can be converted into $\FSWp{i}$ by passing the feature and RGB maps through blocks from 1 to $i - 1$,
and similarly $\FSWp{i}$ can be converted into $\FSWp{j}$, where $i < j$,
and analogically for representations containing noise maps.

\chapter{Latent Space Mixing}
\label{chap:mix}

In this chapter, we focus on possibilities of latent representation mixing.
Informally, it can be defined as: given two latent representations $v_1$ and $v_2$, and some mixing parameter $m$,
obtain a third latent variable $v_3$ being a mix of $v_1$ and $v_2$.
For instance a common way of linear latent vector interpolation look as the following:
\begin{equation}
    \mathop{\text{linear-interpolation}}\funarg{v_1, v_2, m} = v_1 + m \cdot \parens{v_2 - v_1},
\end{equation}
where $v_1, v_2 \in \mathbb{R}^d$, $d \in \mathbb{N}_+$ is latent space dimensionality, $m \in [0, 1]$ is a mixing coefficient.

\section{Approach}
In order to leverage the spatial structure of latent spaces, we allow mixing depending on spatial coordinates in order to combine two latent representations with each other.
We use \textit{masked mixing}:
\begin{equation}
\mix\funarg{v_1, v_2, m} = v_1 + m \odot \parens{v_2 - v_1},
\label{eq:mix}
\end{equation}
where $v_1, v_2 \in \mathbb{R}^{C \times H \times W}$ are 3D tensors with spatial shapes $(H, W)$ and with $C$ channels,
$m : [0, 1]^{H \times W}$ is a mask transforming spatial coordinates into linear interpolation coefficients.


For our spatial mixing, we use this formula in our experiments on all spatial elements of latent spaces
\ie feature maps, RGB maps, style maps, except for noise maps,
which are additionally divided by $\sqrt{2 m^2 - 2 m + 1}$ in order to preserve standard deviation,
assuming independence of noise map sampling. 
Additionally when noise maps are sampled from $\mathcal{N}\funarg{0, I}$,
such transformation won't change the distribution at all.
Proof of these facts can be found in \ref{preserve-std}.

Note that for some of our experiments, we spatially translate representations prior to mixing.
That allows to obtain images that have different number and arrangement of objects.
We can also obtain images of a different size by having only partial overlap during mixing.

\section{Experiments}

In all our experiments with face images we use the official pre-trained StyleGAN2 model checkpoint on the FFHQ \cite{DBLP:journals/corr/abs-1812-04948} dataset.
We try a few different variations of mixing, but the general idea is the following:
Sample variables from $\Wp$ latent space, transform them to a spatial latent space,
mix them together, and generate the image.

Figures~\ref{fig:mix-SWp} and \ref{fig:interp-SWp} show that we can mix $\NSWp$ latent representations spatially and obtain consistent spatial transitions.
Figure \ref{fig:church-mix} mix latent representations in $\FNSWp{}$ space, obtaining a higher resolution feature and RGB map, which generates
a higher resolution image.

\begin{figure}
    \setlength{\tabcolsep}{0pt}
    \begin{tabular}{ccccc}
        Source 1 & Source 2 & Raw mix & \parbox[t]{0.2\textwidth}{\centering Mix in $\NSWp$} & \parbox[t]{0.2\textwidth}{\centering Mix in $\NSWp$ with smoothing} \\
        \includegraphics[width=0.2\textwidth]{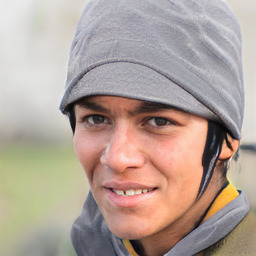} &
        \includegraphics[width=0.2\textwidth]{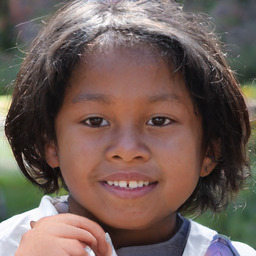} &
        \includegraphics[width=0.2\textwidth]{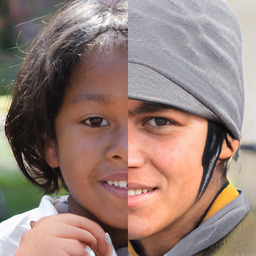} &
        \includegraphics[width=0.2\textwidth]{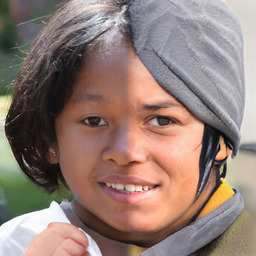} &
        \includegraphics[width=0.2\textwidth]{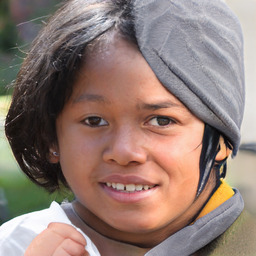} \\
        \includegraphics[width=0.2\textwidth]{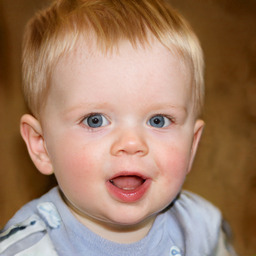} &
        \includegraphics[width=0.2\textwidth]{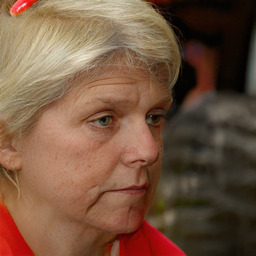} &
        \includegraphics[width=0.2\textwidth]{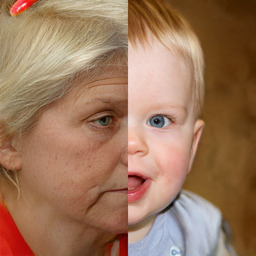} &
        \includegraphics[width=0.2\textwidth]{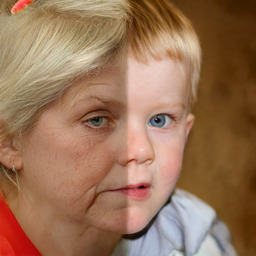} &
        \includegraphics[width=0.2\textwidth]{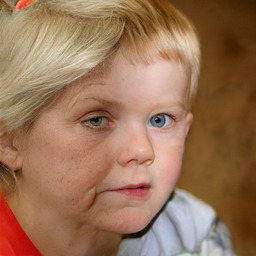} \\
        
        & & & 
        \includegraphics[width=0.2\textwidth,height=0.2\textwidth,decodearray={0.15 0.85}]
            {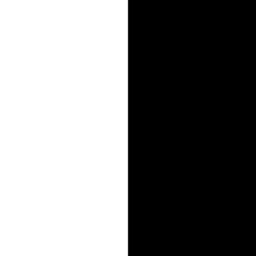} &
        \includegraphics[width=0.2\textwidth,height=0.2\textwidth,decodearray={0.15 0.85}]
            {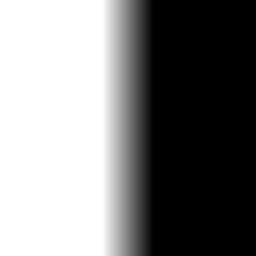} \\
    \end{tabular}
    \caption{
        For both rows, we sample two latent variables from $\NWp$, transform into $\NSWp$
        and mix them using Equation~\ref{eq:mix}. 3rd column shows direct mixing in the pixel space.
        4th and 5th column show mixing in $\NSWp$ using a mask from the bottom row.
        Mixing latent variables in $\NSWp$ is able to align certain parts like mouths, noses, \protect\etc,
        and produce a realistic transition between images.
        Because fine styles like color scheme are close to the end of the network,
        to improve mixing, we make the mixing mask more smooth.
    }
    \label{fig:mix-SWp}
\end{figure}

\begin{figure}
    \setlength{\tabcolsep}{0pt}
    \begin{tabular}{cccccccc}
        Source 1 & \multicolumn{6}{c}{$\xrightarrow{\makebox[0.7\linewidth]{}}$} & Source 2 \\
        \includegraphics[width=0.125\textwidth]{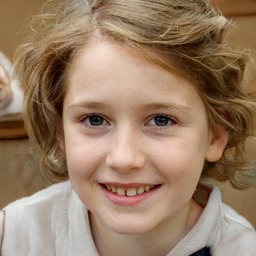} &
        \includegraphics[width=0.125\textwidth]{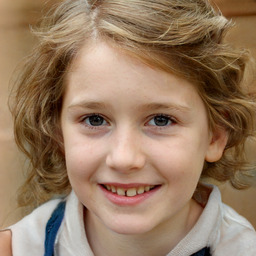} &
        \includegraphics[width=0.125\textwidth]{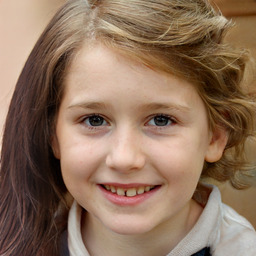} &
        \includegraphics[width=0.125\textwidth]{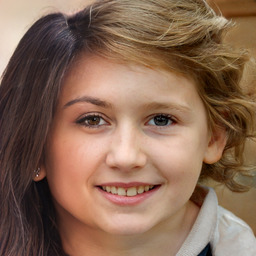} &
        \includegraphics[width=0.125\textwidth]{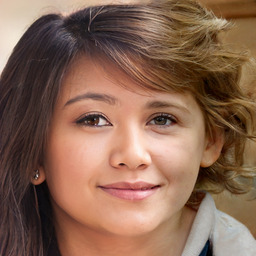} &
        \includegraphics[width=0.125\textwidth]{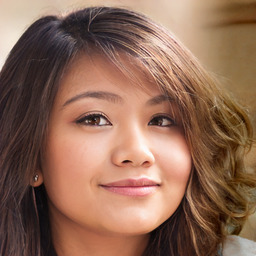} &
        \includegraphics[width=0.125\textwidth]{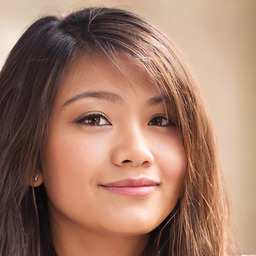} &
        \includegraphics[width=0.125\textwidth]{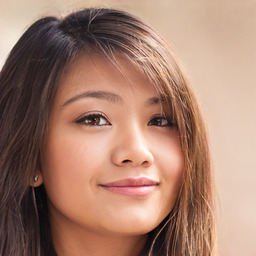} \\
        \includegraphics[width=0.125\textwidth]{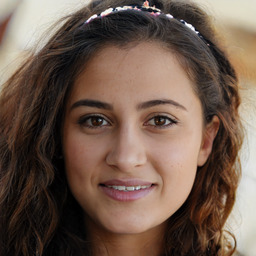} &
        \includegraphics[width=0.125\textwidth]{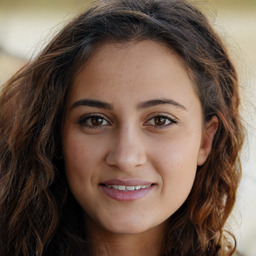} &
        \includegraphics[width=0.125\textwidth]{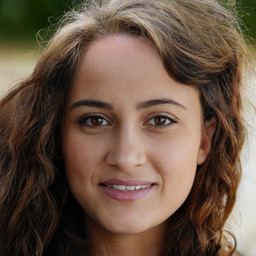} &
        \includegraphics[width=0.125\textwidth]{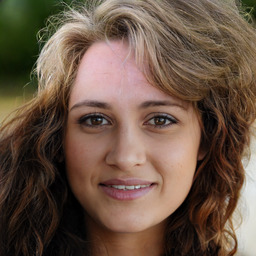} &
        \includegraphics[width=0.125\textwidth]{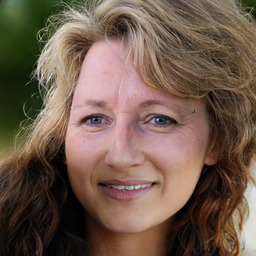} &
        \includegraphics[width=0.125\textwidth]{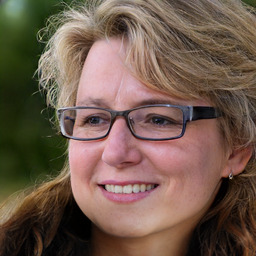} &
        \includegraphics[width=0.125\textwidth]{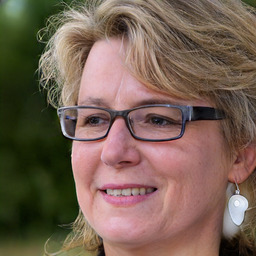} &
        \includegraphics[width=0.125\textwidth]{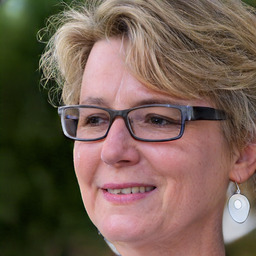} \\
        \includegraphics[width=0.125\textwidth]{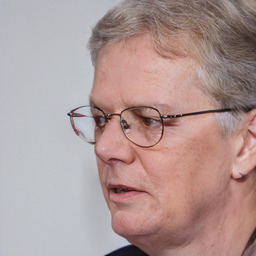} &
        \includegraphics[width=0.125\textwidth]{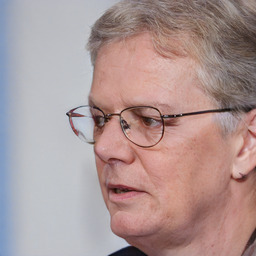} &
        \includegraphics[width=0.125\textwidth]{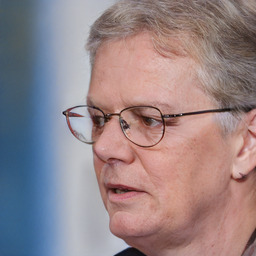} &
        \includegraphics[width=0.125\textwidth]{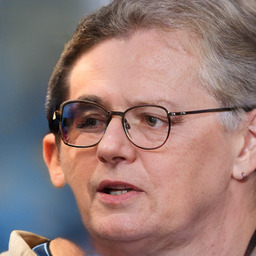} &
        \includegraphics[width=0.125\textwidth]{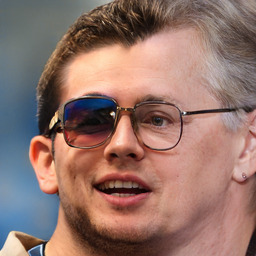} &
        \includegraphics[width=0.125\textwidth]{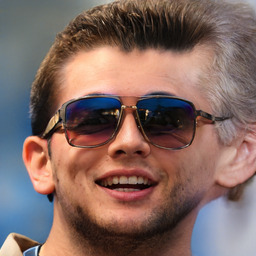} &
        \includegraphics[width=0.125\textwidth]{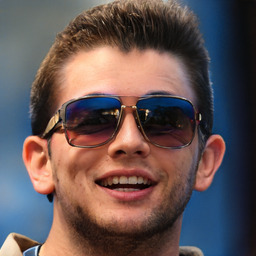} &
        \includegraphics[width=0.125\textwidth]{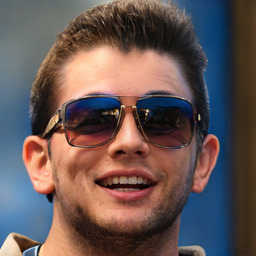} \\
    \end{tabular}
    \caption{
        Similarly to Figure \ref{fig:mix-SWp}, we obtain latent variables in $\NSWp$ space
        and translate a mixing mask to simulate spatial transition between images
        (vertical or horizontal).
    }
    \label{fig:interp-SWp}
\end{figure}

\begin{figure}
    \includegraphics[width=1\textwidth,align=c]{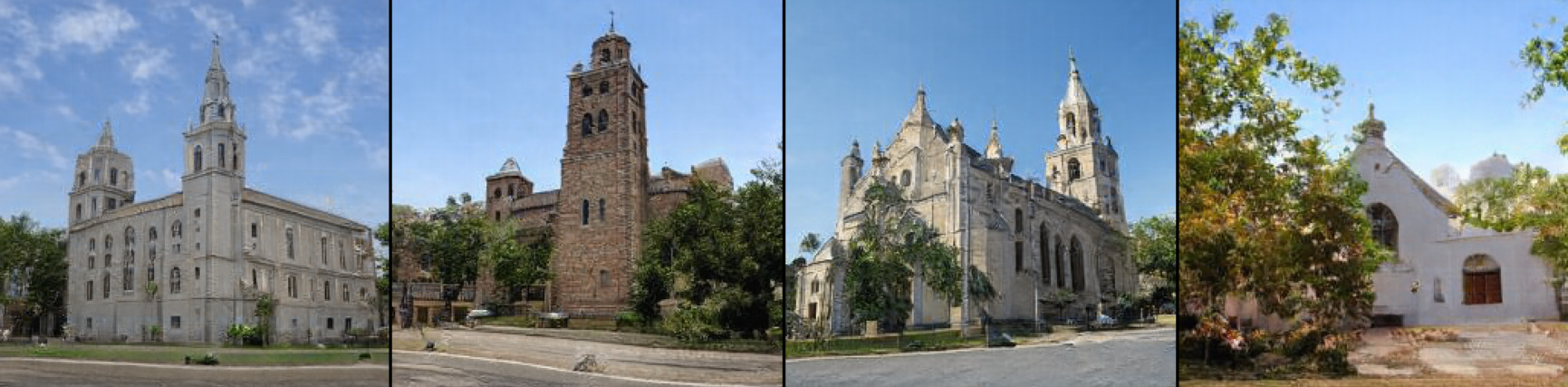}
    \includegraphics[width=1\textwidth,align=c]{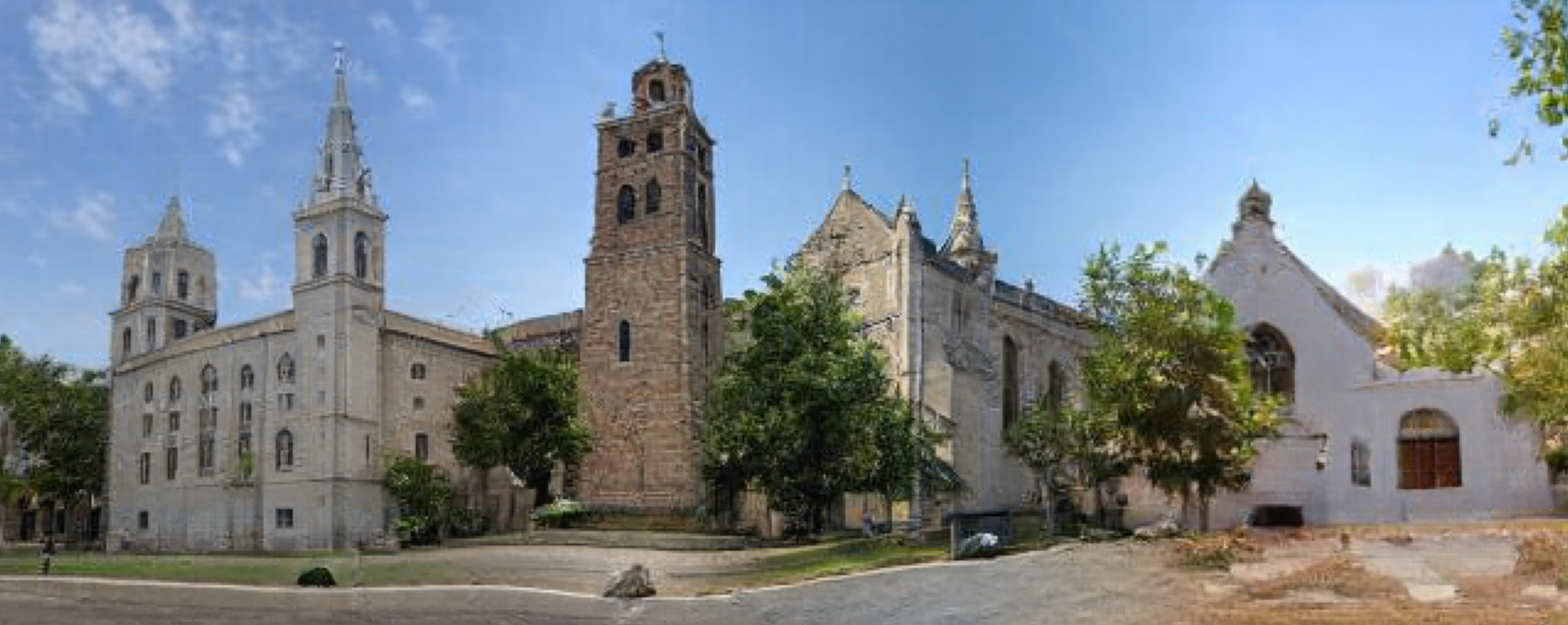}
    \caption{
        We take an official StyleGAN2 model trained on LSUN Churches \cite{DBLP:journals/corr/YuZSSX15}.
        We sample four representations from $\NWp$, and transform into $\FNSWp{5}$.
        We combine them into one $\FNSWp{5}$ representation by spatially extending all latent maps.
        We use a smoothed mask to obtain a smoother transition between images.
    }
    \label{fig:church-mix}
\end{figure}

\chapter{Latent Attribute Editing}
\label{chap:edit}
A common way to edit latent representation in order to obtain a targeted effects,
\eg changing the age, gender, adding glasses \etc,
is to obtain an attribute direction vector in $\W$ or $\Wp$ space.
To obtain such vector, one can train a linear regression that transforms a latent representation into a value
for a given attribute. The normal vector of a hyperplane of the logistic regression model
is a desired direction vector, because translating a latent representation in the same space along this vector
results in a change of a given attribute.

\section{Approach}

We propose a method that uses existing attribute direction vectors in $\mathcal{W}$ or $\Wp$ space.
Application of the attribute vector is straightforward in the $\SWp$ space, by translating style maps at every spatial coordinate along the vector.
However such approach in $\FSWp{}$ and $\FWp{}$ fails to correctly capture coarse styles that are formed at
style layers that are before the feature map in the representation (see Figure~\ref{fig:attribute-model-effect}).

To tackle that problem, we train an \textit{attribute model} to find a transformation for feature and RGB maps in the $\FWp{}$ space for a given attribute direction vector.
For every iteration we sample a mini-batch from $\NWp$ and obtain a prediction target by applying direction vector in $\NWp$.
We transform the input mini-batch into $\FNWp{i}$, pass feature maps and RGB maps through the attribute model and
apply the attribute direction vector on the remaining style maps.
The attribute model predicts the difference from the input, which is multiplied by a target attribute strength.
The objective is to maximize similarity of obtained feature and RGB maps to intermediate activations from the forward pass of edited $\NWp$ representations, as well as maximize similarity of generated images.
For image similarity, we use LPIPS \cite{DBLP:journals/corr/abs-1801-03924} loss.
Algorithm~\ref{alg:attribute-model} and Figure~\ref{fig:attribute-model} show the pseudocode and the diagram with details.
Note that attribute model is also able to operate on $\FNSWp{i}$ or spaces without noise maps.

\begin{algorithm}
    \caption{
        Attribute Model Training. \\
        Model training consists of generating a mini-batch from $\NWp$ and editing it using an attribute vector. We consider such edited latent representations a prediction target for the model.
        We convert the initial mini-batch into the $\FNWp{i}$ space and try to predict
        the offset for feature and RGB maps to match the assumed prediction target.
    }
    \label{alg:attribute-model}
    \begin{algorithmic}
        \Require{ \\
            \;\; $v$ -- attribute direction vector \\
            \;\; $c_{\mathrm{min}}$, $c_{\mathrm{max}}$ -- \algparbox{0.7\linewidth}{min and max value for the attribute direction vector scaling coefficient \vspace{0.35em}} \\
            \;\; $\theta$ -- generator model weights \\
            \;\; $i$ -- index of a feature and RGB map defining the latent space \\
            \;\; $\lambda_f$ -- coefficient for feature and RGB map loss \\
            \;\; $\lambda_{\mathrm{lpips}}$ -- coefficient for LPIPS loss
        }
        \vspace{0.5em}
        \State $M \gets \text{\algparbox{0.9}{randomly initialized 2D point-wise convolution layer without bias}}$
        \While{\text{not converged}}
        \State $b \gets \text{sample a mini-batch from $\NWp$}$
        \State $c \gets \text{sample $\mathrm{length}\funarg{b}$ coefficients from $[c_{\mathrm{min}}, c_{\mathrm{max}}]$}$
        \Comment{\algparbox{0.35}{A list of attribute vector coefficients \vspace{0.35em}}}
        \State $y \gets \text{translate $b$ by vector $c \cdot v$}$
        \Comment{\algparbox{0.35}{Apply attribute vector with a given strength to mini-batch $b$}}
        \State $y_I \gets \text{\algparbox{0.55}{generate images from $y$ using weights $\theta$}}$
        \State $y_f, y_r \gets \text{\algparbox{0.5}{take feature and RGB maps from $y$ transformed into $\FNWp{i}$ space}}$
        \Comment{\algparbox{0.35}{we consider $y_I$, $y_f$, $y_r$ prediction targets that we want to predict from spatial representations}}
        \State $f, r \gets \text{\algparbox{0.5}{take feature and RGB maps from $b$ transformed into $\FNWp{i}$ space}}$
        \State $df, dr \gets M\funarg{\mathrm{concatenate}\funarg{f, r}}$
        \Comment{\algparbox{0.35}{The $M$ model predicts an offset of feature and RGB maps}}
        \State $x_f, x_r \gets (f, r) + c \cdot (df, dr)$
        \State $x \gets \text{\algparbox{0.6}{form $\FNWp{i}$ representations using \\ $x_f$, $x_r$ and style and noise maps from $y$}}$
        \State $x_I \gets \text{\algparbox{0.85}{generate images from $x$ using weights $\theta$}}$
        \State $L \gets$ \algparbox{0.8}{$\lambda_f (\mathrm{MSE}\funarg{x_f, y_f} + \mathrm{MSE}\funarg{x_r, y_r}) + \lambda_{\mathrm{lpips}} \mathrm{LPIPS}\funarg{x_I, y_I}$}
        \State $\text{optimize weights of $M$ using loss $L$}$
        \EndWhile
        \State \Return $M$
    \end{algorithmic}
\end{algorithm}

\begin{figure}
    \centering \includegraphics[width=0.8\textwidth]{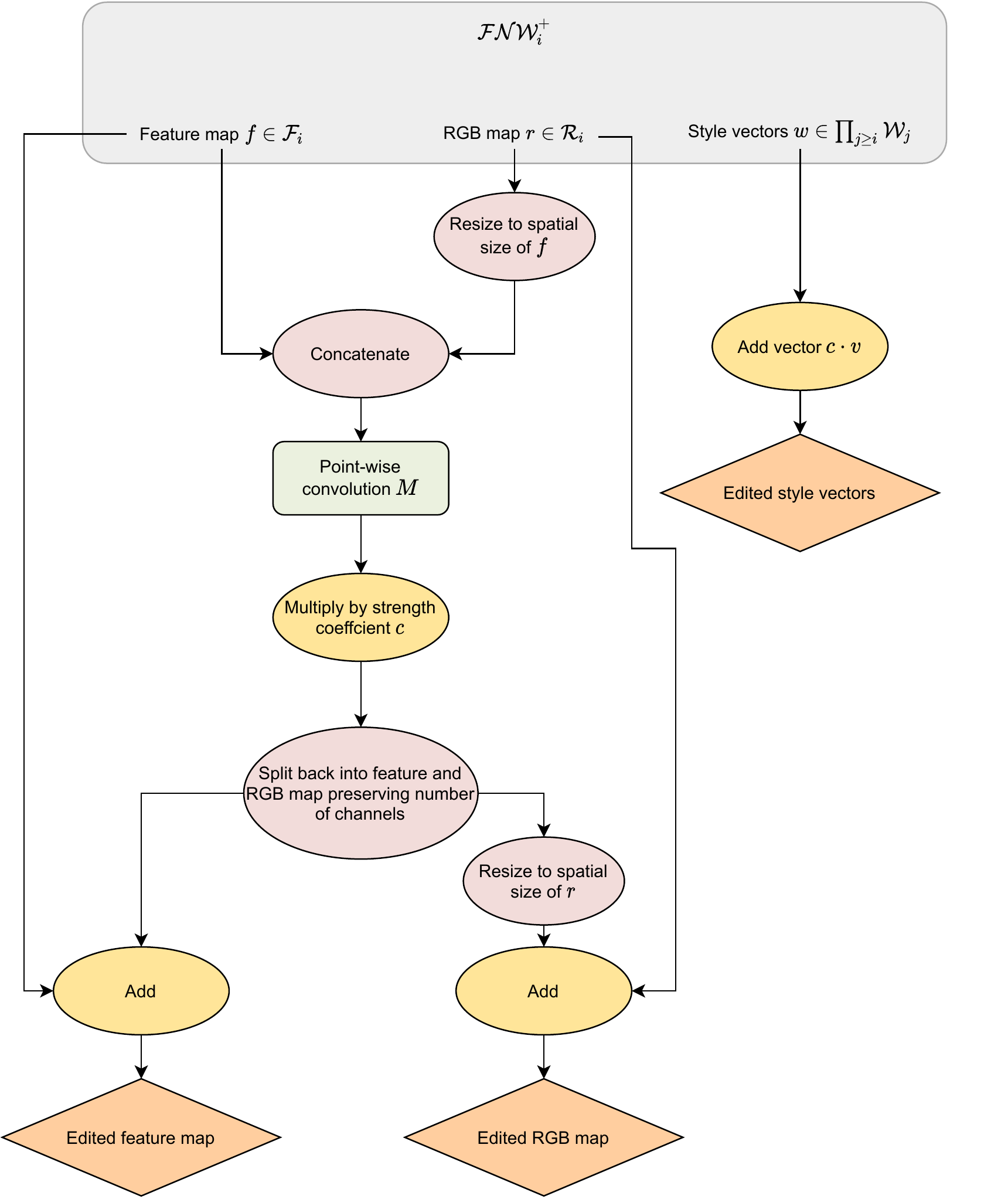}
    \caption{
        Diagram of attribute model.
        The feature and RGB map in the representation are concatenated and unit offset is predicted using the convolution layer $M$.
        We multiply the offset by the strength coefficient $c$ and add it to input feature and RGB maps, obtaining edited representation.
    }
    \label{fig:attribute-model}
\end{figure}

\section{Experiments}

We use latent attribute direction vectors from \cite{twitter:latent-dirs1} and \cite{github:latent-dirs2}.
Attribute models are trained using $\lambda_f = \lambda_{\mathrm{lpips}} = 1$, Adam optimizer \cite{kingma2014adam} with learning rate of $1\text{e-}4$, $\beta_1 = 0.9$, $\beta_2 = 0.999$, batch size of 12, and direction vector length scale coefficient $c_{\mathrm{max}} = -c_{\mathrm{min}} = \frac{20}{||v||_2}$, when $v$ is the attribute direction vector, \ie the length of a vector is up to 20 units. We downscale images 4 times for the LPIPS loss, as it carries a large memory footprint otherwise.

In Figure~\ref{fig:mix-FSWp}, we show that $\FNSWp{}$ besides being capable of representing out-of-sample images
it also preserves semantic information about elements on the image, like face parts.
Our attribute models are fully capable of operating on such representations. 

\begin{figure}
    \setlength{\tabcolsep}{0pt}
    \renewcommand{\arraystretch}{0}
    \def\ssize{0.15}
    \def\wsize{0.1875}
    \centering
    \begin{tabular}{lcccc}
    
        \rotatebox[]{90}{Source 1}\; &
        \includegraphics[align=c,width=\ssize\textwidth]{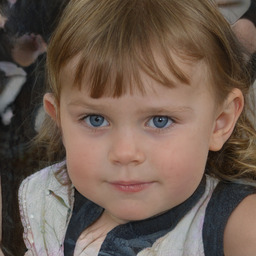} &
        \includegraphics[align=c,width=\ssize\textwidth]{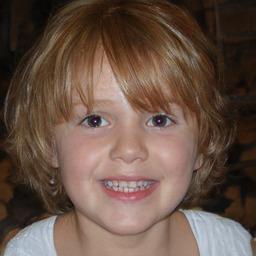} &
        \includegraphics[align=c,width=\ssize\textwidth]{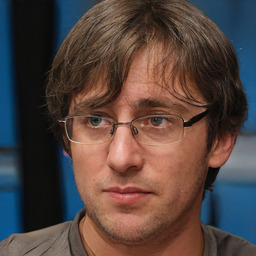} &
        \includegraphics[align=c,width=\ssize\textwidth]{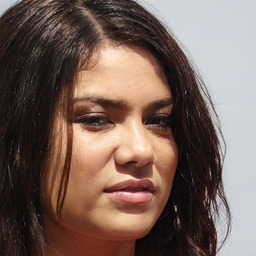} \\
        
        \rotatebox[]{90}{Source 2}\; &
        \includegraphics[align=c,width=\ssize\textwidth]{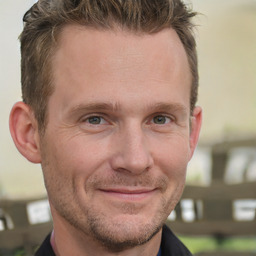} &
        \includegraphics[align=c,width=\ssize\textwidth]{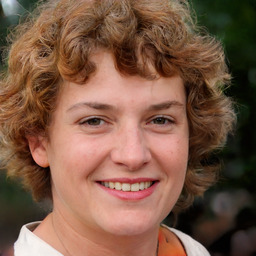} &
        \includegraphics[align=c,width=\ssize\textwidth]{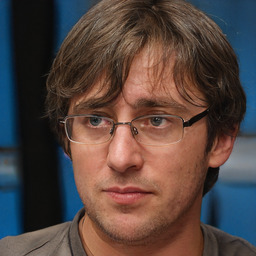} &
        \includegraphics[align=c,width=\ssize\textwidth]{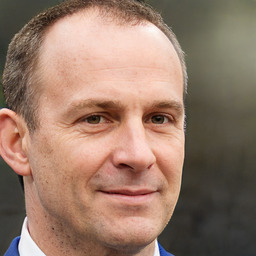} \\
        
        \rotatebox[]{90}{Raw mix}\; &
        \includegraphics[align=c,width=\wsize\textwidth]{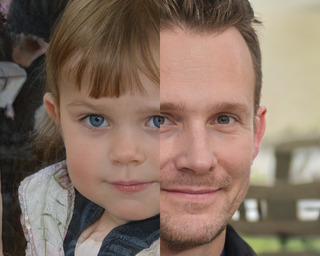} &
        \includegraphics[align=c,width=\wsize\textwidth]{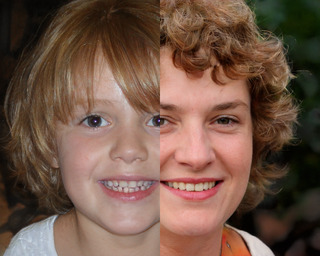} &
        \includegraphics[align=c,width=\ssize\textwidth]{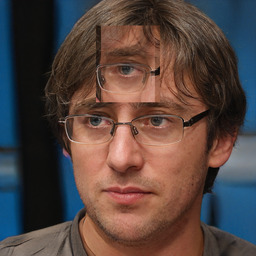} &
        \includegraphics[align=c,width=\ssize\textwidth]{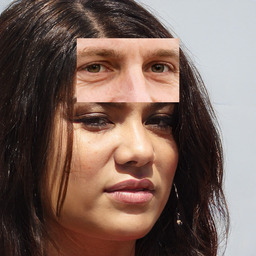} \\
        
        \rotatebox[]{90}{\centering \parbox[c]{\ssize\textwidth}{\centering Mix in $\FNSWp{5}$}}\; &
        \includegraphics[align=c,width=\wsize\textwidth]{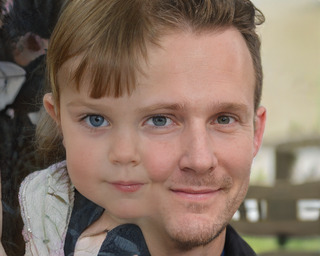} &
        \includegraphics[align=c,width=\wsize\textwidth]{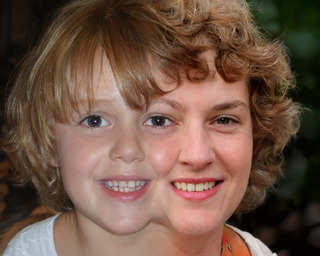} &
        \includegraphics[align=c,width=\ssize\textwidth]{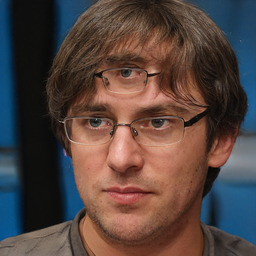} &
        \includegraphics[align=c,width=\ssize\textwidth]{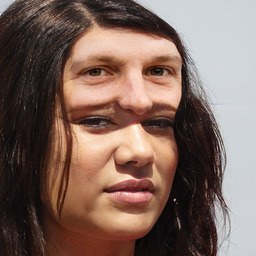} \\
        
        \rotatebox[]{90}{\small $\pm$ Age}\; &
        \includegraphics[align=c,width=\wsize\textwidth]{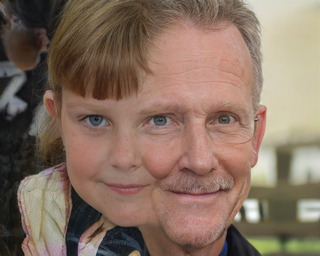} &
        \includegraphics[align=c,width=\wsize\textwidth]{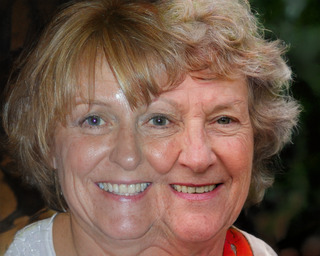} &
        \includegraphics[align=c,width=\ssize\textwidth]{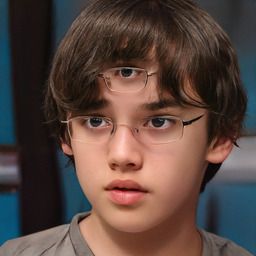} &
        \includegraphics[align=c,width=\ssize\textwidth]{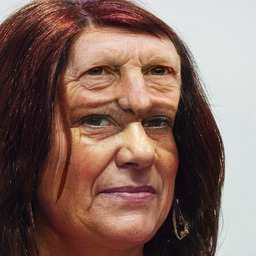} \\
        
        \rotatebox[]{90}{\small $\pm$ Eyes open}\; &
        \includegraphics[align=c,width=\wsize\textwidth]{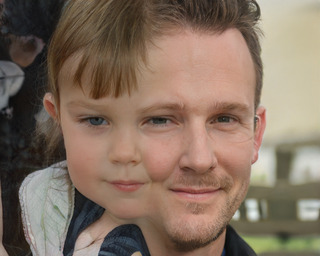} &
        \includegraphics[align=c,width=\wsize\textwidth]{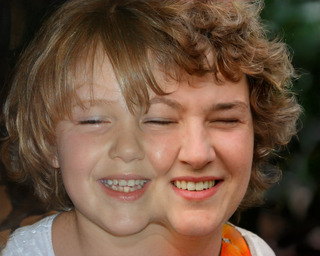} &
        \includegraphics[align=c,width=\ssize\textwidth]{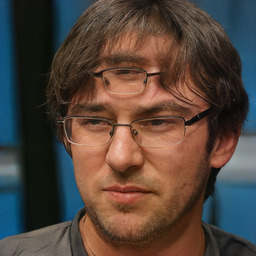} &
        \includegraphics[align=c,width=\ssize\textwidth]{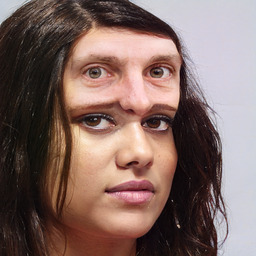} \\
        
        \rotatebox[]{90}{\small $\pm$ Glasses}\; &
        \includegraphics[align=c,width=\wsize\textwidth]{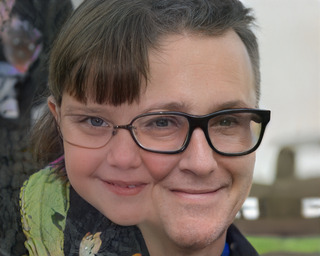} &
        \includegraphics[align=c,width=\wsize\textwidth]{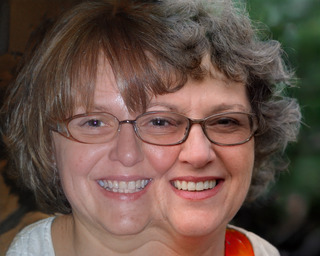} &
        \includegraphics[align=c,width=\ssize\textwidth]{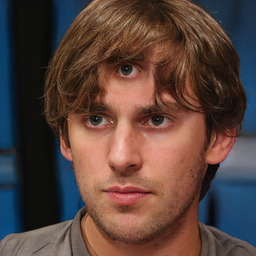} &
        \includegraphics[align=c,width=\ssize\textwidth]{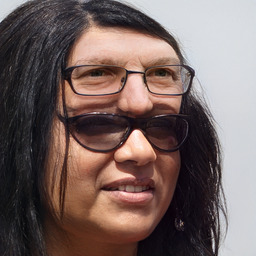} \\
        
        \vspace{0.2em} \\
        
        & a) & b) & c) & d)
        
    \end{tabular}
    \caption{
        For every column, we sample two latent variables from $\NWp$, transform into $\FNSWp{5}$
        and mix them using our method. Before mixing, we
        translate and crop the latent representation to align left and right eye in a) and b),
        take area around eyes, average them, paste them on the forehead in c),
        crop the area containing eyes and part of a nose and paste them on the forehead in d).
        We observe that semantic information is preserved,
        for example by applying our attribute model we can also close/open additional eyes,
        shadow from the upper nose in d)
        is cast on the lower nose,
        glasses generalize to three eyes in a) and [).
        Note that off-target effects like changes in colors and age when adding glasses come from attribute direction vectors, not from our methods.
    }
    \label{fig:mix-FSWp}
\end{figure}

\chapter{Latent Space Projection}
\label{chap:proj}
Informally, latent space projection is a problem of predicting a latent variable given a model and an image.
The generated image from the latent representation should be visually similar to the input image.
However, in practice the other goal is to preserve semantics of the representation, \eg ability to use image manipulation methods for the representation.

Formally, a projection of an image $I \in \mathbb{R}^{C \times H \times W}$,
given model parameters $\theta$, generator architecture $\generator{}$ and a loss function $L$,
where $H, W \in \mathbb{N}_+$ are spatial dimensions and $C$ indicates the number of channels (\eg $C = 3$ for RGB images),
into a latent space $\mathbb{S}$, can be defined as:
\begin{equation}
    \text{find $l \in \mathbb{S}$, that minimizes $L\funarg{\generator{\theta}\funarg{l}, I}$}.
\end{equation}

This problem may require regularizations or other techniques for preserving latent representation semantics, especially for high dimensional latent spaces.

\section{Approach}

We take a hybrid approach for transforming an image into $\FNWp{}$ latent space.
First, we train an image encoder into $\FWp{i}$
and then use its prediction as an initial latent representation for the optimization algorithm.

Algorithm~\ref{alg:projection-model} shows a method for training an encoder into the $\FWp{}$ latent space.
Figure~\ref{fig:projection-model} shows a diagram of the architecture.
The method trains a CNN backbone together with three heads for predicting feature maps, RGB maps and style vectors.
We sample latent representations from the $\NWp$ space, convert them into $\FNWp{i}$ and train a model to predict
feature maps, RGB maps and style vectors for a given image.
We call such model a $\FWp{i}$ encoder model as it encodes the image directly into that space.

\begin{algorithm}
    \caption{
        $\FWp{i}$ Encoder Model Training. \\
        We sample a mini-batch from $\NWp$ and convert it into the $\FWp{i}$ space, which is considered a prediction target.
        We use a CNN backbone from which we extract an appropriate intermediate activation maps from the generated image and apply prediction heads on them. For predicting the style we use the activation vector after global pooling.
    }
    \label{alg:projection-model}
    \begin{algorithmic}
        \Require{ \\
            \;\; $s$ -- CNN backbone input image scale \\
            \;\; $\theta$ -- generator model weights \\
            \;\; $\lambda$ -- \algparbox{0.9}{coefficient for feature, RGB map and style vector loss} \\
            \;\; $\lambda_{\mathrm{lpips}}$ -- coefficient for LPIPS loss \\
            \;\; $\lambda_{\mathrm{mse}}$ -- coefficient for image MSE loss
        }
        \vspace{0.5em}
        
        \State $M \gets \text{\algparbox{0.95}{randomly initialized or pretrained CNN backbone}}$
        \State $F \gets \text{\algparbox{0.95}{randomly initialized 2D point-wise convolution layer}}$
        \State $R \gets \text{\algparbox{0.95}{randomly initialized 2D point-wise convolution layer}}$
        \State $S \gets \text{\algparbox{0.95}{randomly initialized linear layer}}$
        \While{\text{not converged}}
        \State $y \gets \text{sample a mini-batch from $\NWp$}$
        \State $y_f, y_r, y_s \gets \text{\algparbox{0.75}{take feature maps, RGB maps and style vectors from $y$ transformed into $\FNWp{i}$ space}}$
        \State $y_I \gets \text{generate images from $y$ using weights $\theta$}$
        \State $I \gets \text{downscale $y_I$ with scale $s$}$
        \State $A \gets \text{\algparbox{0.85}{get activation tensors between blocks and final activation vector in $M$ from images $I$}}$
        \State $x_f \gets F\text{\algparbox{0.8}{(get last activation tensor from $A$ matching \\ \hphantom{(}spatial size of an expected feature map)}}$
        \State $x_r \gets R\text{\algparbox{0.8}{(get last activation tensor from $A$ matching \\ \hphantom{(}spatial size of an expected RGB map)}}$
        \State $x_s \gets S(\text{get activation vector from $A$})$
        \State $x \gets \text{\algparbox{0.9}{form $\FNWp{i}$ latent representation using \\ $x_f$, $x_r$, $x_s$ and noise maps from $y$}}$
        \State $x_I \gets \text{\algparbox{0.85}{generate images from $x$ using weights $\theta$}}$
        \State $L \gets $ \algparbox{0.8}{$\lambda \big(\mathrm{MSE}\funarg{x_f, y_f} + \mathrm{MSE}\funarg{x_r, y_r} + \mathrm{MSE}\funarg{x_s, y_s}\big) + \lambda_{\mathrm{mse}} \mathrm{MSE}\funarg{x_I, y_I} + \lambda_{\mathrm{lpips}} \mathrm{LPIPS} $ \algparbox{1}{(downscale $x_I$ with scale $s$, $I$)}}
        \State $\text{optimize weights of $M$, $F$, $R$, and $S$ using loss $L$}$
        \EndWhile
        \State \Return $(M, F, R, S)$
    \end{algorithmic}
\end{algorithm}

\begin{figure}
    \centering \includegraphics[width=0.8\textwidth]{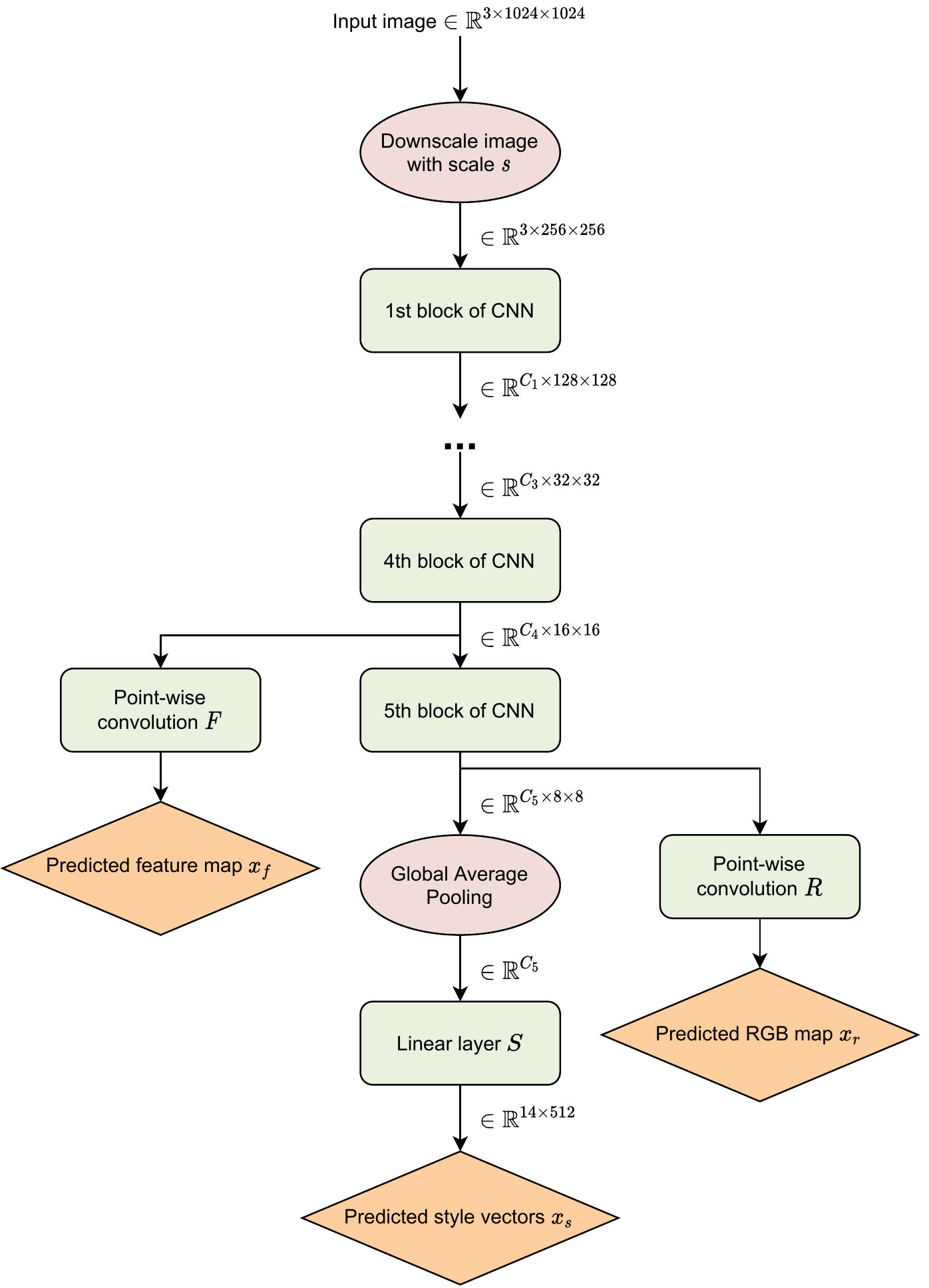}
    \caption{
        $\FWp{5}$ Encoder Model Architecture.
        5th block of StyleGAN2 takes a feature map of spatial size $16 \times 16$, and an RGB map $8 \times 8$.
        Note that the number of output style vectors is 14, because four first vectors for four first StyleGAN2 blocks are not in the $\FWp{5}$ space.
    }
    \label{fig:projection-model}
\end{figure}

Algorithm \ref{alg:latent-fitting} shows a method for projecting an image into $\FNWp{i}$ latent space.
We start from the prediction from a trained encoder model. We optimize the latent representation using gradient descent.
We use noise regularization from StyleGAN2 \cite{Karras2019stylegan2}.

\begin{algorithm}
    \caption{
        $\FNWp{i}$ Latent Representation Optimization. \\
        We start from the representation predicted by the pre-trained $\FWp{i}$ encoder model.
        For further optimization, we use backpropagation algorithm.
    }
    \label{alg:latent-fitting}
    \begin{algorithmic}
        \Require{ \\
            \;\; $I$ -- image \\
            \;\; $N$ -- number of iterations \\
            \;\; $M$ -- trained $\FWp{i}$ encoder model \\
            \;\; $\theta$ -- generator model weights \\
            \;\; $\lambda_{\mathrm{lpips}}$ -- coefficient for LPIPS loss \\
            \;\; $\lambda_{\mathrm{mse}}$ -- coefficient for MSE loss \\
            \;\; $\lambda_{\mathrm{noise}}$ -- \algparbox{0.7}{coefficient for noise regularization (from StyleGAN2)}
        }
        \vspace{0.5em}
        
        \State $l \gets M\funarg{I}$
        \State $l \gets \text{add random noise maps to $l$ to form $\FNWp{i}$}$
        \For{$N$ steps}
        \State $G \gets \text{generate an image from $l$ using weights $\theta$}$
        \State $L \gets$ \algparbox{0.5}{$\lambda_{\mathrm{lpips}} \mathrm{LPIPS}\funarg{G, I} + \lambda_{\mathrm{mse}} \mathrm{MSE}\funarg{G, I} + \lambda_{\mathrm{noise}} \mathrm{NoiseReg}\funarg{\text{get noise maps from $l$}}$}
        \State $\text{optimize latent representation $l$ using loss $L$}$
        \State $\text{normalize noise maps from $l$ following the procedure from StyleGAN2 \cite{Karras2019stylegan2}}$
        \EndFor
        \State \Return $l$
    \end{algorithmic}
\end{algorithm}

\section{Experiments}
The $\FWp{5}$ encoder model (Algorithm~\ref{alg:projection-model}) is trained starting from a pre-trained EfficientNet-B0 \cite{DBLP:journals/corr/abs-1905-11946} as EfficientNet family is considered one of the best convolutional backbone for various tasks like image classification or object detection. We also tried to use heavier EfficientNet-B3, however we didn't notice any improvement.
We use $\lambda = \lambda_{\mathrm{lpips}} = \lambda_{\mathrm{mse}} = 1$, Adam optimizer with learning rate of $2\text{e-}4$, $\beta_1 = 0.9$, $\beta_2 = 0.999$ and batch size of 12. Similarly to attribute model, we downscale images 4 times for LPIPS loss ($s = 0.25$).
For optimizing spatial latent representations (Algorithm~\ref{alg:latent-fitting}), we use $\lambda_{\mathrm{lpips}} = 1$, $\lambda_{\mathrm{mse}} = 0.25$, $\lambda_{\mathrm{noise}} = 4\text{e}5$,
Adam optimizer with learning rate of $0.1$, $\beta_1 = 0.9$, $\beta_2 = 0.999$. We optimize for $N = 250$ iterations, using linear learning rate warmup for 50 iterations.
For optimizing $\NWp$ representations we use the same hyperparameters, but optimize for $N = 1000$ iterations.
We show importance of shorter training for spatial representations and use of the encoder model output for initialization in Figure~\ref{fig:fitting-init-and-len}.

Figure \ref{fig:projection-simple-FWp} shows the quality of projected images into $\FNWp{}$ is much better compared to $\NWp$.
Further, Figure~\ref{fig:edit-FWp} shows that we can further
use our attribute models on projected images preserving the projection quality.

In Figure~\ref{fig:projection-equivariance-FWp} we show that our method is equivariant to translations.
Figure \ref{fig:projection-interp-FWp} shows spatial interpolation between projected images in the $\FNSWp{}$ space.

We further show that our projection method is able to project images that are out-of-sample in terms of the number and arrangement of object parts into $\FNWp{}$ space, and that such latent representations can be edited in a meaningful way.
In Figure~\ref{fig:oos1-FWp}, we project and edit a face with only one eye, and in Figure~\ref{fig:oos2-FWp} -- an image with four people.

\begin{figure}
\begin{minipage}[t][\textheight][t]{\textwidth}
    \vspace{\fill}
    \setlength{\tabcolsep}{0pt}
    \renewcommand{\arraystretch}{0}
    \centering
    \begin{tabular}{cccc}
        Original & Projection into $\NWp$ & \parbox[c]{0.25\textwidth}{\centering Projection into $\FNWp{5}$ trained for 250 iterations} &
        \parbox[c]{0.25\textwidth}{\centering Projection into $\FNWp{5}$ trained till convergence} \\
        \vspace{0.5em} \\
          
        \vspace{0.5em} \\
        & LPIPS: 0.2458 & LPIPS: 0.2076 & LPIPS: 0.1614  \\
        \vspace{0.2em} \\
        \includegraphics[align=c,width=0.25\textwidth]{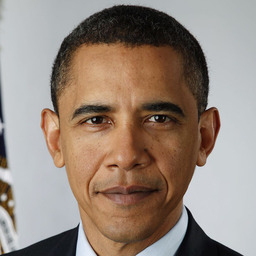} &
        \includegraphics[align=c,width=0.25\textwidth]{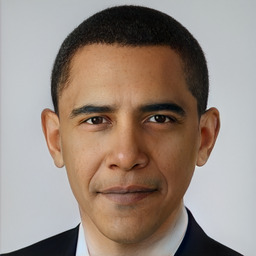} &
        \includegraphics[align=c,width=0.25\textwidth]{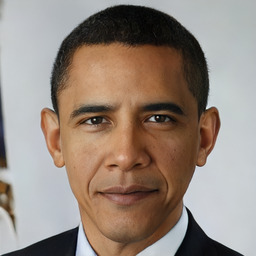} &
        \includegraphics[align=c,width=0.25\textwidth]{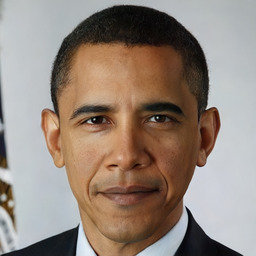} \\
        
        \vspace{0.5em} \\
        & LPIPS: 0.3330 & LPIPS: 0.2985 & LPIPS: 0.2430 \\
        \vspace{0.2em} \\
        \includegraphics[align=c,width=0.25\textwidth]{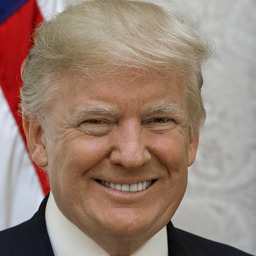} &
        \includegraphics[align=c,width=0.25\textwidth]{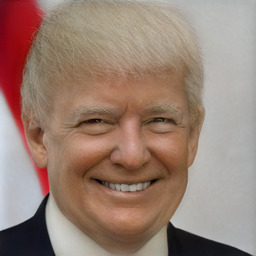} &
        \includegraphics[align=c,width=0.25\textwidth]{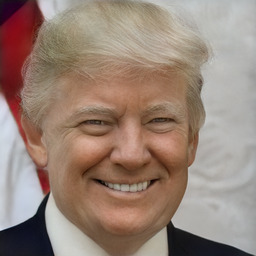} &
        \includegraphics[align=c,width=0.25\textwidth]{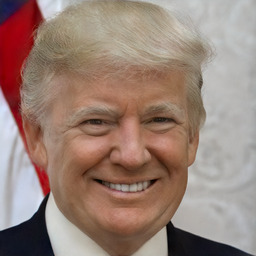} \\
        
        \vspace{0.5em} \\
        & LPIPS: 0.3137 & LPIPS: 0.2786 & LPIPS: 0.2211 \\
        \vspace{0.2em} \\
        \includegraphics[align=c,width=0.25\textwidth]{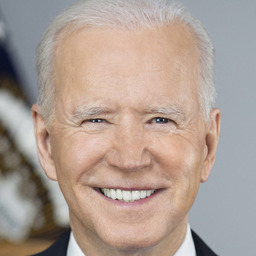} &
        \includegraphics[align=c,width=0.25\textwidth]{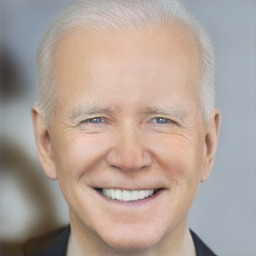} &
        \includegraphics[align=c,width=0.25\textwidth]{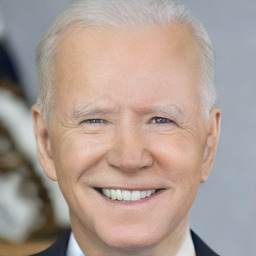} &
        \includegraphics[align=c,width=0.25\textwidth]{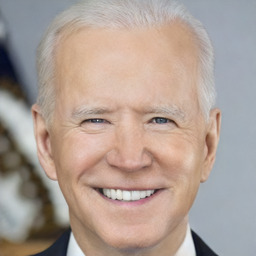}
    \end{tabular}
    \caption{
        We project images\protect\footnote{
            Images were taken from
            \url{https://commons.wikimedia.org/wiki/File:Official_portrait_of_Barack_Obama.jpg},
            \url{https://www.whitehouse.gov/wp-content/uploads/2021/01/45_donald_trump.jpg},
            \url{https://www.whitehouse.gov/wp-content/uploads/2021/04/P20210303AS-1901.jpg}.
        }
        to $\NWp$ and $\FNWp{5}$.
        Images projected into the $\FNWp{5}$ space are able to capture much more details including the background,
        look more realistic and have 10\%--15\% improvement in terms of LPIPS score for projection trained for 250 iterations and 25--35\% for projections trained till convergence.
    }
    \label{fig:projection-simple-FWp}
    \vspace{\fill}
\end{minipage}
\end{figure}

\begin{figure}
    \setlength{\tabcolsep}{0pt}
    \renewcommand{\arraystretch}{0}
    \centering
    \begin{tabular}{lcccp{0.025\textwidth}ccc}
        &
        Original &
        \parbox[t]{0.15\textwidth}{\centering Projection \\ into $\NWp$} &
        \parbox[t]{0.15\textwidth}{\centering Projection \\ into $\FNWp{5}$} & &
        Original &
        \parbox[t]{0.15\textwidth}{\centering Projection \\ into $\NWp$} &
        \parbox[t]{0.15\textwidth}{\centering Projection \\ into $\FNWp{5}$} \\
        
        \rotatebox[]{90}{no offset}\; &
        \includegraphics[align=c,width=0.15\textwidth]{samples/basic_fit__obama_gt} &
        \includegraphics[align=c,width=0.15\textwidth]{samples/basic_fit__obama_NWp.jpg} &
        \includegraphics[align=c,width=0.15\textwidth]{samples/basic_fit__obama_F5NWp.jpg} & &
        \includegraphics[align=c,width=0.15\textwidth]{samples/basic_fit__trump_gt} &
        \includegraphics[align=c,width=0.15\textwidth]{samples/basic_fit__trump_NWp.jpg} &
        \includegraphics[align=c,width=0.15\textwidth]{samples/basic_fit__trump_F5NWp.jpg} \\
        
        \rotatebox[]{90}{50px offset}\; &
        \includegraphics[align=c,width=0.15\textwidth]{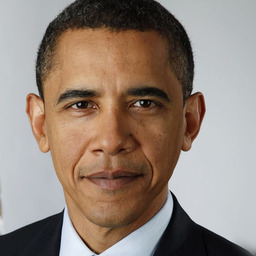} &
        \includegraphics[align=c,width=0.15\textwidth]{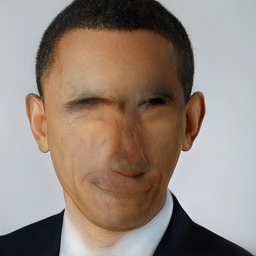} &
        \includegraphics[align=c,width=0.15\textwidth]{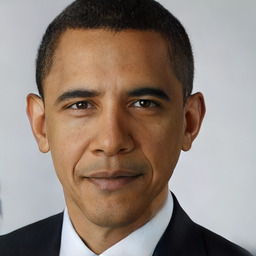} & &
        \includegraphics[align=c,width=0.15\textwidth]{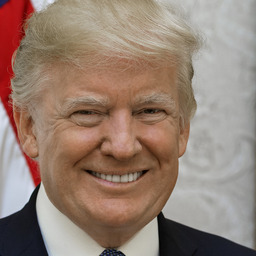} &
        \includegraphics[align=c,width=0.15\textwidth]{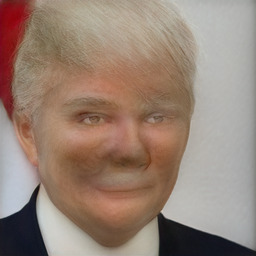} &
        \includegraphics[align=c,width=0.15\textwidth]{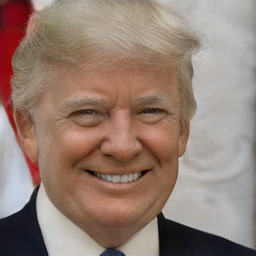} \\
        
        \rotatebox[]{90}{100px offset}\; &
        \includegraphics[align=c,width=0.15\textwidth]{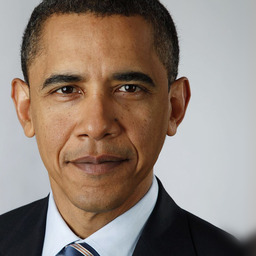} &
        \includegraphics[align=c,width=0.15\textwidth]{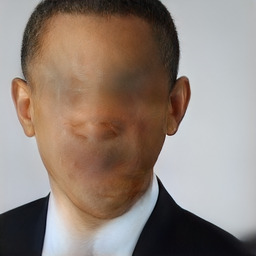} &
        \includegraphics[align=c,width=0.15\textwidth]{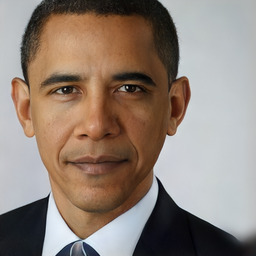} & &
        \includegraphics[align=c,width=0.15\textwidth]{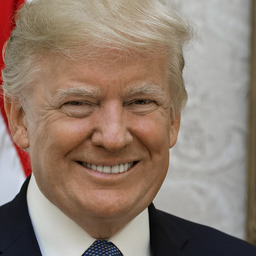} &
        \includegraphics[align=c,width=0.15\textwidth]{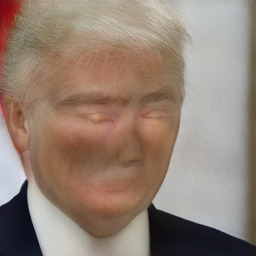} &
        \includegraphics[align=c,width=0.15\textwidth]{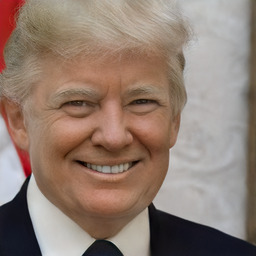} \\
        
    \end{tabular}
    \caption{
        We project spatially offset images by 50px and 100px into $\NWp$ and $\FNWp{5}$. $\FNWp{}$ and our projection algorithm show equivariance to translations.
        Non-spatial latent spaces are very sensitive to translations and even a slight translation can produce unrealistic images.
    }
    \label{fig:projection-equivariance-FWp}
\end{figure}

\newcommand{\customrow}[9]{
    \rotatebox[]{90}{\parbox[c]{0.135\textwidth}{\centering #1}}\; &
    \includegraphics[align=c,width=0.135\textwidth]{samples/basic_fit__#2_gt} &
    \includegraphics[align=c,width=0.135\textwidth]{samples/edit__attr__#2_#3__1.jpg} &
    \includegraphics[align=c,width=0.135\textwidth]{samples/edit__attr__#2_#3__#9age=#4.jpg} &
    \includegraphics[align=c,width=0.135\textwidth]{samples/edit__attr__#2_#3__#9gender=#5.jpg} &
    \includegraphics[align=c,width=0.135\textwidth]{samples/edit__attr__#2_#3__#9glasses=#6.jpg} &
    \includegraphics[align=c,width=0.135\textwidth]{samples/edit__attr__#2_#3__#9smile=#7.jpg} &
    \includegraphics[align=c,width=0.135\textwidth]{samples/edit__attr__#2_#3__#9eyes_open=#8.jpg} \\
}

\begin{figure}
    \setlength{\tabcolsep}{0pt}
    \renewcommand{\arraystretch}{0}
    \centering

    \begin{tabular}{lccccccc}
        & Original & Projection & $\pm$ age & $\pm$ gender & $\pm$ glasses & $\pm$ smile & $\pm$ eyes open \\
        
        \customrow{$\NWp$}{obama}{NWp}{700}{-400}{-700}{-600}{600}{offset=50_slope10.}
        \customrow{$\FNWp{5}$}{obama}{F5NWp}{700}{-400}{-700}{-600}{600}{offset=50_slope10.}
        \customrow{$\NWp$}{trump}{NWp}{700}{-400}{-1000}{400}{-600}{}
        \customrow{$\FNWp{5}$}{trump}{F5NWp}{700}{-400}{-1000}{400}{-600}{}
        \customrow{$\NWp$}{biden}{NWp}{800}{-600}{-600}{400}{-800}{}
        \customrow{$\FNWp{5}$}{biden}{F5NWp}{800}{-600}{-600}{400}{-800}{}
    \end{tabular}
    \caption{
        We project images to $\NWp$ and $\FNWp{5}$.
        Images projected into the $\FNWp{5}$ space are able to capture much more details including the background
        and look more realistic.
        We apply our attribute model on $\FNWp{5}$ projections and compare results to projections into $\NWp$ translated by an attribute direction vector.
        For one image we use the same strength of attribute edition for both projections.
        Changes in $\FNWp{5}$ are comparable to these in the $\NWp$ space, while preserving details captured in the $\FNWp{5}$ latent space.
    }
    \label{fig:edit-FWp}
\end{figure}

\begin{figure}
    \setlength{\tabcolsep}{0pt}
    \centering
    
    \def\ssize{0.125}
    \begin{tabular}{cccccccc}
            Source 1 & \multicolumn{6}{c}{$\xrightarrow{\makebox[0.7\linewidth]{}}$} & Source 2 \\
            \includegraphics[width=\ssize\textwidth]{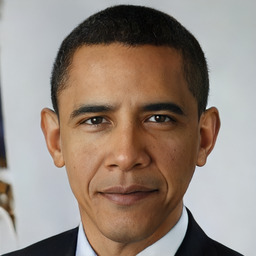} &
            \includegraphics[width=\ssize\textwidth]{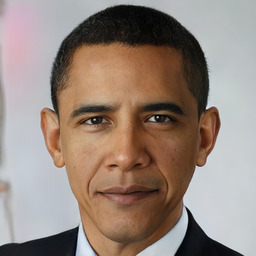} &
            \includegraphics[width=\ssize\textwidth]{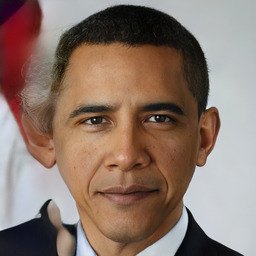} &
            \includegraphics[width=\ssize\textwidth]{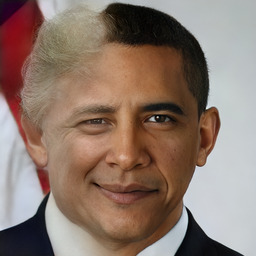} &
            \includegraphics[width=\ssize\textwidth]{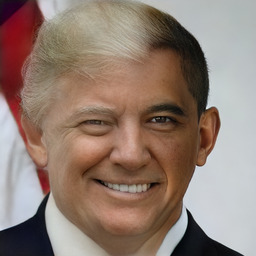} &
            \includegraphics[width=\ssize\textwidth]{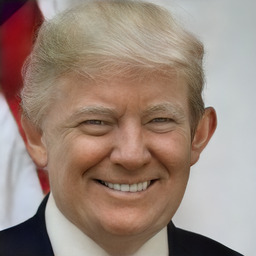} &
            \includegraphics[width=\ssize\textwidth]{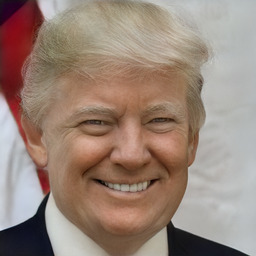} &
            \includegraphics[width=\ssize\textwidth]{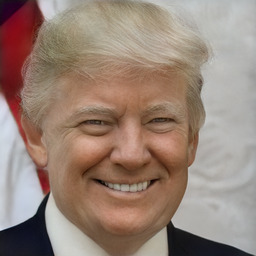} \\
            
            \includegraphics[width=\ssize\textwidth]{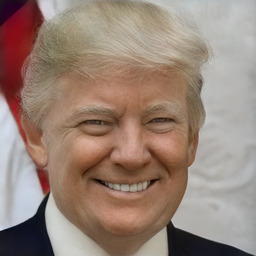} &
            \includegraphics[width=\ssize\textwidth]{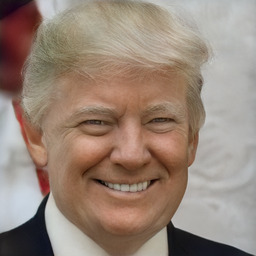} &
            \includegraphics[width=\ssize\textwidth]{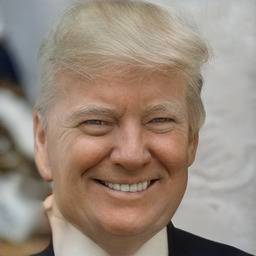} &
            \includegraphics[width=\ssize\textwidth]{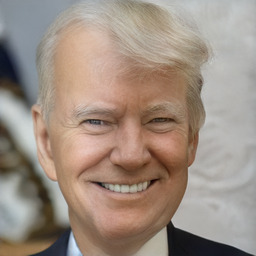} &
            \includegraphics[width=\ssize\textwidth]{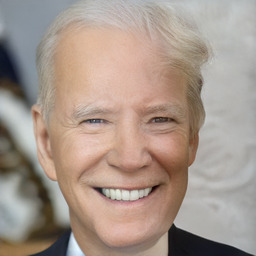} &
            \includegraphics[width=\ssize\textwidth]{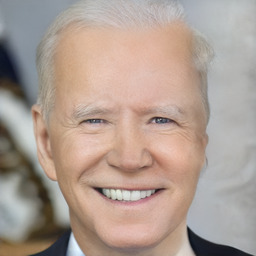} &
            \includegraphics[width=\ssize\textwidth]{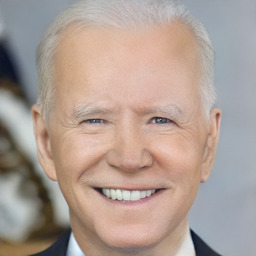} &
            \includegraphics[width=\ssize\textwidth]{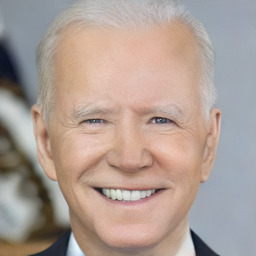}
    \end{tabular}
    \caption{
        Spatial interpolation between projections in $\FNSWp{5}$ space.
    }
    \label{fig:projection-interp-FWp}
\end{figure}

\begin{figure}
    \setlength{\tabcolsep}{0pt}
    \centering
    
    \begin{tabular}{ccccc}
        Original & \parbox[c]{0.2\textwidth}{\centering \small $\FWp{5}$ encoder model projection} & \parbox[c]{0.2\textwidth}{\centering \small $\FNWp{5}$ projection with optimization} & + Age & - Age \\
        \includegraphics[align=c,width=0.2\textwidth]{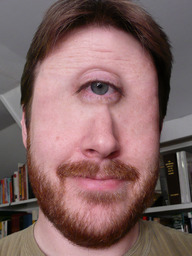} &
        \includegraphics[align=c,width=0.2\textwidth]{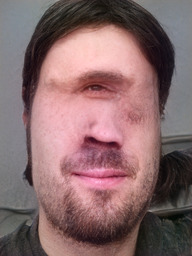} &
        \includegraphics[align=c,width=0.2\textwidth]{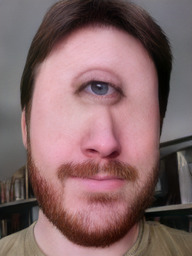} &
        \includegraphics[align=c,width=0.2\textwidth]{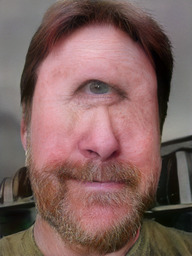} &
        \includegraphics[align=c,width=0.2\textwidth]{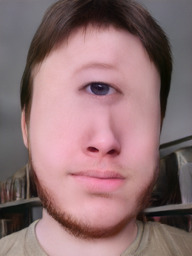} \\
        
        Gender & + Smile & \small - Horizontal angle & \small + Horizontal angle & Close eye \\
        \includegraphics[align=c,width=0.2\textwidth]{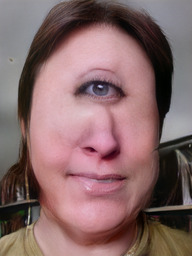} &
        \includegraphics[align=c,width=0.2\textwidth]{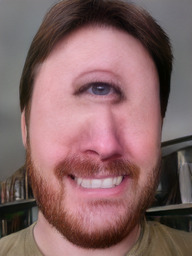} &
        \includegraphics[align=c,width=0.2\textwidth]{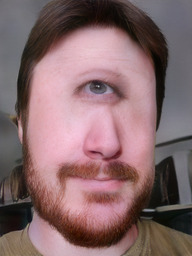} &
        \includegraphics[align=c,width=0.2\textwidth]{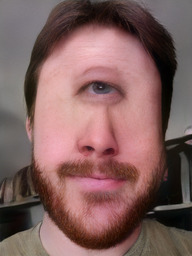} &
        \includegraphics[align=c,width=0.2\textwidth]{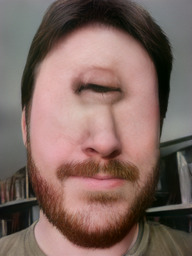} \\
    
    \end{tabular}
    \caption{
        The original image was taken from Flickr \cite{flickr:one-eye}.
        Our projection method into $\FNWp{5}$ latent space is able to semantically meaningfully encode an unnatural face with only one eye.
        Note that the input and output image is not square, which can be dealt with using our methods, as nothing constraints
        feature and RGB maps to be square.
    }
    \label{fig:oos1-FWp}
\end{figure}

\begin{figure}
\begin{minipage}[t][\textheight][t]{\textwidth}
    \setcounter{mpfootnote}{1} 
    \vspace{\fill}
    \setlength{\tabcolsep}{0pt}
    \centering
    
    \begin{tabular}{cc}
        Original & $\FNWp{5}$ Projection \\
        \includegraphics[align=c,width=0.5\textwidth]{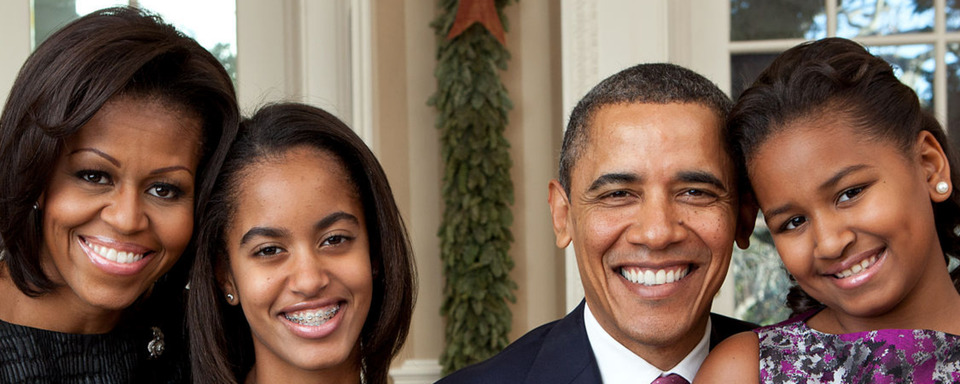} &
        \includegraphics[align=c,width=0.5\textwidth]{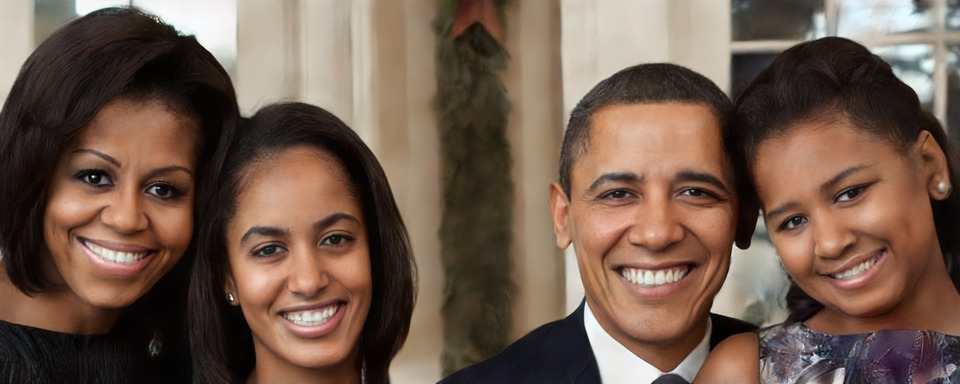} \\
        
        + Age & - Age  \\
        \includegraphics[align=c,width=0.5\textwidth]{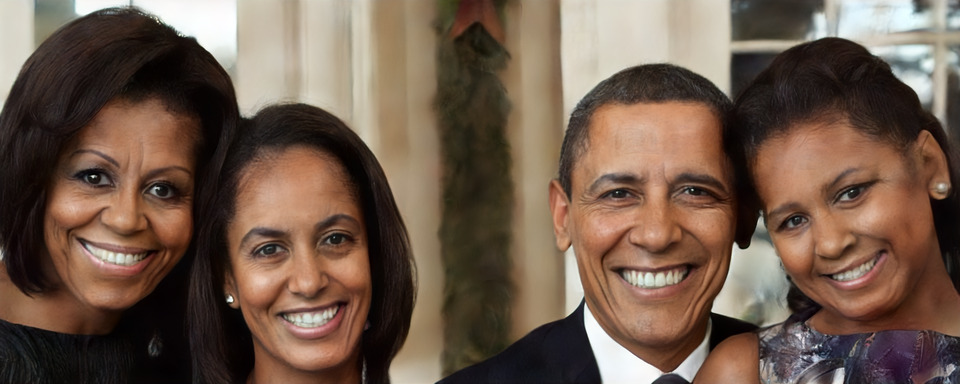} &
        \includegraphics[align=c,width=0.5\textwidth]{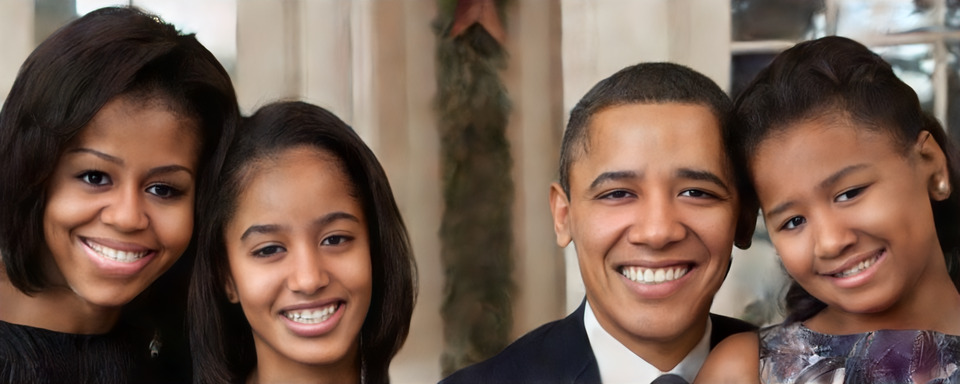} \\
        
        Eyes closed & - Smile  \\
        \includegraphics[align=c,width=0.5\textwidth]{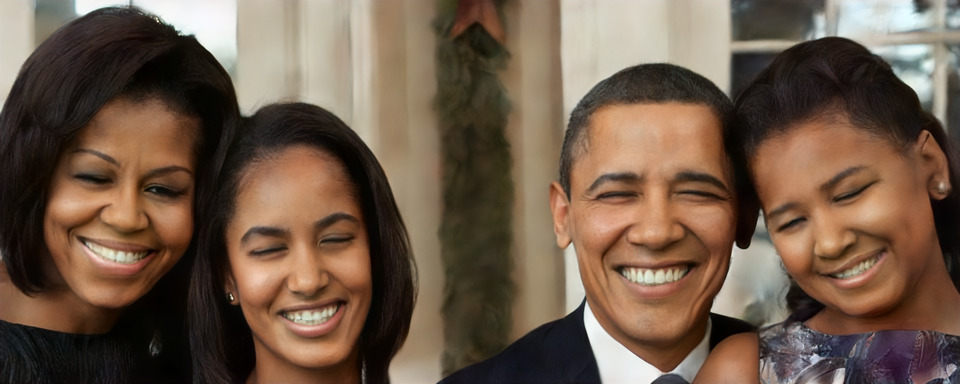} &
        \includegraphics[align=c,width=0.5\textwidth]{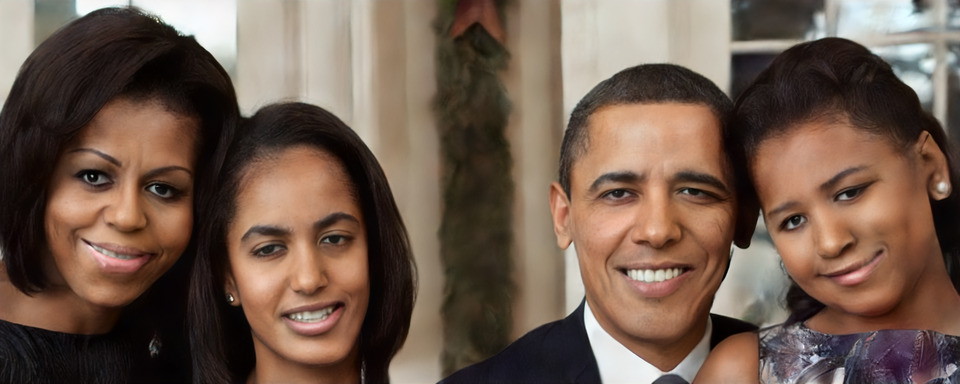} \\
        
        + Glasses \\
        \includegraphics[align=c,width=0.5\textwidth]{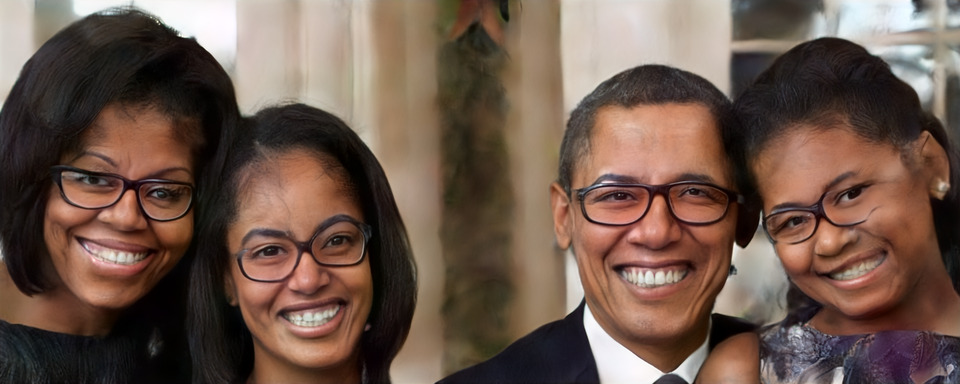} \\
    
    \end{tabular}
    \caption{
        We use our methods to project an image\protect\footnote{
            The image was taken from \url{https://en.wikipedia.org/wiki/First_family_of_the_United_States\#/media/File:Barack_Obama_family_portrait_2011.jpg}
        }
        with multiple people into one $\FNWp{5}$ latent representation.
        Note that the input and output images are high resolution, in a way that one face roughly corresponds
        to the size that the generator normally generates.
        The projection is able to capture semantic information about all people on the image
        and we can apply our attribute model to edit attributes of all people at the same time.
    }
    \label{fig:oos2-FWp}
    \vspace{\fill}
\end{minipage}
\end{figure}

\begin{figure}
    \setlength{\tabcolsep}{0pt}
    \centering
    
    \def\ssize{0.175}
    
    \begin{tabular}{lcccc}
    
        & \small Original & \scriptsize - Horizontal angle & \scriptsize + Horizontal angle & \small $\pm$ Glasses \\
        
        \rotatebox[]{90}{\centering \scriptsize Edition in $\NWp$}\; &
        \includegraphics[align=c,width=\ssize\textwidth]{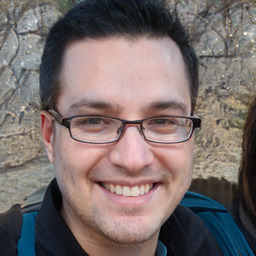} &
        \includegraphics[align=c,width=\ssize\textwidth]{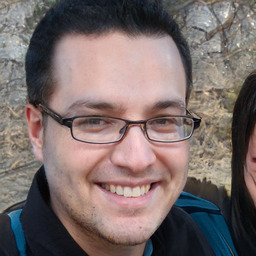} &
        \includegraphics[align=c,width=\ssize\textwidth]{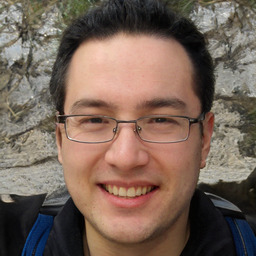} &
        \includegraphics[align=c,width=\ssize\textwidth]{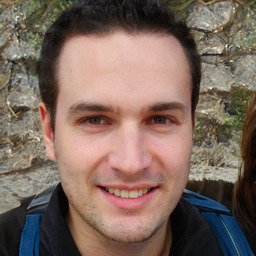} \\
        
        \rotatebox[]{90}{\parbox[c]{\ssize\textwidth}{\centering \scriptsize Edition in $\FNWp{5}$ on style vectors only}}\; & &
        \includegraphics[align=c,width=\ssize\textwidth]{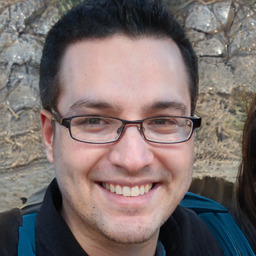} &
        \includegraphics[align=c,width=\ssize\textwidth]{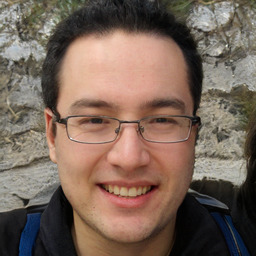} &
        \includegraphics[align=c,width=\ssize\textwidth]{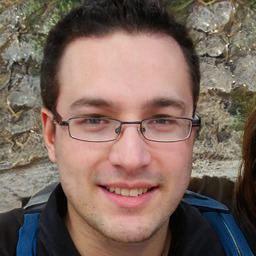} \\
        
        \rotatebox[]{90}{\parbox[c]{\ssize\textwidth}{\centering \scriptsize Edition in $\FNWp{5}$ using attribute model}}\; & &
        \includegraphics[align=c,width=\ssize\textwidth]{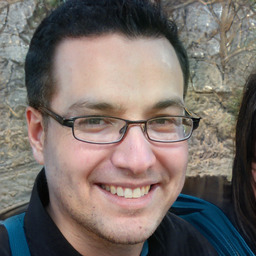} &
        \includegraphics[align=c,width=\ssize\textwidth]{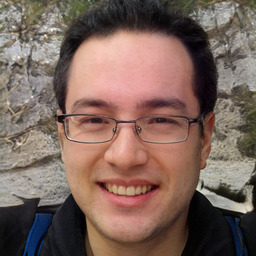} &
        \includegraphics[align=c,width=\ssize\textwidth]{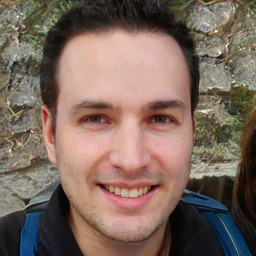} \\
        
        \vspace{0.3em}\\
       
        \rotatebox[]{90}{\parbox[c]{\ssize\textwidth}{\centering \scriptsize Edition in $\NWp$}}\; &
        \includegraphics[align=c,width=\ssize\textwidth]{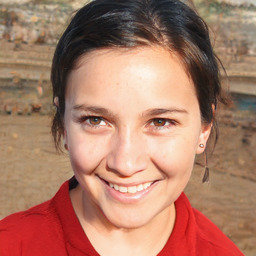} &
        \includegraphics[align=c,width=\ssize\textwidth]{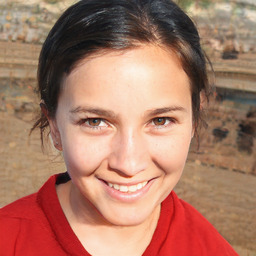} &
        \includegraphics[align=c,width=\ssize\textwidth]{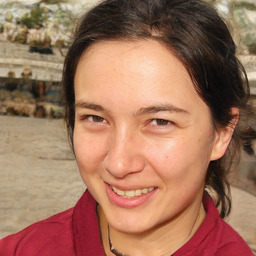} &
        \includegraphics[align=c,width=\ssize\textwidth]{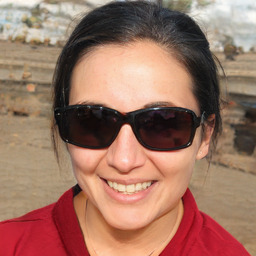} \\
        
        \rotatebox[]{90}{\parbox[c]{\ssize\textwidth}{\centering \scriptsize Edition in $\FNWp{5}$ on style vectors only}}\; & &
        \includegraphics[align=c,width=\ssize\textwidth]{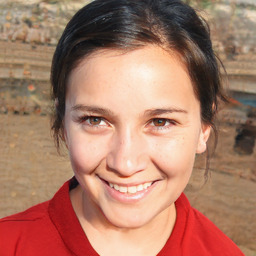} &
        \includegraphics[align=c,width=\ssize\textwidth]{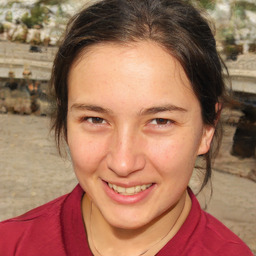} &
        \includegraphics[align=c,width=\ssize\textwidth]{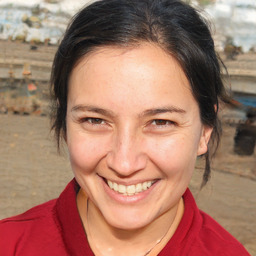} \\
        
        \rotatebox[]{90}{\parbox[c]{\ssize\textwidth}{\centering \scriptsize Edition in $\FNWp{5}$ using attribute model}}\; & &
        \includegraphics[align=c,width=\ssize\textwidth]{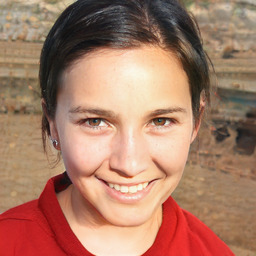} &
        \includegraphics[align=c,width=\ssize\textwidth]{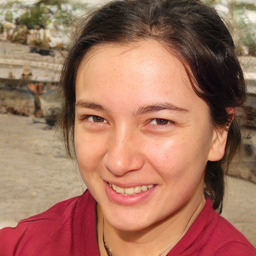} &
        \includegraphics[align=c,width=\ssize\textwidth]{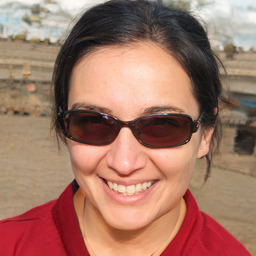} \\
    
    \end{tabular}
    \caption{
        To evaluate our attribution model we compare its performance to attribute edition in the $\NWp$ space.
        We generate the $\NWp$ latent representation and use attribute vectors (for changing horizontal angle and adding/removing glasses)
        to all style vectors in $\NWp$ to obtain a baseline (first row).
        We convert the latent representation into the $\FNWp{5}$ space for the second and the third row.
        In the second row we don't change feature and RGB maps, but only translate style vectors present in the $\FNWp{5}$ space.
        In the third row we use our attribute model to edit feature and RGB maps, together with translating style vectors.
        We see that the use of attribute models is necessary to match the expected results.
        For coarse styles, edition in $\FNWp{5}$ without the use of attribute models, results only in unwanted off-target effects coming from the attribute vector,
        instead of changes in a specified attribute.
    }
    \label{fig:attribute-model-effect}
\end{figure}

\begin{figure}
\begin{minipage}[t][\textheight][t]{\textwidth}
    \setcounter{mpfootnote}{2} 
    \vspace{\fill}
    \renewcommand{\arraystretch}{0}
    \centering
    
    \def\ssize{0.13}
    
    \begin{tabular}{lcccc}
    
        & &
        \parbox[c]{\ssize\textwidth}{\centering \scriptsize Original\protect\footnote{The image was taken from \url{https://en.wikipedia.org/wiki/Barack_Obama\#/media/File:Obama_family_portrait_in_the_Green_Room.jpg}}} &
        \parbox[c]{\ssize\textwidth}{\centering \scriptsize Encoder model output} \\
        
        \vspace{0.25em} \\
        & &
        \includegraphics[align=c,height=\ssize\textwidth]{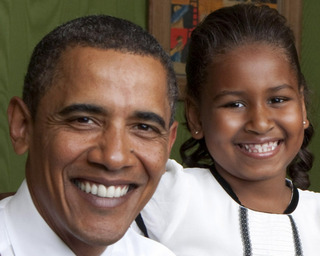} &
        \includegraphics[align=c,height=\ssize\textwidth]{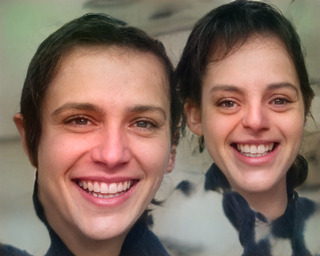} \\
        \vspace{0.75em} \\
        
        &
        \parbox[c]{\ssize\textwidth}{\centering \scriptsize Latent optimization starting from encoder output trained for 250 iterations} &
        \parbox[c]{\ssize\textwidth}{\centering \scriptsize Latent optimization starting from encoder output trained till convergence} &
        \parbox[c]{\ssize\textwidth}{\centering \scriptsize Latent optimization starting from random initialization trained for 250 iterations} &
        \parbox[c]{\ssize\textwidth}{\centering \scriptsize Latent optimization starting from random initialization trained till convergence} \\
        
        \vspace{0.4em} \\
        & \small LPIPS: 0.2428 & \small LPIPS: 0.1776 & \small LPIPS: 0.2972 & \small LPIPS: 0.1987 \\
        \vspace{0.2em} \\
        
        \rotatebox[]{90}{\parbox[c]{\ssize\textwidth}{\centering \small $\FNWp{5}$ projection }}\; &
        \includegraphics[align=c,height=\ssize\textwidth]{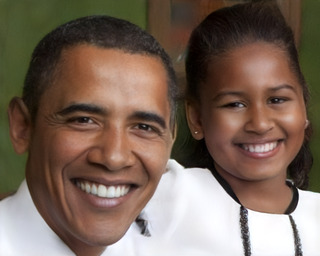} &
        \includegraphics[align=c,height=\ssize\textwidth]{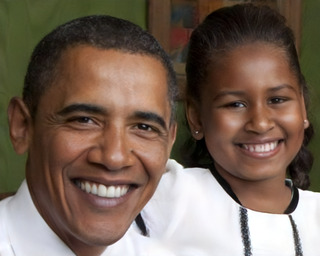} &
        \includegraphics[align=c,height=\ssize\textwidth]{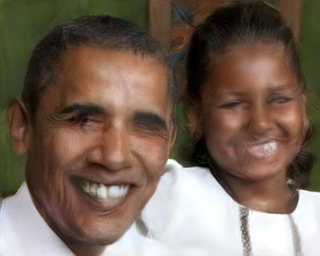} &
        \includegraphics[align=c,height=\ssize\textwidth]{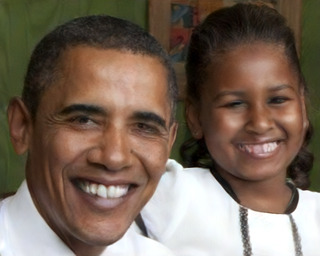} \\
        
        \rotatebox[]{90}{\parbox[c]{\ssize\textwidth}{\centering \small - Smile }}\; &
        \includegraphics[align=c,height=\ssize\textwidth]{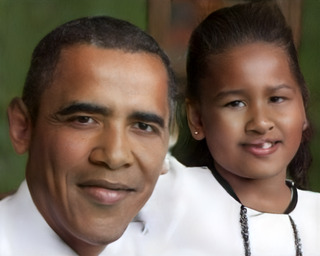} &
        \includegraphics[align=c,height=\ssize\textwidth]{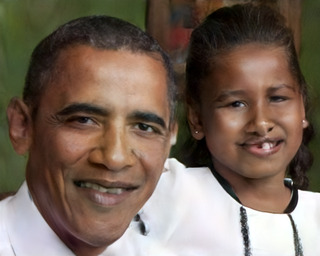} &
        \includegraphics[align=c,height=\ssize\textwidth]{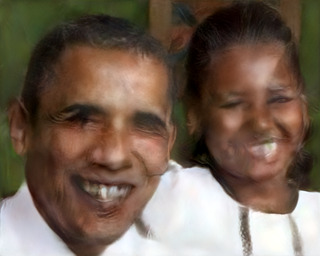} &
        \includegraphics[align=c,height=\ssize\textwidth]{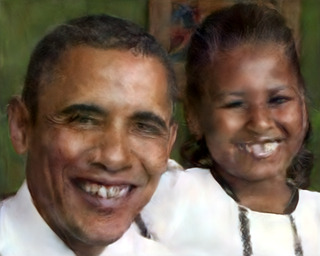} \\
        
        \rotatebox[]{90}{\parbox[c]{\ssize\textwidth}{\centering \small + Eyes open }}\; &
        \includegraphics[align=c,height=\ssize\textwidth]{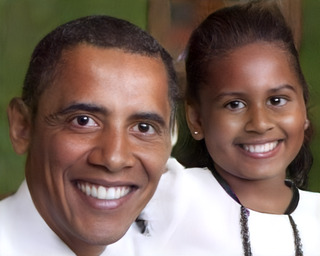} &
        \includegraphics[align=c,height=\ssize\textwidth]{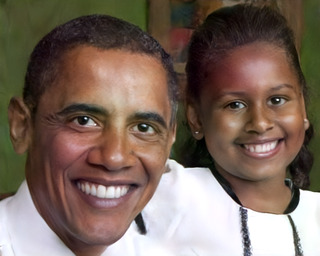} &
        \includegraphics[align=c,height=\ssize\textwidth]{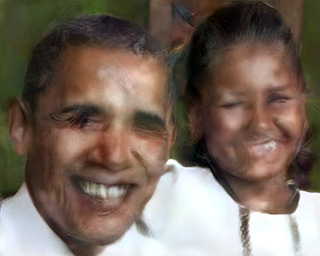} &
        \includegraphics[align=c,height=\ssize\textwidth]{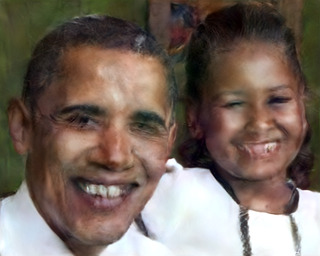} \\
        
        \rotatebox[]{90}{\parbox[c]{\ssize\textwidth}{\centering \small + Age }}\; &
        \includegraphics[align=c,height=\ssize\textwidth]{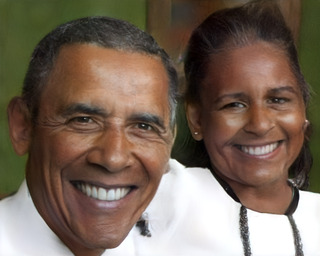} &
        \includegraphics[align=c,height=\ssize\textwidth]{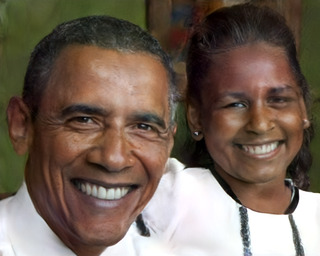} &
        \includegraphics[align=c,height=\ssize\textwidth]{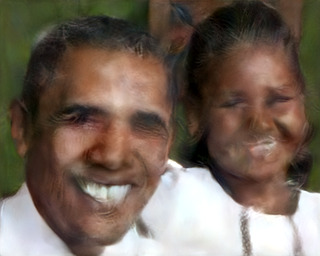} &
        \includegraphics[align=c,height=\ssize\textwidth]{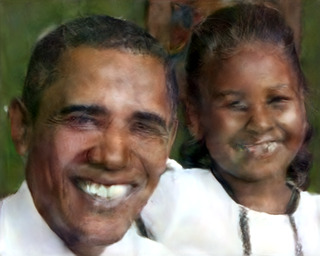} \\
        
        \rotatebox[]{90}{\parbox[c]{\ssize\textwidth}{\centering \small Gender }}\; &
        \includegraphics[align=c,height=\ssize\textwidth]{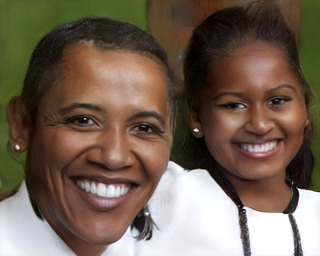} &
        \includegraphics[align=c,height=\ssize\textwidth]{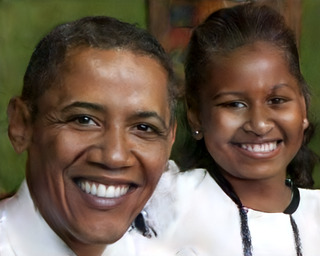} &
        \includegraphics[align=c,height=\ssize\textwidth]{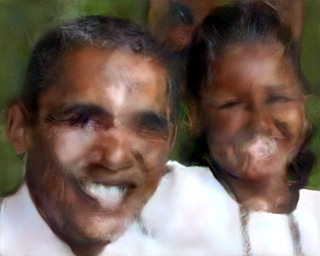} &
        \includegraphics[align=c,height=\ssize\textwidth]{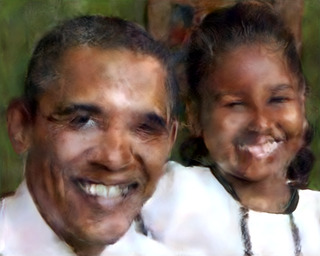} \\

    \end{tabular}
    \caption{
        We show the effect of early stopping of latent optimization after 250 iterations and importance of using
        the encoder model prediction for initialization for further latent representation optimization.
        We use $\FNWp{5}$ and for the encoder model $\FWp{5}$.
        The output from the encoder model is not similar to the original image, however
        it captures general semantics of the image, \eg face positions, face parts positions, background segmentation.
        When we use random latent representation for initialization for Algorithm~\ref{alg:latent-fitting} (3rd and 4th column),
        resulting latent representations, despite having very poor quality, also fail to capture any semantic information, making attribute editing impossible regardless how long we optimize.
        When we use the output from the encoder model for initialization (1st and 2nd column), optimization till convergence (2nd column) gives better results than optimization for 250 iterations (1st column) as much as 25\% in terms of LPIPS score, however
        attribute edition sometimes yields lower quality.
        In the rest of our experiments, we always optimize for 250 iterations because it's more suitable for image manipulation and to keep the inversion method consistent.
    }
    \label{fig:fitting-init-and-len}
    \vspace{\fill}
\end{minipage}
\end{figure}

\begin{figure}
    \setlength{\tabcolsep}{0pt}
    \renewcommand{\arraystretch}{0}
    \centering
    \begin{tabular}{cp{0.25em}ccc}
        \includegraphics[align=c,width=0.2\textwidth]{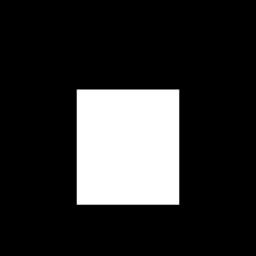} & &
        \includegraphics[align=c,width=0.2\textwidth]{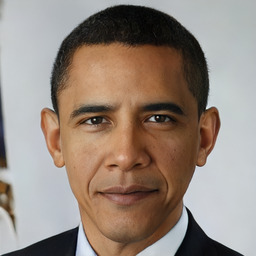} &
        \includegraphics[align=c,width=0.2\textwidth]{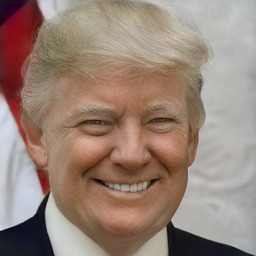} &
        \includegraphics[align=c,width=0.2\textwidth]{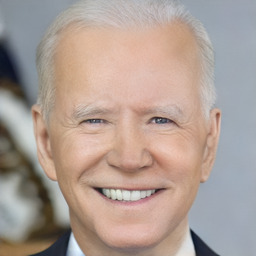} \\
        \vspace{0.25em} \\
        
        \includegraphics[align=c,width=0.2\textwidth]{samples/FSWp_mix__obama_trump__1.jpg} & &
        &
        \includegraphics[align=c,width=0.2\textwidth]{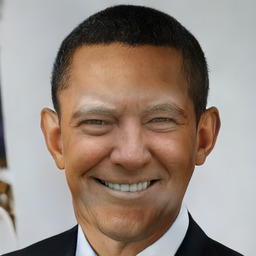} &
        \includegraphics[align=c,width=0.2\textwidth]{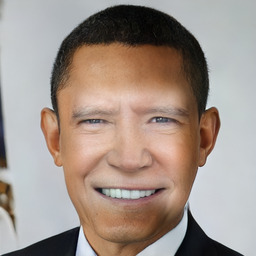} \\
        
        \includegraphics[align=c,width=0.2\textwidth]{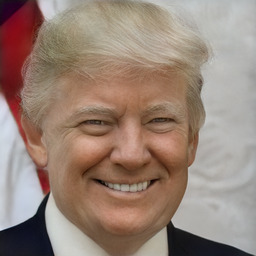} & &
        \includegraphics[align=c,width=0.2\textwidth]{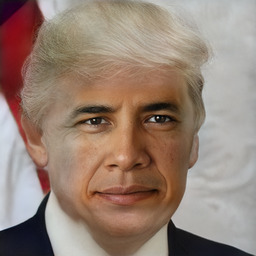} &
        &
        \includegraphics[align=c,width=0.2\textwidth]{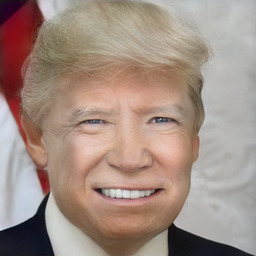} \\
        
        \includegraphics[align=c,width=0.2\textwidth]{samples/FSWp_mix__biden_trump__1.jpg} & &
        \includegraphics[align=c,width=0.2\textwidth]{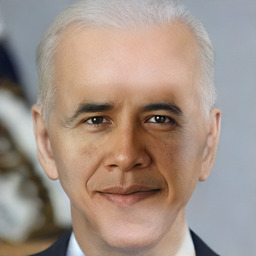} &
        \includegraphics[align=c,width=0.2\textwidth]{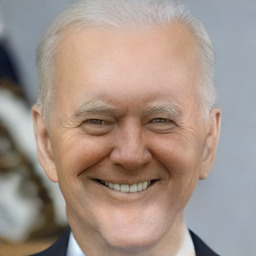}
        & \\
    \end{tabular}
    \caption{
        We take our projections into $\FNWp{5}$, convert them into $\FNSWp{5}$
        and mix them in this space, using a mask covering central part of the face.
    }
    \label{fig:face-edit-FWp}
\end{figure}

\begin{figure}
    \setlength{\tabcolsep}{0pt}
    \renewcommand{\arraystretch}{0}
    \centering
    \begin{tabular}{cp{0.25em}ccc}
        \includegraphics[align=c,width=0.2\textwidth]{samples/FSWp_mix__obama_trump__strength=0.mask.jpg} & &
        \includegraphics[align=c,width=0.2\textwidth]{samples/FSWp_mix__obama_trump__1.jpg} &
        \includegraphics[align=c,width=0.2\textwidth]{samples/FSWp_mix__trump_biden__1.jpg} &
        \includegraphics[align=c,width=0.2\textwidth]{samples/FSWp_mix__biden_trump__1.jpg} \\
        \vspace{0.25em} \\
        
        \includegraphics[align=c,width=0.2\textwidth]{samples/FSWp_mix__obama_trump__1.jpg} & &
        &
        \includegraphics[align=c,width=0.2\textwidth]{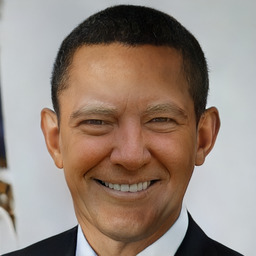} &
        \includegraphics[align=c,width=0.2\textwidth]{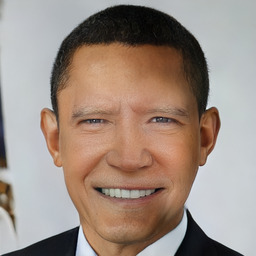} \\
        
        \includegraphics[align=c,width=0.2\textwidth]{samples/FSWp_mix__trump_obama__1.jpg} & &
        \includegraphics[align=c,width=0.2\textwidth]{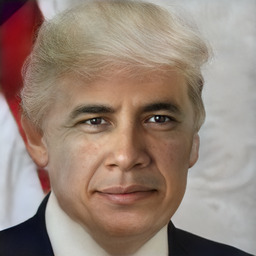} &
        &
        \includegraphics[align=c,width=0.2\textwidth]{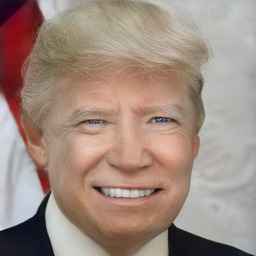} \\
        
        \includegraphics[align=c,width=0.2\textwidth]{samples/FSWp_mix__biden_trump__1.jpg} & &
        \includegraphics[align=c,width=0.2\textwidth]{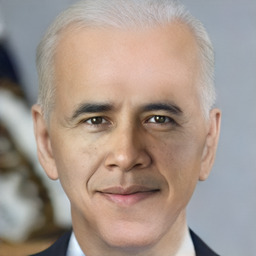} &
        \includegraphics[align=c,width=0.2\textwidth]{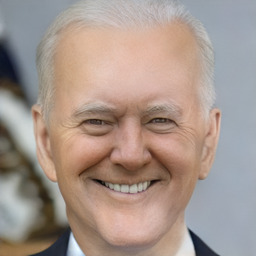}
        & \\
    \end{tabular}
    \caption{
        We follow the procedure from Figure~\ref{fig:face-edit-FWp}, but during mixing we don't change last two style vectors.
        By doing that we can preserve color scheme much better.
    }
    \label{fig:face-edit-FWp-2style}
\end{figure}

\chapter{Latent Representation Equalization}
\label{chap:equaliz}
During experimentation with attribute editing, we notice that projected representations are usually less suitable for editing purposes than generated ones.
Attribute editing on projected representations often yields less satisfying results.
In Figure~\ref{fig:heavy-attr-edit}, we show the effect of a strong attribute edition, where projected representations result in much worse quality and more visible off-target effects.
In this chapter, we aim to investigate such discrepancies.
We look at differences between distributions of latent representation obtained from $z \in \mathcal{Z}$, $z \sim \mathcal{N}\funarg{0, I}$
and latent representations of projected images, and propose methods to counteract them.

\begin{figure}[b!]
    \setlength{\tabcolsep}{0pt}
    \renewcommand{\arraystretch}{0}
    \centering

    \def\ssize{0.135}
    \begin{tabular}{cccccccc}
        & \multicolumn{3}{r}{$\xleftarrow{\makebox[0.38\linewidth]{}}$} & \tiny Horizontal angle & \multicolumn{3}{l}{$\xrightarrow{\makebox[0.38\linewidth]{}}$} \\
        \vspace{0.25em} \\

        \rotatebox[]{90}{\parbox[c]{\ssize\textwidth}{\centering \small Generated}}\; &
        \includegraphics[align=c,width=\ssize\textwidth]{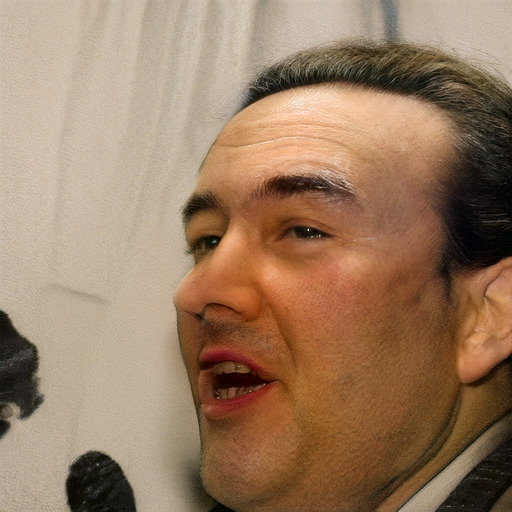} &
        \includegraphics[align=c,width=\ssize\textwidth]{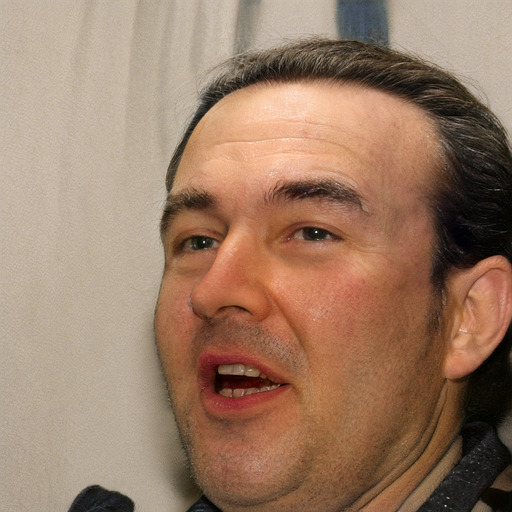} &
        \includegraphics[align=c,width=\ssize\textwidth]{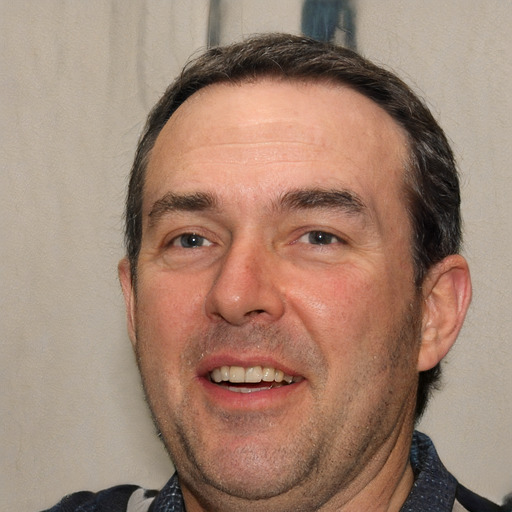} &
        \includegraphics[align=c,width=\ssize\textwidth]{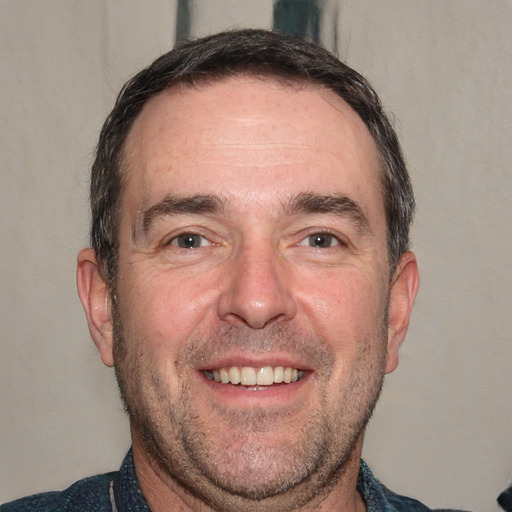} &
        \includegraphics[align=c,width=\ssize\textwidth]{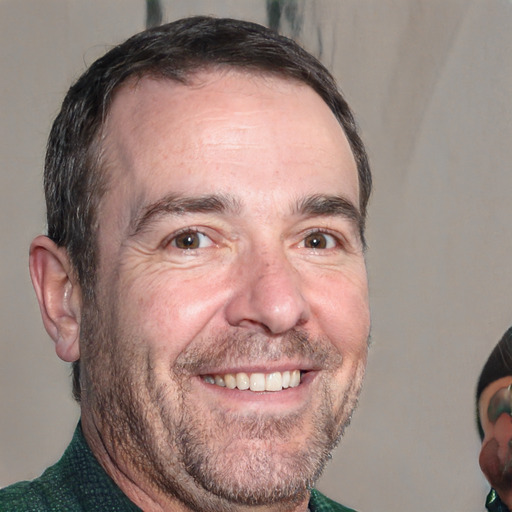} &
        \includegraphics[align=c,width=\ssize\textwidth]{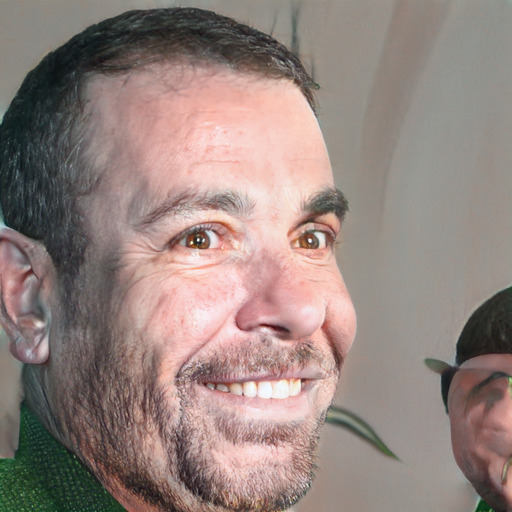} &
        \includegraphics[align=c,width=\ssize\textwidth]{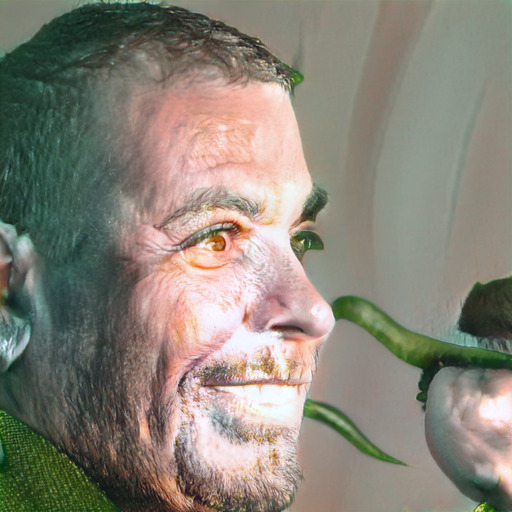} \\
        
        \rotatebox[]{90}{\parbox[c]{\ssize\textwidth}{\centering \small Projected}}\; &
        \includegraphics[align=c,width=\ssize\textwidth]{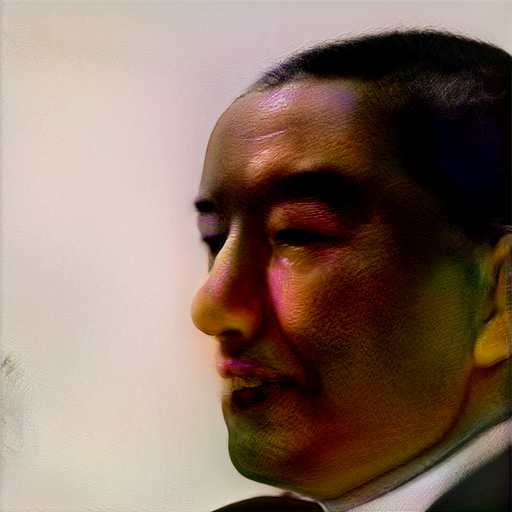} &
        \includegraphics[align=c,width=\ssize\textwidth]{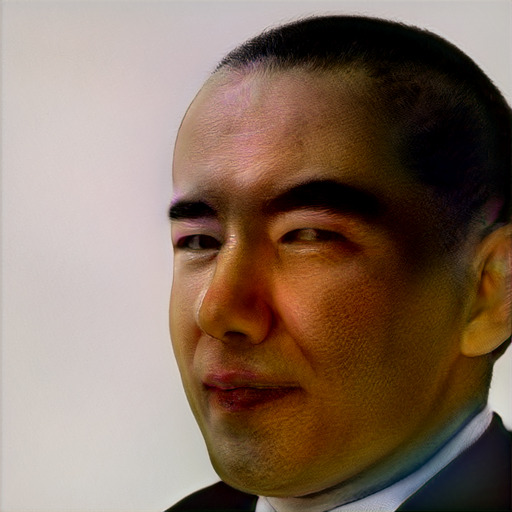} &
        \includegraphics[align=c,width=\ssize\textwidth]{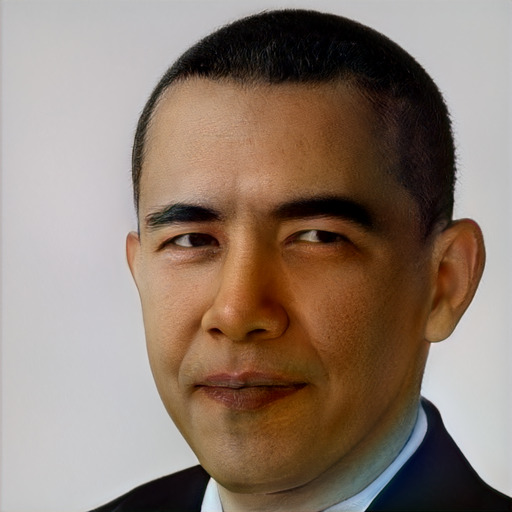} &
        \includegraphics[align=c,width=\ssize\textwidth]{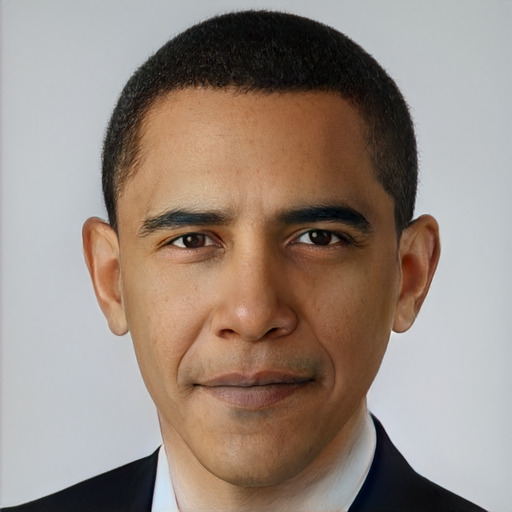} &
        \includegraphics[align=c,width=\ssize\textwidth]{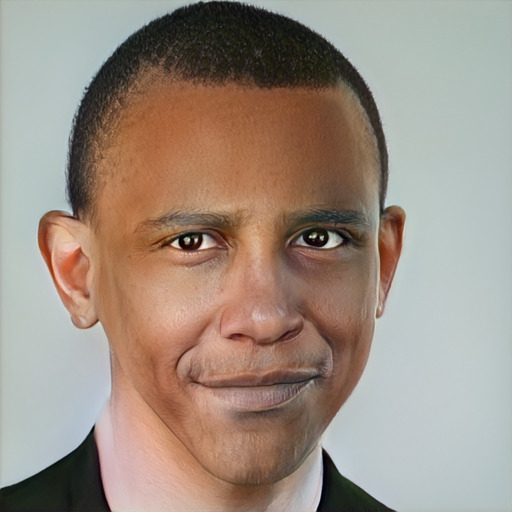} &
        \includegraphics[align=c,width=\ssize\textwidth]{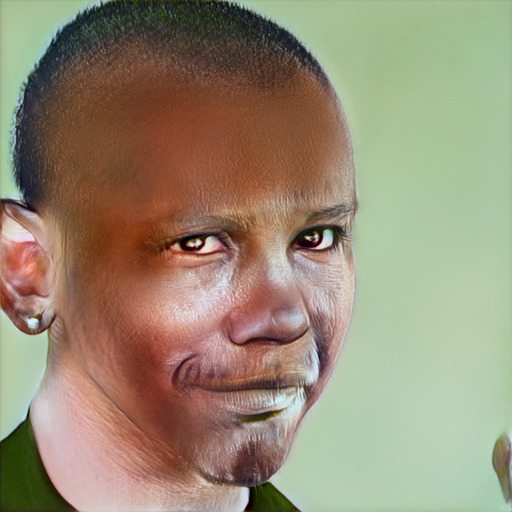} &
        \includegraphics[align=c,width=\ssize\textwidth]{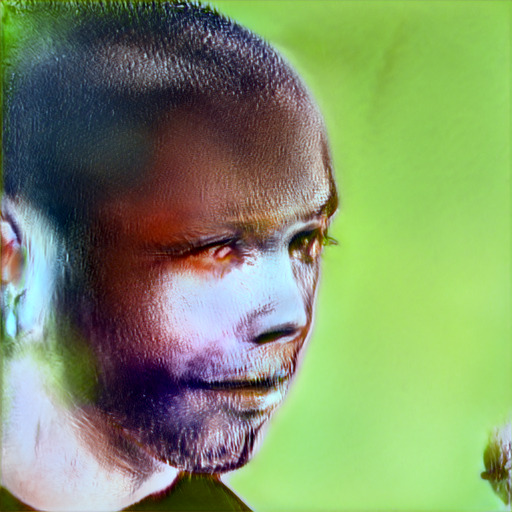} \\
        
        \vspace{0.5em} \\
        
        & \multicolumn{3}{r}{$\xleftarrow{\makebox[0.38\linewidth]{}}$} & \small Smile & \multicolumn{3}{l}{$\xrightarrow{\makebox[0.38\linewidth]{}}$} \\
        \vspace{0.25em} \\
        
        \rotatebox[]{90}{\parbox[c]{\ssize\textwidth}{\centering \small Generated}}\; &
        \includegraphics[align=c,width=\ssize\textwidth]{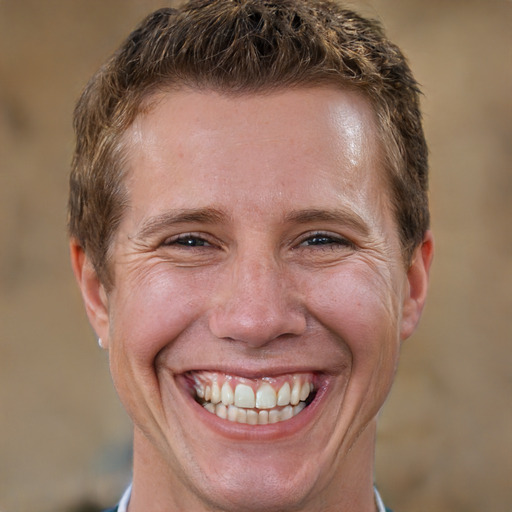} &
        \includegraphics[align=c,width=\ssize\textwidth]{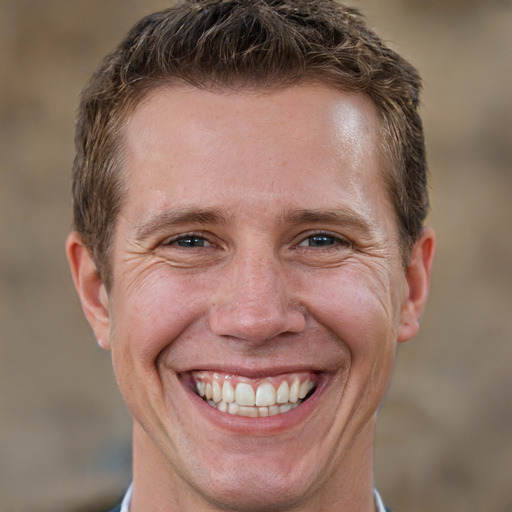} &
        \includegraphics[align=c,width=\ssize\textwidth]{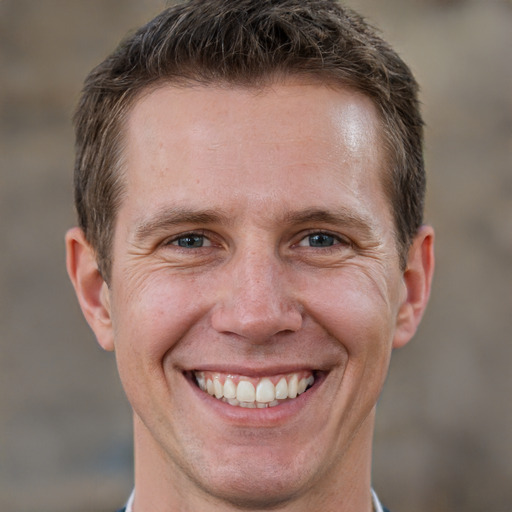} &
        \includegraphics[align=c,width=\ssize\textwidth]{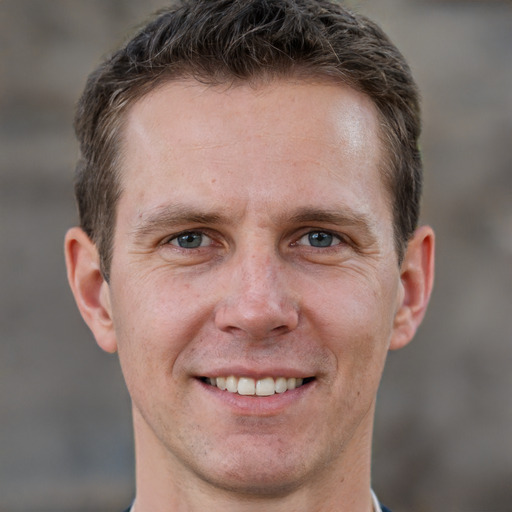} &
        \includegraphics[align=c,width=\ssize\textwidth]{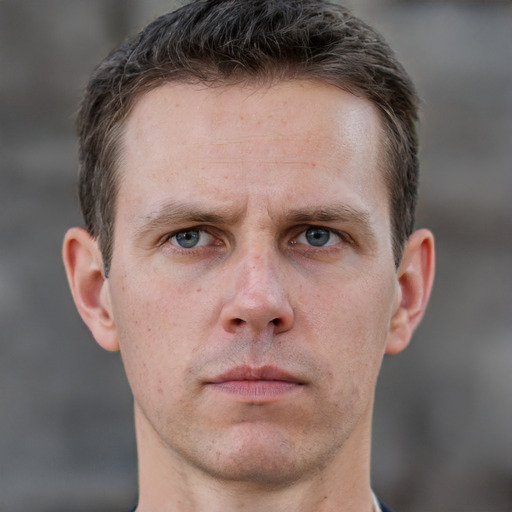} &
        \includegraphics[align=c,width=\ssize\textwidth]{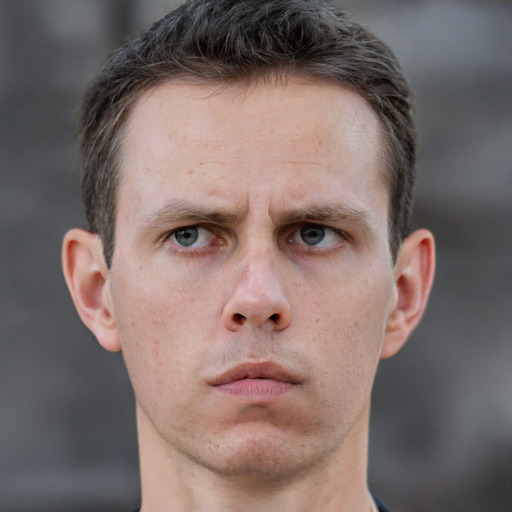} &
        \includegraphics[align=c,width=\ssize\textwidth]{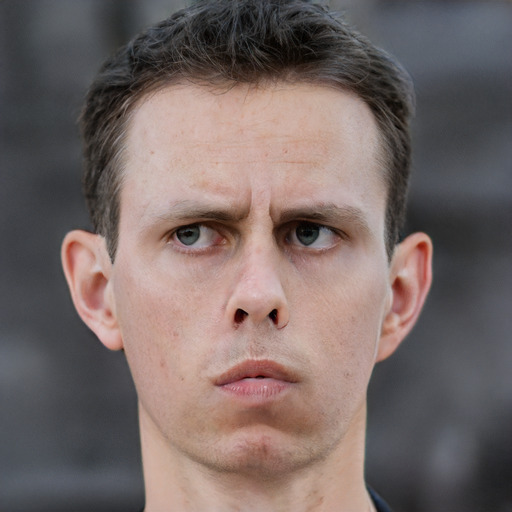} \\
        
        \rotatebox[]{90}{\parbox[c]{\ssize\textwidth}{\centering \small Projected}}\; &
        \includegraphics[align=c,width=\ssize\textwidth]{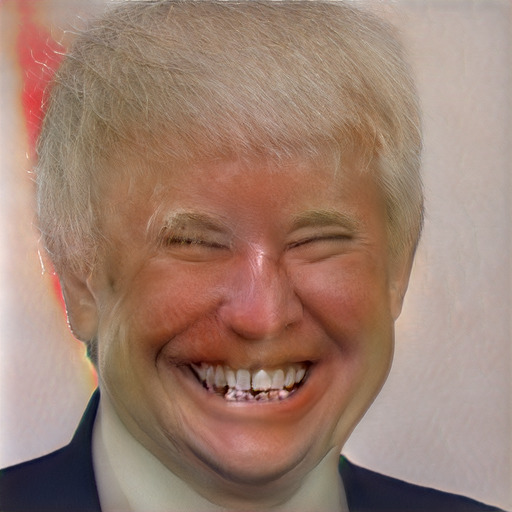} &
        \includegraphics[align=c,width=\ssize\textwidth]{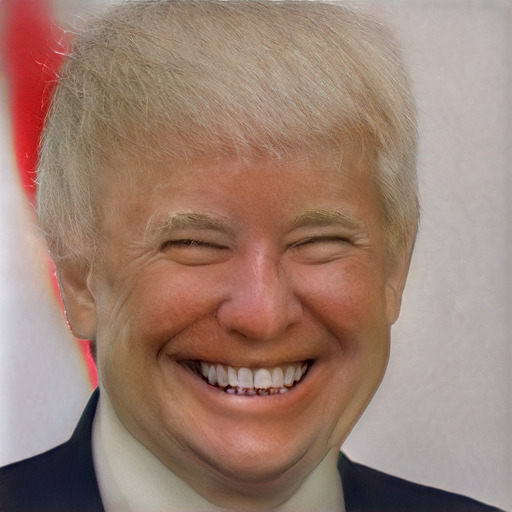} &
        \includegraphics[align=c,width=\ssize\textwidth]{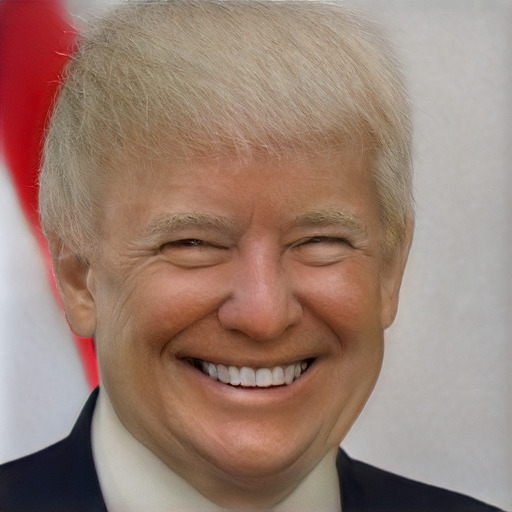} &
        \includegraphics[align=c,width=\ssize\textwidth]{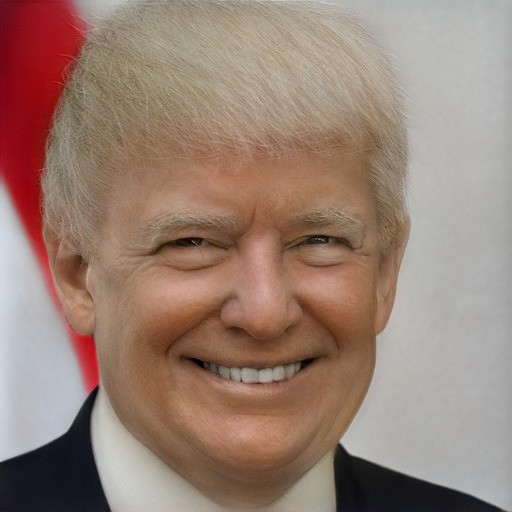} &
        \includegraphics[align=c,width=\ssize\textwidth]{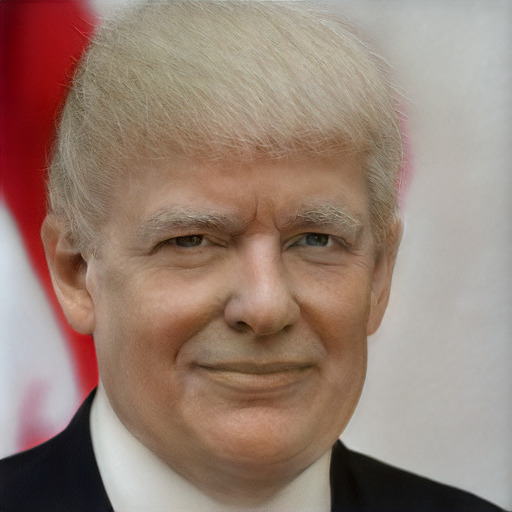} &
        \includegraphics[align=c,width=\ssize\textwidth]{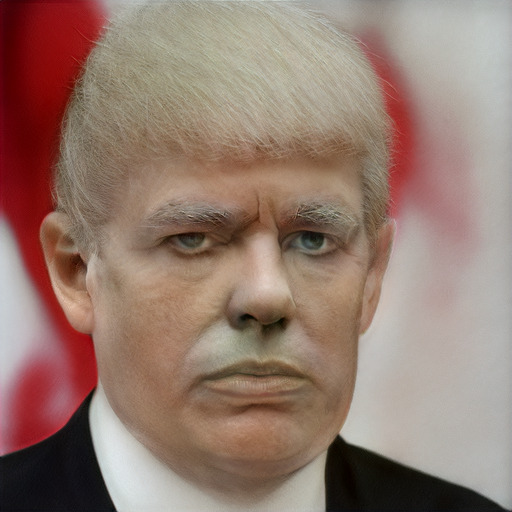} &
        \includegraphics[align=c,width=\ssize\textwidth]{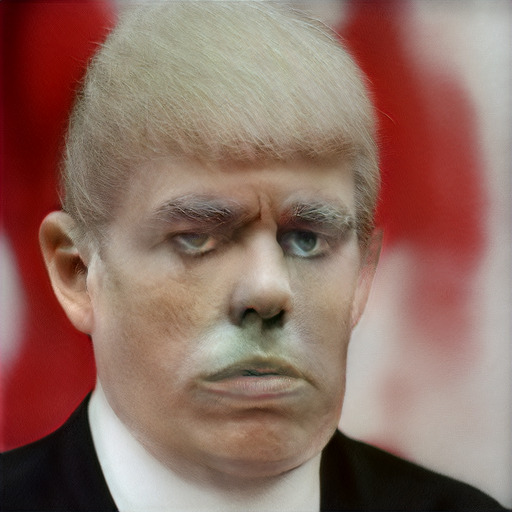} \\
        
    \end{tabular}
    \caption{
        The middle column represents latent representations that undergo attribute edition.
        For the first and the third row, representations are generated in the $\NWp$ space,
        and for the second and the forth row, representations are projected from the image into the $\NWp$ space.
        To obtain visually similar attribution edition effect for both generated and projected representation,
        we use attribute vectors with 2.5 times larger strength for projected representations.
        Quality of edited projected representations is worse, especially at the end of the spectrum and off-target effects are more visible.
    }
    \label{fig:heavy-attr-edit}
\end{figure}

\section{Distribution Interpolation}
Figure~\ref{fig:Wp-feat-dists} depicts a histogram of values in the $\mathcal{W}$ space for generated images.
We notice that all scalars of $\mathcal{W}$ space have similar distribution.
In Figure~\ref{fig:Wp-dist-and-corr},
we show that features are only weakly linearly correlated, and
the distribution of all scalars in $\mathcal{W}$ is similar to single scalar distributions.

\begin{figure}
    \centering \includegraphics[width=1\textwidth]{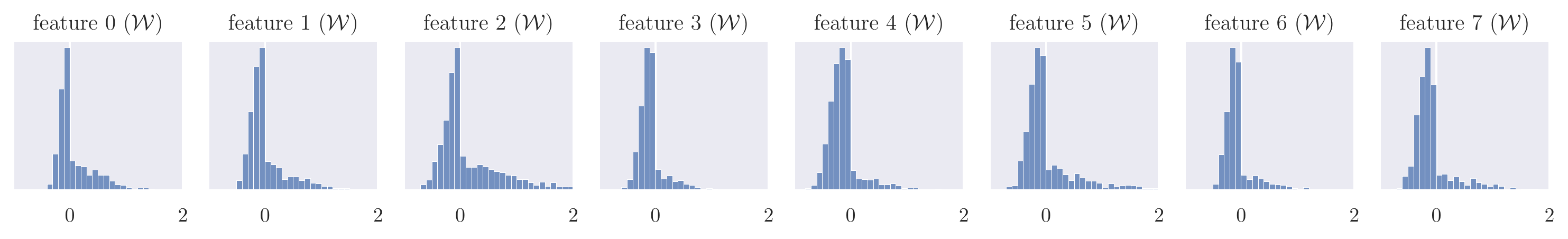}
    \caption{
        We sample 16384 latent vectors from the $\mathcal{Z}$ space, and convert them into $\mathcal{W}$
        by passing through the mapping network.
        We show histograms for first 8 features (of 512) in the $\mathcal{W}$ space.
        We notice that all features have similar distribution.
    }
    \label{fig:Wp-feat-dists}
\end{figure}

\begin{figure}
    \centering
    \begin{subfigure}{0.49\textwidth}
        \centering
        \includegraphics[width=1\textwidth]{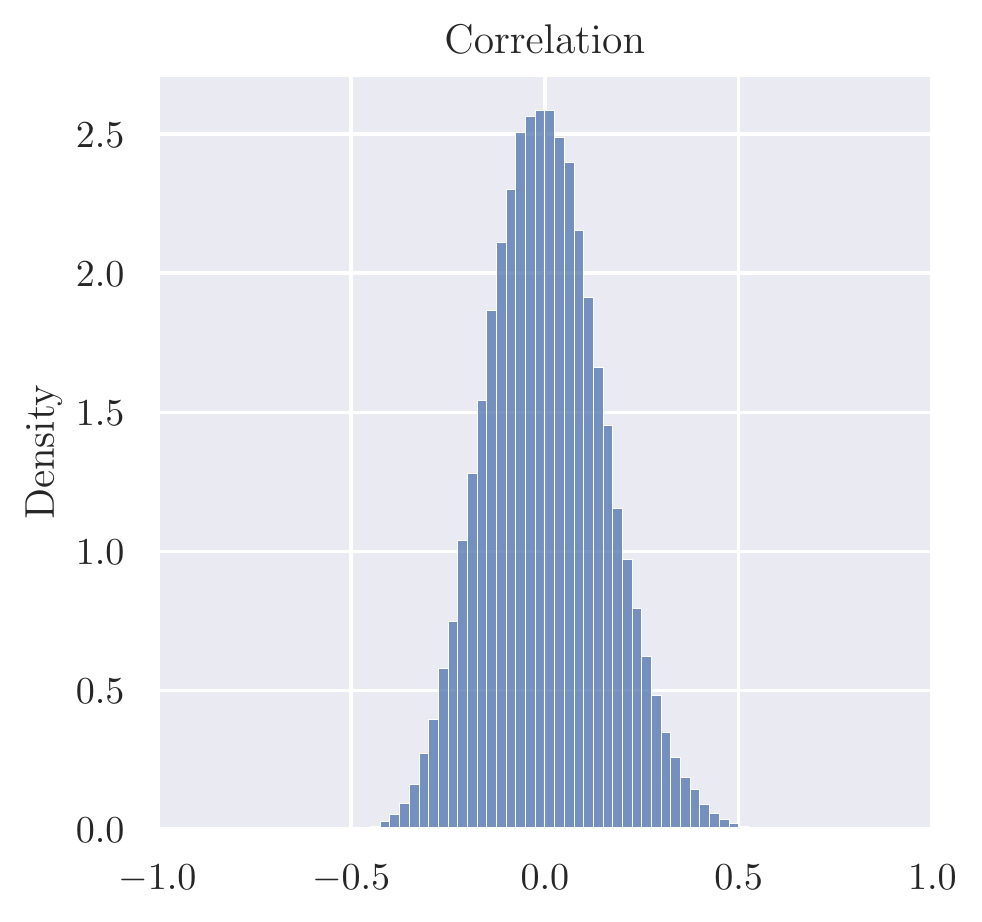}
        \caption{
            Histogram of the Pearson correlation between features in $\mathcal{W}$.
        }
    \end{subfigure}
    \begin{subfigure}{0.49\textwidth}
        \centering
        \includegraphics[width=1\textwidth]{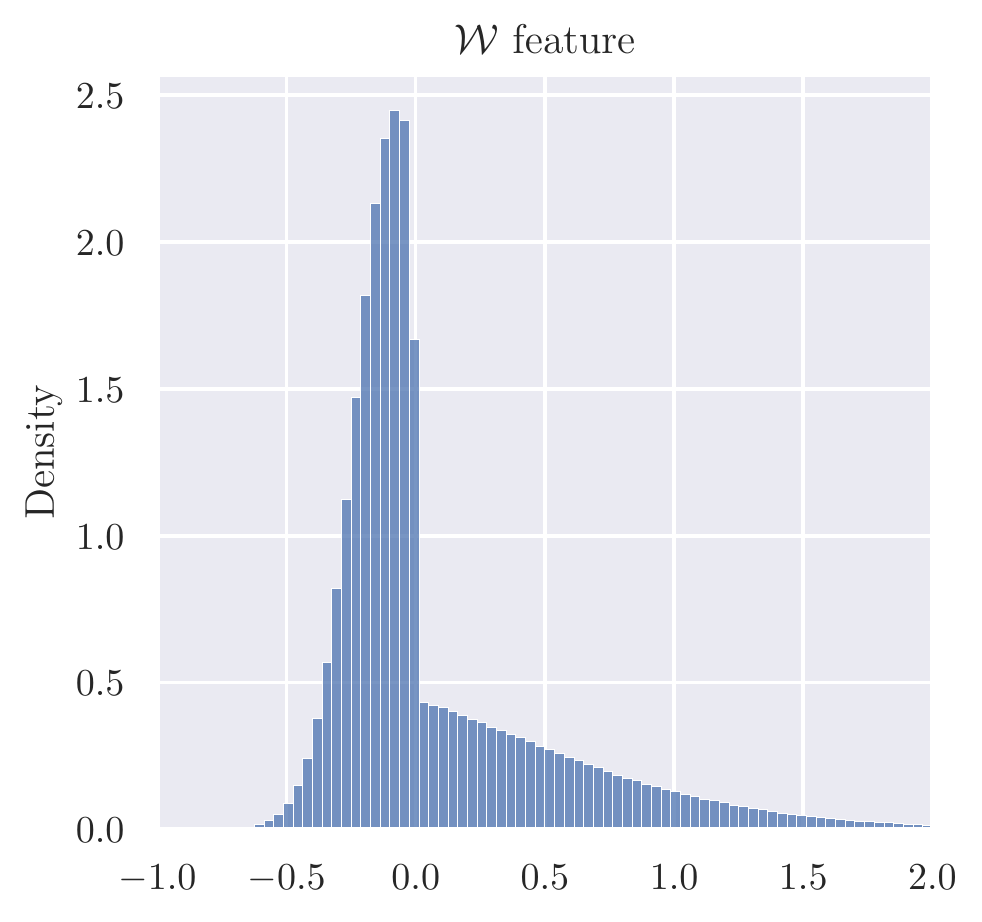}
        \caption{
            Histogram of scalars in $\mathcal{W}$
        }
    \end{subfigure}
    
    \caption{
        We can see that most features are not correlated with each other,
        or generally correlations are weak.
        We use this fact to compare distributions of single samples to distribution of all scalars in $\mathcal{W}$ as if they were independent.
    }
    \label{fig:Wp-dist-and-corr}
\end{figure}

We aim to use these facts to define a metric of being out-of-sample.
Assuming that features in $\mathcal{W}$ are independent and identically distributed,
we could calculate a distance between distribution constructed from scalars of a single sample
to the distribution of scalars in $\mathcal{W}$.
As shown previously, $\mathcal{W}$ seems to be close enough to these assumptions
to obtain a rough estimation for a our-of-sample measure.

In Figure~\ref{fig:Wp-proj-dist} we show such distributions for projected images into $\NWp$.
Distributions of projected images are far from the target distribution.
We speculate that such high discrepancy can have detrimental effect for image manipulation purposes.
For example, if a latent manipulation method relies on training using generated representations, it may consider projected representations out-of-sample,
\eg an attribute vector found by training a linear regression from the latent representation into the age of a person of generated image using a pre-trained age prediction model.
In particular, we expect attribute direction vectors to have lower impact on projected representations, because such representations are more scattered and
it is necessary to translate features further by increasing the length of a vector to achieve the same targeted effect,
which can make off-target effects of an attribute vector more visible.

\begin{figure}
    \setlength{\tabcolsep}{0pt}
    \renewcommand{\arraystretch}{0}
    \centering
    
    \def\ssize{0.23}
    \def\lsize{0.23}
    \begin{tabular}{cp{0.0em}ccc}
        & &
        \includegraphics[align=c,width=\lsize\textwidth]{samples/basic_fit__obama_NWp.jpg} &
        \includegraphics[align=c,width=\lsize\textwidth]{samples/basic_fit__trump_NWp.jpg} &
        \includegraphics[align=c,width=\lsize\textwidth]{samples/basic_fit__biden_NWp.jpg} \\
        
        \includegraphics[width=0.3\textwidth]{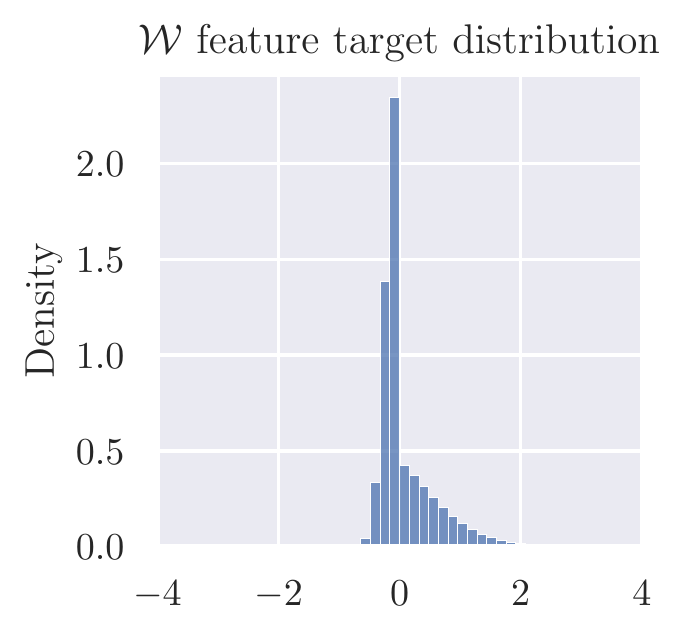} & &
        \includegraphics[width=\ssize\textwidth]{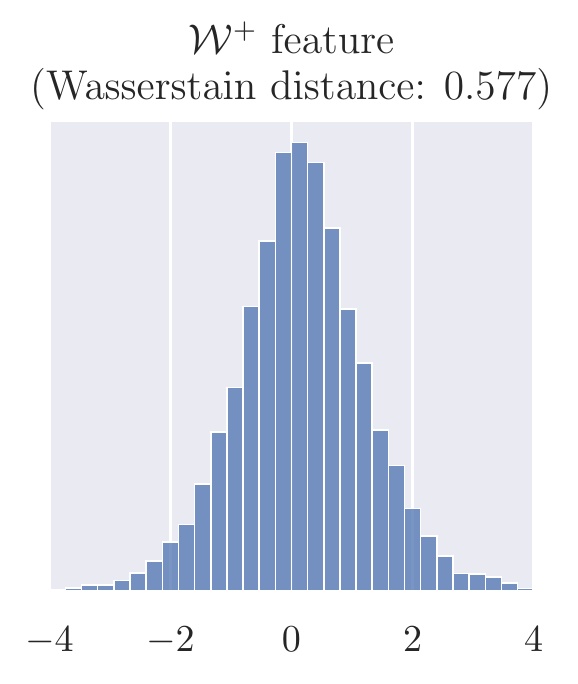} &
        \includegraphics[width=\ssize\textwidth]{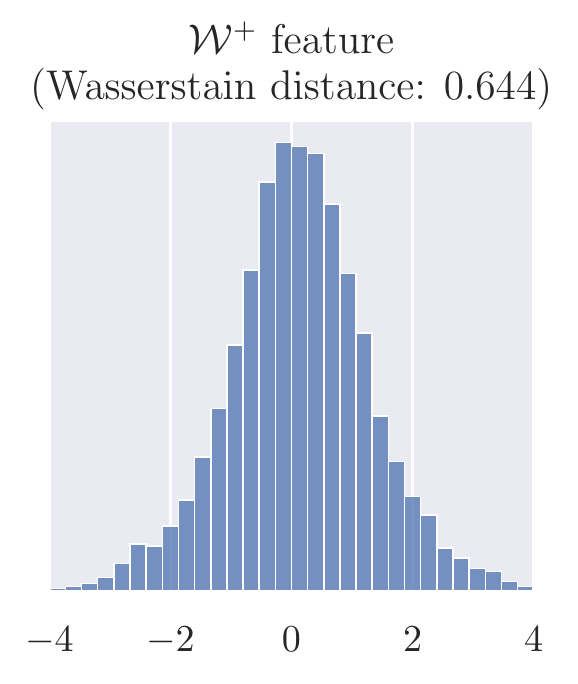} &
        \includegraphics[width=\ssize\textwidth]{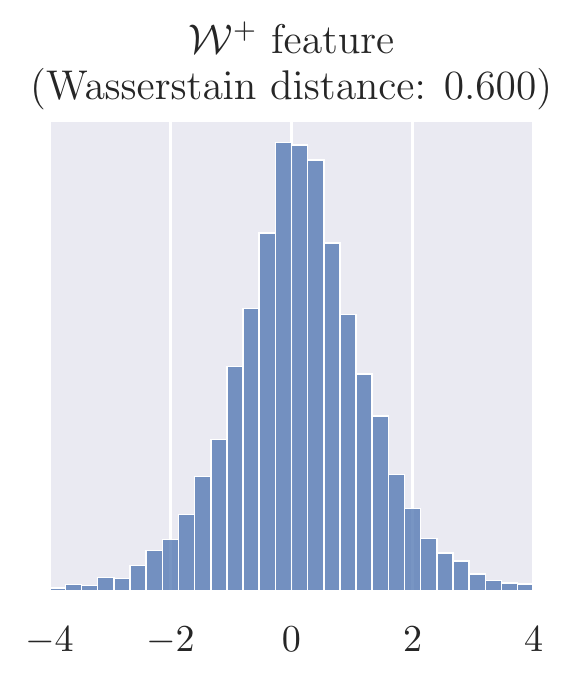}
    \end{tabular}
    
    \caption{
        We plot distributions of $\Wp$ values for $\NWp$ projections.
        The values are far from the expected range.
        The Wasserstein distance is in range 0.55--0.65, whereas
        randomly generated latent representations have 0.047 in the expected case
        with standard deviation of 0.0284.
    }
    \label{fig:Wp-proj-dist}
\end{figure}

To validate that effect, we interpolate sample distributions into the target distribution.
We interpolate distributions in a way that the Wasserstein distance decreases linearly with the coefficient $c$
and the order of elements is preserved.
\begin{equation}
\text{distribution-interpolation}_{S, T}\funarg{x, c} = (1 - c) \cdot x + c \cdot T^{-1}\funarg{S\funarg{x}}.
\label{eq:dist-interp}
\end{equation}
This is a formula for transforming a scalar $x \in \mathbb{R}$ in the sample distribution using coefficient $c \in [0, 1]$, where $S$ is the cumulative sample distribution function and $T$ is the cumulative target distribution function.

Intuitively, $S\funarg{x}$ returns a quantile of value $x$ in the sample distribution. $T^{-1}\funarg{S\funarg{x}}$ takes that quantile and converts it into a value that in the target distribution has the same quantile. Then we linearly interpolate between $x$ and $T^{-1}\funarg{S\funarg{x}}$ using a coefficient $c$.

\subsection{Experiments}

We show the obtained images using this method and attribute edition on $\NWp$ in Figure~\ref{fig:Wp-equalization}.
Similarly, we experiment with distribution interpolation for $\FNWp{}$.
To do this, we additionally approximate target distributions of scalars for feature and RGB maps by sampling latent vectors from $\mathcal{Z}$ and converting them appropriately.
Figure~\ref{fig:F5Wp-equalization} shows the effect on $\FNWp{5}$ latent representations which were optimized only for 250 iterations instead of 1000 as in $\NWp$.
We can see that the visual difference and perceived attribute strength is much less significant. We suspect that the reason is distribution similarity coming from short optimizing of the latent representation.
In Figure~\ref{fig:F5Wp-equalization-convergence}, we show the effect again in $\FNWp{5}$ however with latent representations optimized till convergence.

Note that all experiments we did in this section, use representations obtained in a way as described in previous chapters.
We didn't constraint our latent representations to yield positive results when using such distribution interpolation,
but we still managed in some cases to obtain satisfying results.
That shows that relative values and the order of values is a very important part of representation.

\begin{figure}[b!]
    \setlength{\tabcolsep}{0pt}
    \renewcommand{\arraystretch}{0}
    \centering
    
    \def\ssize{0.155}
    \def\lsize{0.155}
    \begin{tabular}{ccccccc}
        & $c = 0$ & $c = 0.2$ & $c = 0.4$ & $c = 0.6$ & $c = 0.8$ & $c = 1$ \\
        \vspace{0.5em} \\
        
        &
        \includegraphics[align=c,width=\ssize\textwidth]{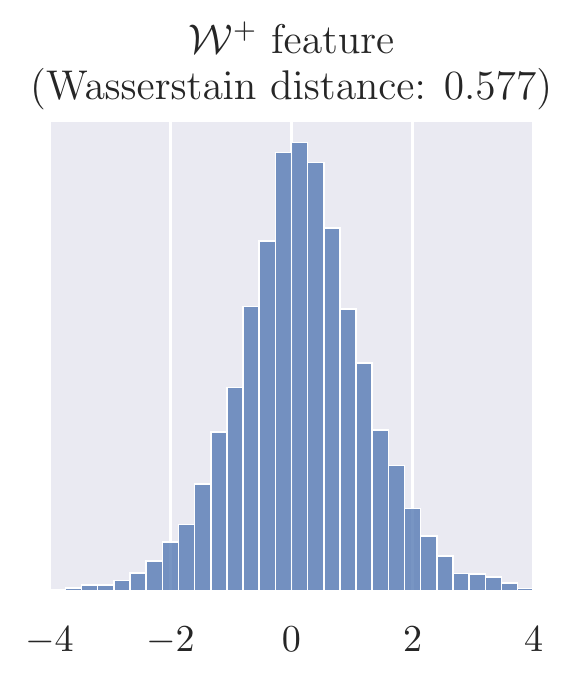} &
        \includegraphics[align=c,width=\ssize\textwidth]{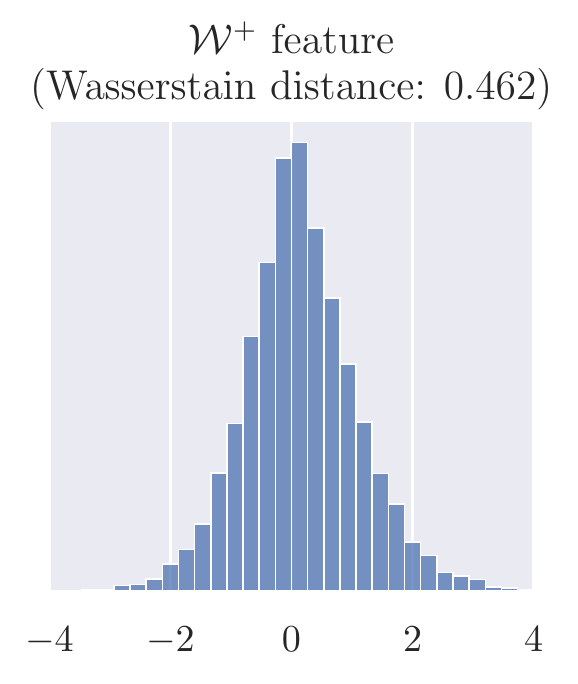} &
        \includegraphics[align=c,width=\ssize\textwidth]{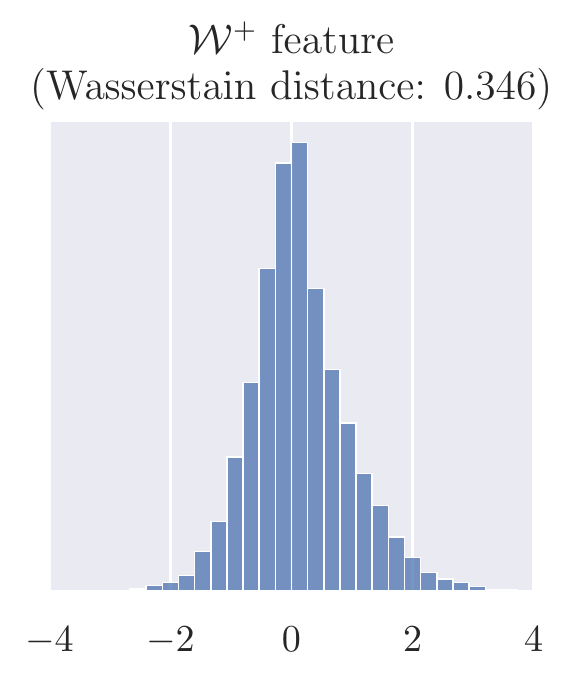} &
        \includegraphics[align=c,width=\ssize\textwidth]{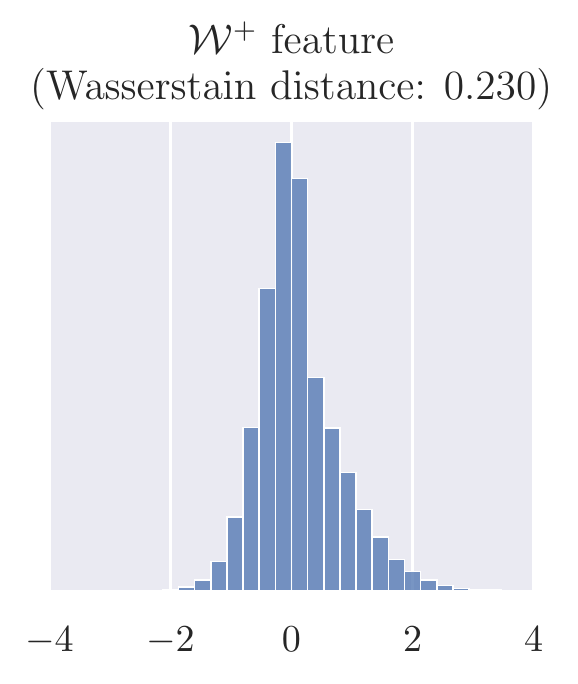} &
        \includegraphics[align=c,width=\ssize\textwidth]{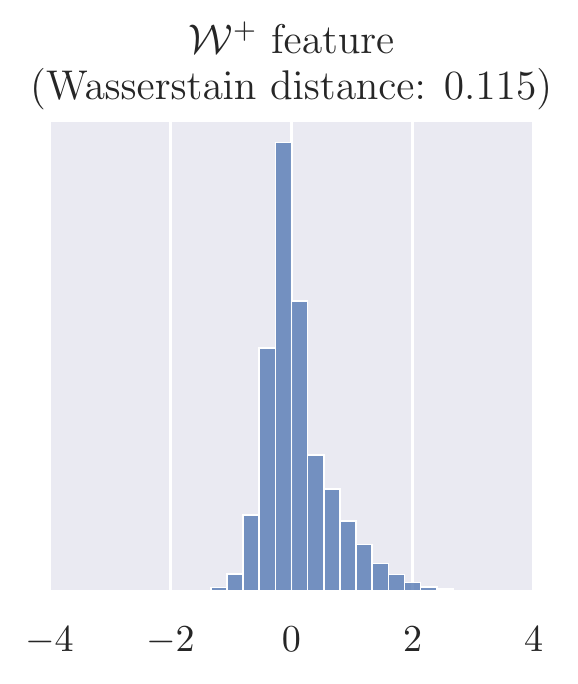} &
        \includegraphics[align=c,width=\ssize\textwidth]{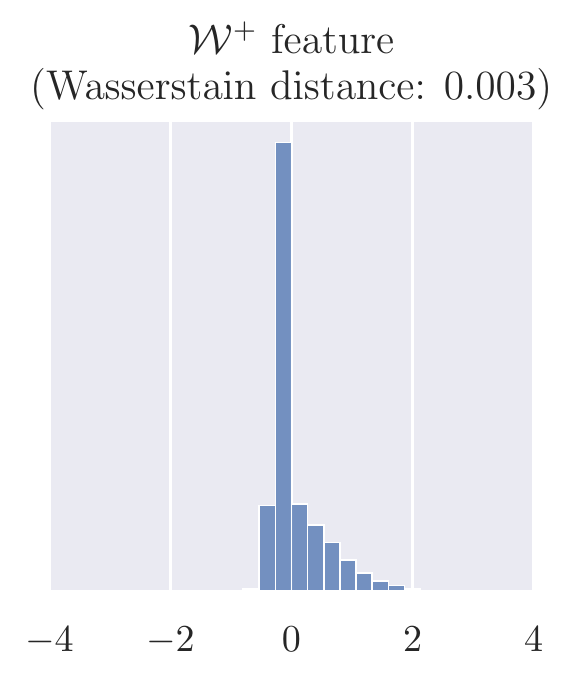} \\
        
        \vspace{0.25em} \\
        & \scriptsize LPIPS: 0.2458 & \scriptsize LPIPS: 0.2560 & \scriptsize LPIPS: 0.2770 & \scriptsize LPIPS: 0.3033 & \scriptsize LPIPS: 0.3365 & \scriptsize LPIPS: 0.3824 \\
        \vspace{0.25em} \\
        
        \rotatebox[]{90}{\parbox[c]{\lsize\textwidth}{\centering Image after interpolation}}\; &
        \includegraphics[align=c,width=\lsize\textwidth]{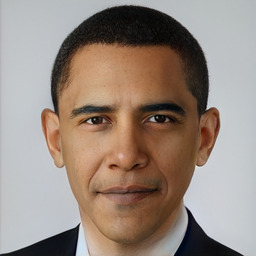} &
        \includegraphics[align=c,width=\lsize\textwidth]{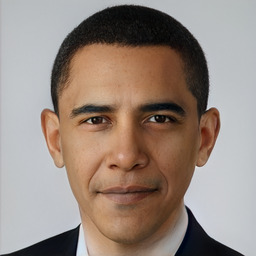} &
        \includegraphics[align=c,width=\lsize\textwidth]{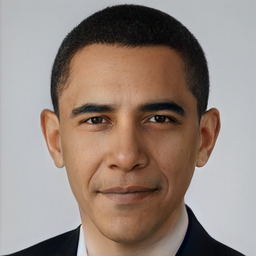} &
        \includegraphics[align=c,width=\lsize\textwidth]{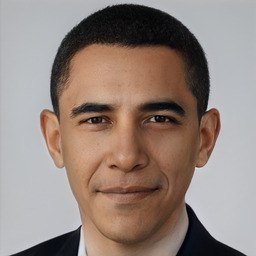} &
        \includegraphics[align=c,width=\lsize\textwidth]{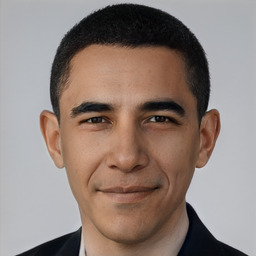} &
        \includegraphics[align=c,width=\lsize\textwidth]{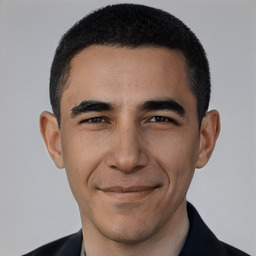} \\
        
        \rotatebox[]{90}{\parbox[c]{\lsize\textwidth}{\centering + Age}}\; &
        \includegraphics[align=c,width=\lsize\textwidth]{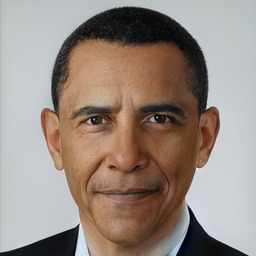} &
        \includegraphics[align=c,width=\lsize\textwidth]{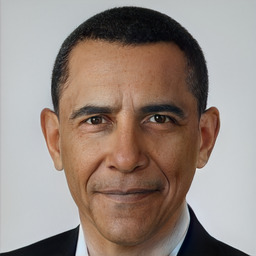} &
        \includegraphics[align=c,width=\lsize\textwidth]{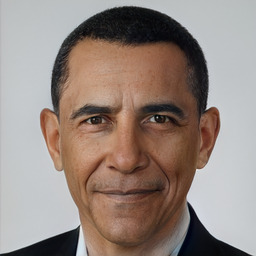} &
        \includegraphics[align=c,width=\lsize\textwidth]{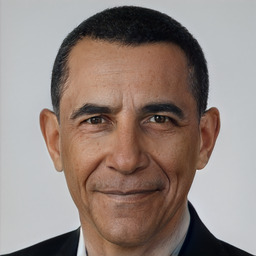} &
        \includegraphics[align=c,width=\lsize\textwidth]{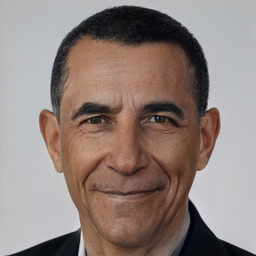} &
        \includegraphics[align=c,width=\lsize\textwidth]{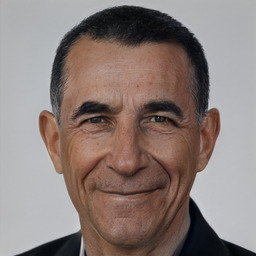} \\
        
        \rotatebox[]{90}{\parbox[c]{\lsize\textwidth}{\centering + Smile}}\; &
        \includegraphics[align=c,width=\lsize\textwidth]{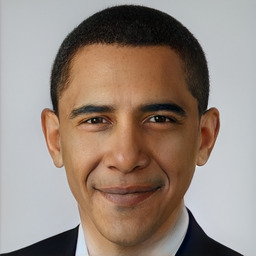} &
        \includegraphics[align=c,width=\lsize\textwidth]{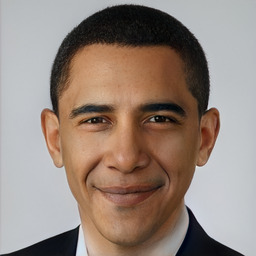} &
        \includegraphics[align=c,width=\lsize\textwidth]{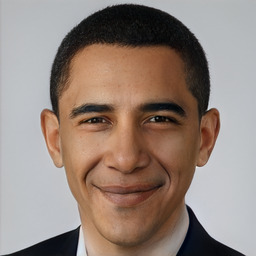} &
        \includegraphics[align=c,width=\lsize\textwidth]{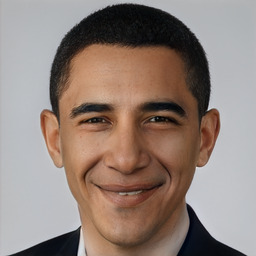} &
        \includegraphics[align=c,width=\lsize\textwidth]{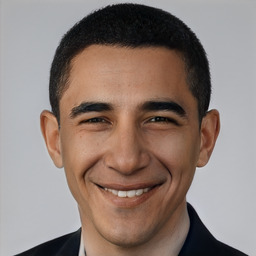} &
        \includegraphics[align=c,width=\lsize\textwidth]{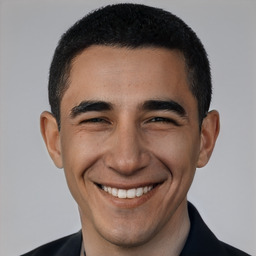} \\
        
        \rotatebox[]{90}{\parbox[c]{\lsize\textwidth}{\centering Gender}}\; &
        \includegraphics[align=c,width=\lsize\textwidth]{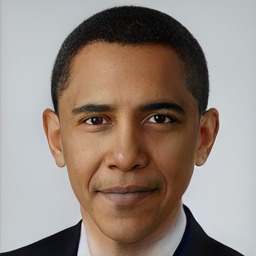} &
        \includegraphics[align=c,width=\lsize\textwidth]{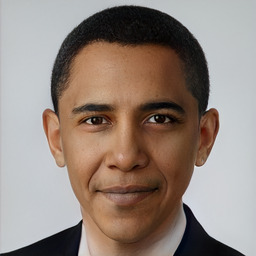} &
        \includegraphics[align=c,width=\lsize\textwidth]{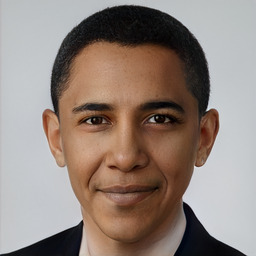} &
        \includegraphics[align=c,width=\lsize\textwidth]{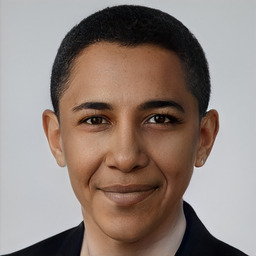} &
        \includegraphics[align=c,width=\lsize\textwidth]{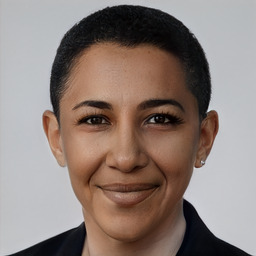} &
        \includegraphics[align=c,width=\lsize\textwidth]{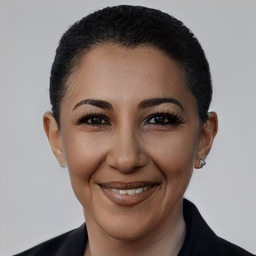} \\
        
        %
        
    \end{tabular}
    
    \caption{
        We take a projection into $\NWp$, and use Equation~\ref{eq:dist-interp} with different interpolation coefficients
        to obtain multiple latent representations. For every representation we use the same attribute strength.
        Using higher distribution interpolation coefficients make the image more different from the original image,
        both visually and in terms of LPIPS metric.
        However with higher coefficients, images become to look more natural,
        and are more sensitive to attribute edition.
    }
    \label{fig:Wp-equalization}
    
    \vspace{0.08\textheight}  

\end{figure}

\begin{figure}
    \setlength{\tabcolsep}{0pt}
    \renewcommand{\arraystretch}{0}
    \centering
    
    \def\ssize{0.14}
    \def\lsize{0.14}
    \begin{tabular}{ccccccc}
        & $c = 0$ & $c = 0.2$ & $c = 0.4$ & $c = 0.6$ & $c = 0.8$ & $c = 1$ \\
        \vspace{0.5em} \\
        
        &
        \includegraphics[align=c,width=\ssize\textwidth]{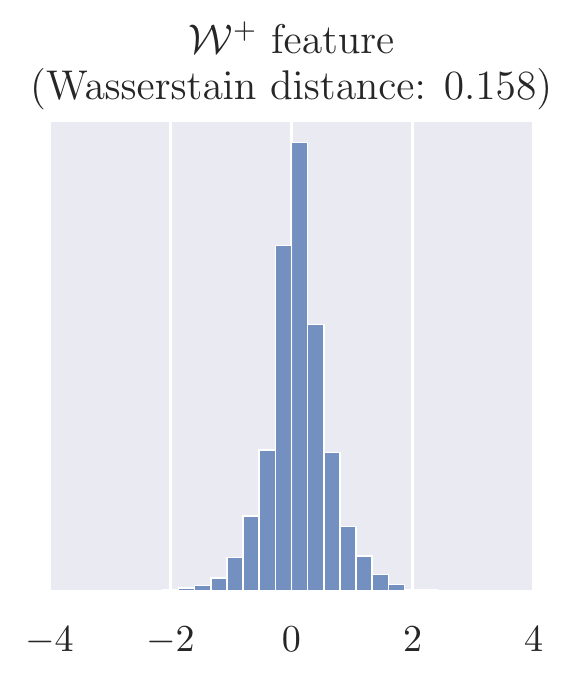} &
        \includegraphics[align=c,width=\ssize\textwidth]{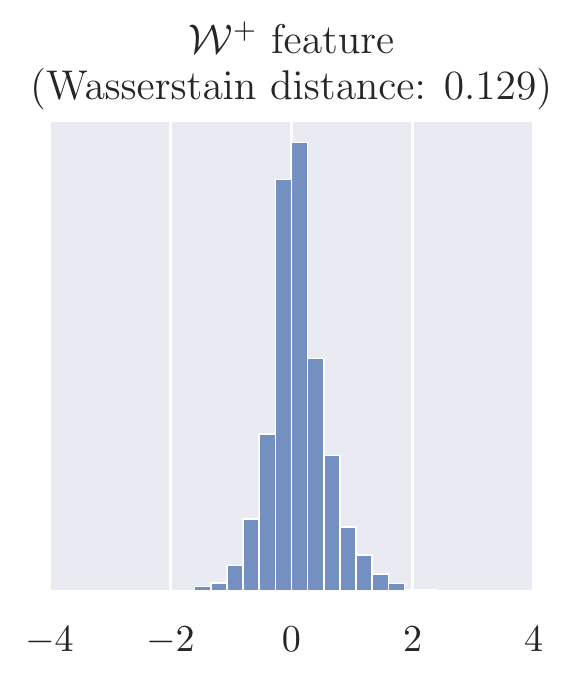} &
        \includegraphics[align=c,width=\ssize\textwidth]{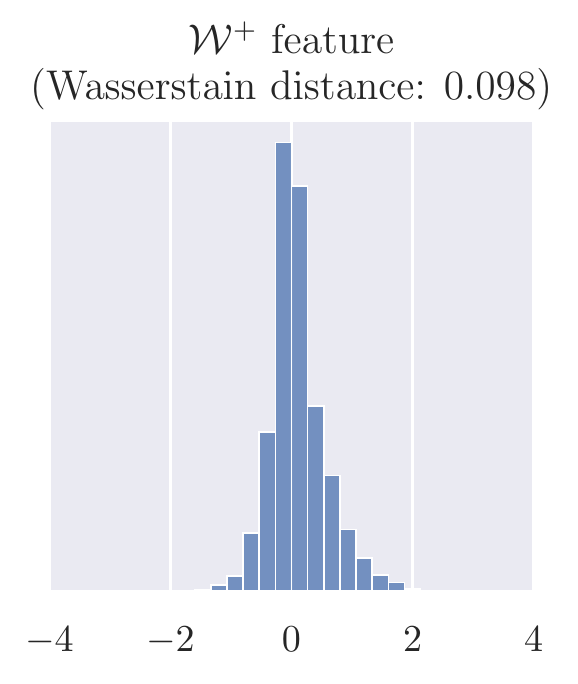} &
        \includegraphics[align=c,width=\ssize\textwidth]{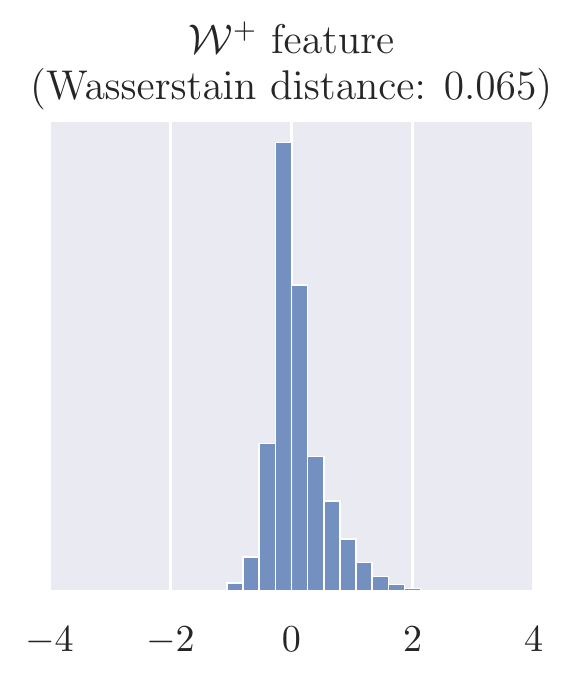} &
        \includegraphics[align=c,width=\ssize\textwidth]{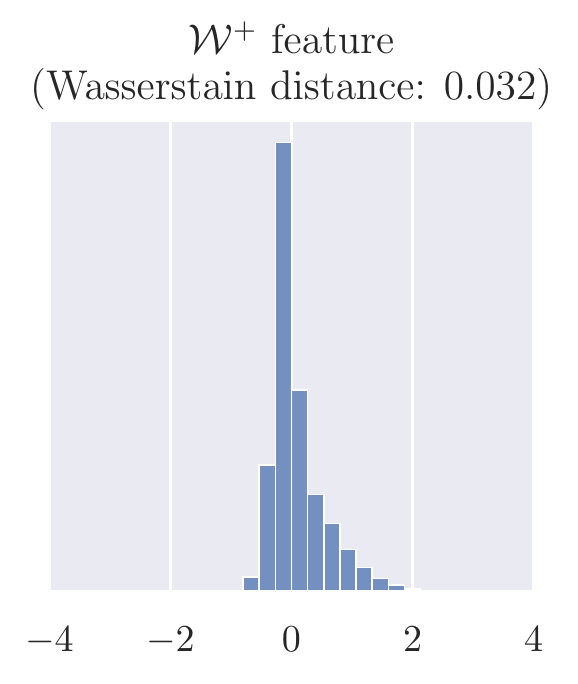} &
        \includegraphics[align=c,width=\ssize\textwidth]{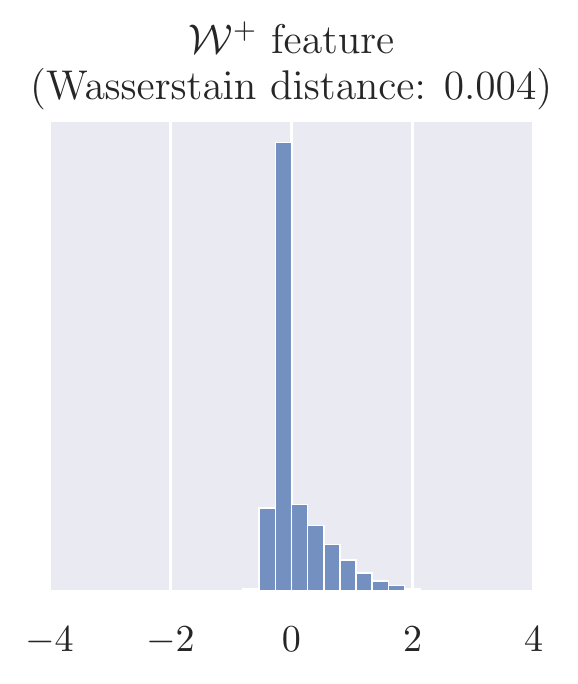} \\
        
        \vspace{0.25em} \\
        & \scriptsize LPIPS: 0.2076 & \scriptsize LPIPS: 0.2144 & \scriptsize LPIPS: 0.2322 & \scriptsize LPIPS: 0.2521 & \scriptsize LPIPS: 0.2736 & \scriptsize LPIPS: 0.2954 \\
        \vspace{0.25em} \\
        
        \rotatebox[]{90}{\parbox[c]{\lsize\textwidth}{\centering Image after interpolation}}\; &
        \includegraphics[align=c,width=\lsize\textwidth]{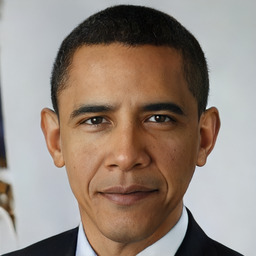} &
        \includegraphics[align=c,width=\lsize\textwidth]{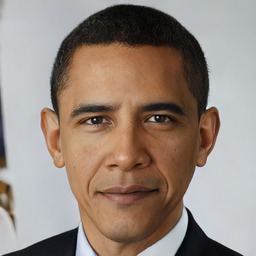} &
        \includegraphics[align=c,width=\lsize\textwidth]{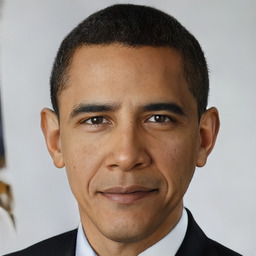} &
        \includegraphics[align=c,width=\lsize\textwidth]{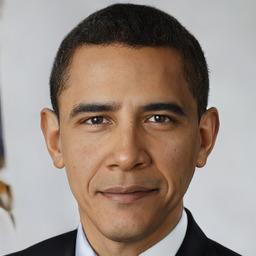} &
        \includegraphics[align=c,width=\lsize\textwidth]{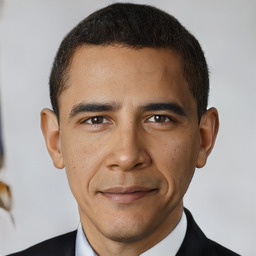} &
        \includegraphics[align=c,width=\lsize\textwidth]{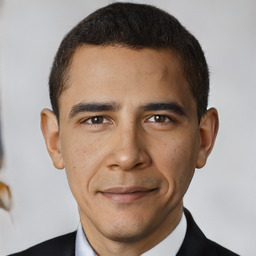} \\
        
        \rotatebox[]{90}{\parbox[c]{\lsize\textwidth}{\centering + Age}}\; &
        \includegraphics[align=c,width=\lsize\textwidth]{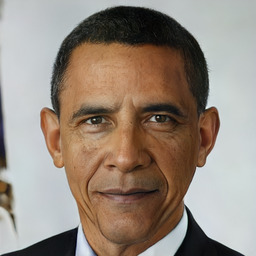} &
        \includegraphics[align=c,width=\lsize\textwidth]{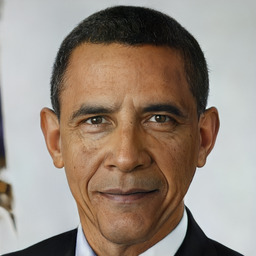} &
        \includegraphics[align=c,width=\lsize\textwidth]{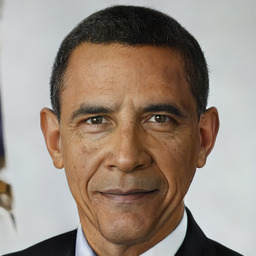} &
        \includegraphics[align=c,width=\lsize\textwidth]{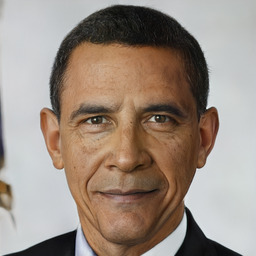} &
        \includegraphics[align=c,width=\lsize\textwidth]{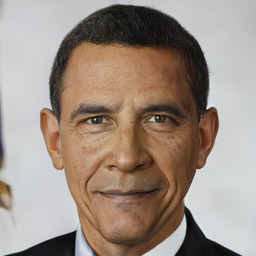} &
        \includegraphics[align=c,width=\lsize\textwidth]{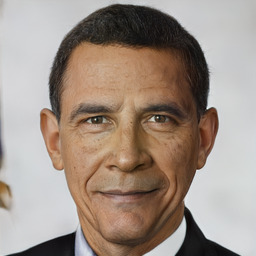} \\
        
        \rotatebox[]{90}{\parbox[c]{\lsize\textwidth}{\centering + Smile}}\; &
        \includegraphics[align=c,width=\lsize\textwidth]{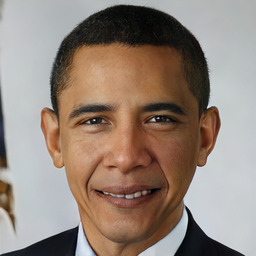} &
        \includegraphics[align=c,width=\lsize\textwidth]{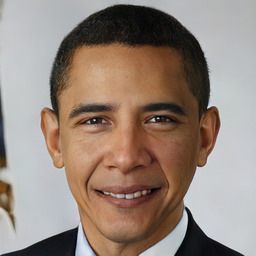} &
        \includegraphics[align=c,width=\lsize\textwidth]{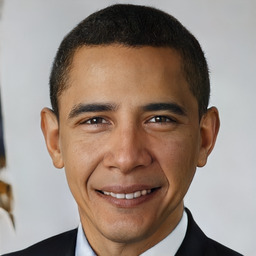} &
        \includegraphics[align=c,width=\lsize\textwidth]{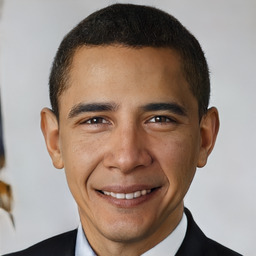} &
        \includegraphics[align=c,width=\lsize\textwidth]{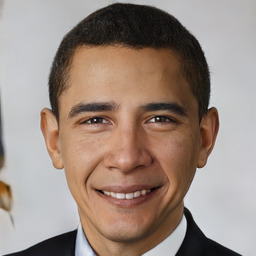} &
        \includegraphics[align=c,width=\lsize\textwidth]{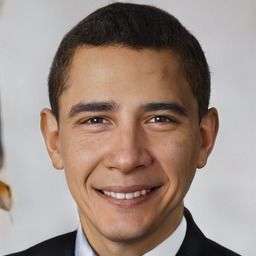} \\
        
        \rotatebox[]{90}{\parbox[c]{\lsize\textwidth}{\centering Gender}}\; &
        \includegraphics[align=c,width=\lsize\textwidth]{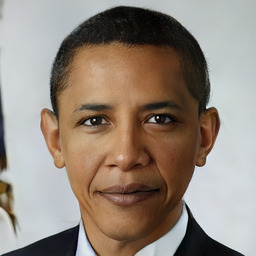} &
        \includegraphics[align=c,width=\lsize\textwidth]{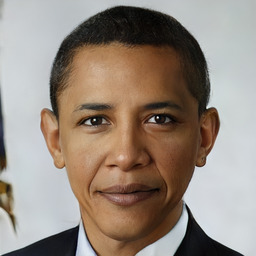} &
        \includegraphics[align=c,width=\lsize\textwidth]{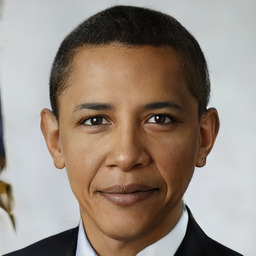} &
        \includegraphics[align=c,width=\lsize\textwidth]{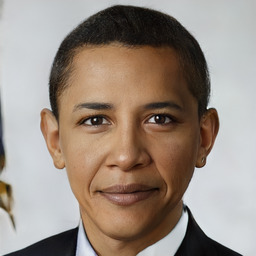} &
        \includegraphics[align=c,width=\lsize\textwidth]{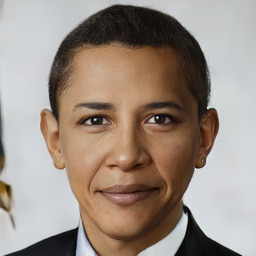} &
        \includegraphics[align=c,width=\lsize\textwidth]{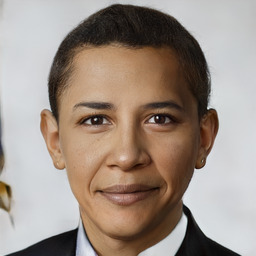} \\

    \end{tabular}
    
    \caption{
        We follow the procedure from Figure~\ref{fig:Wp-equalization}, but on $\FNWp{5}$.
        We interpolate style vectors, feature maps, and RGB maps independently,
        \ie we use Equation~\ref{eq:dist-interp} three times.
        In the first row we show distribution of $\Wp$.
        The visual difference and relative strength of attribute edition is much less visible compared to Figure~\ref{fig:Wp-equalization}.
        We suspect that the reason is greater distribution similarity of the projected latent representation.
        For comparison the Wasserstein distance is 0.159, whereas for the $\NWp$ projection it is 0.578.
    }
    \label{fig:F5Wp-equalization}
\end{figure}

\begin{figure}
    \setlength{\tabcolsep}{0pt}
    \renewcommand{\arraystretch}{0}
    \centering
    
    \def\ssize{0.14}
    \def\lsize{0.14}
    \begin{tabular}{ccccccc}
        & $c = 0$ & $c = 0.2$ & $c = 0.4$ & $c = 0.6$ & $c = 0.8$ & $c = 1$ \\
        \vspace{0.5em} \\
        
        &
        \includegraphics[align=c,width=\ssize\textwidth]{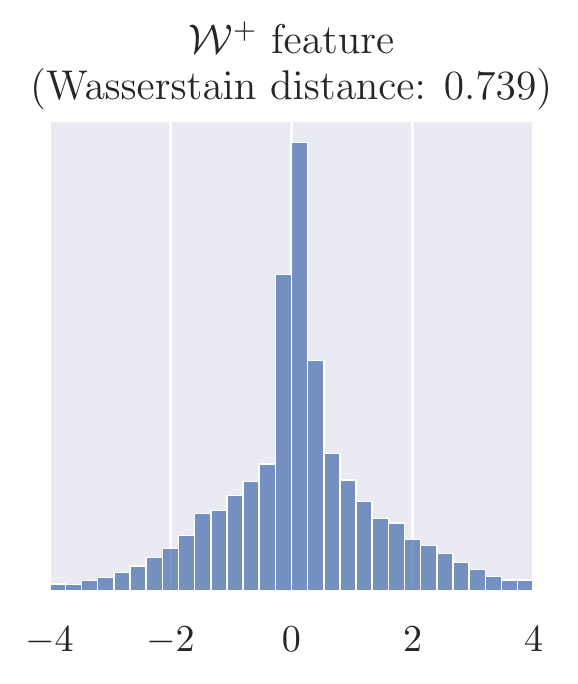} &
        \includegraphics[align=c,width=\ssize\textwidth]{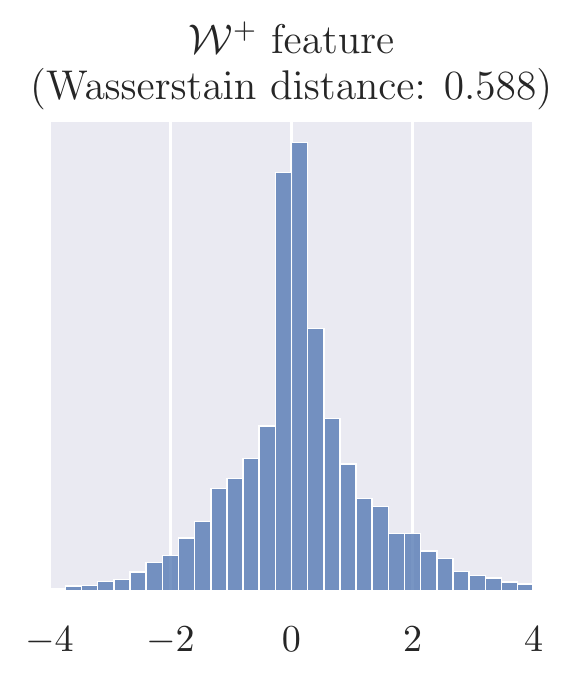} &
        \includegraphics[align=c,width=\ssize\textwidth]{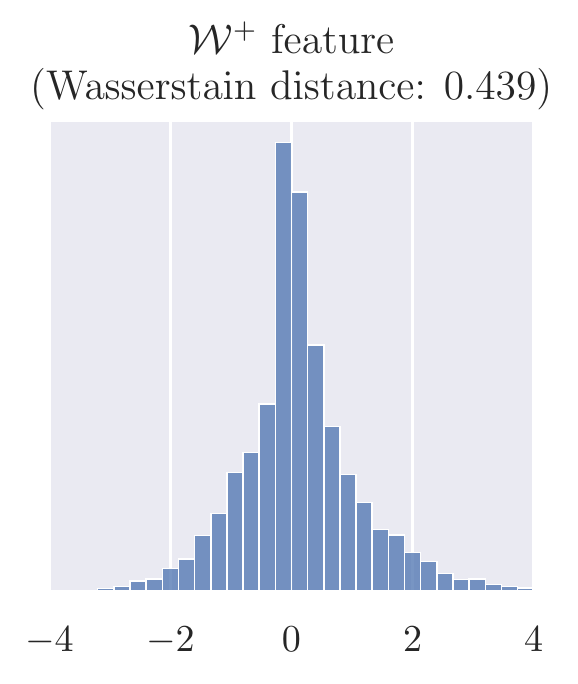} &
        \includegraphics[align=c,width=\ssize\textwidth]{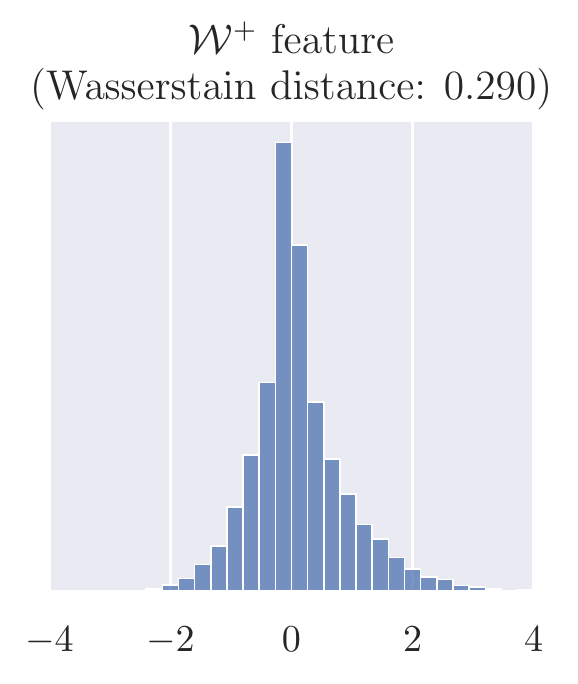} &
        \includegraphics[align=c,width=\ssize\textwidth]{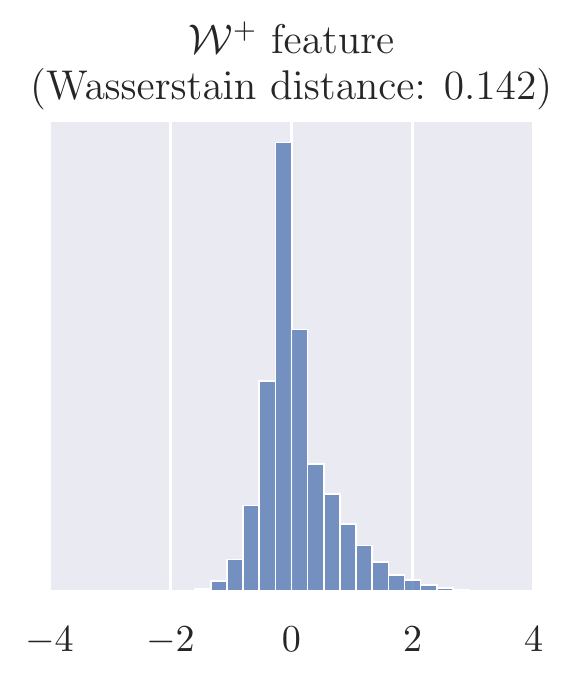} &
        \includegraphics[align=c,width=\ssize\textwidth]{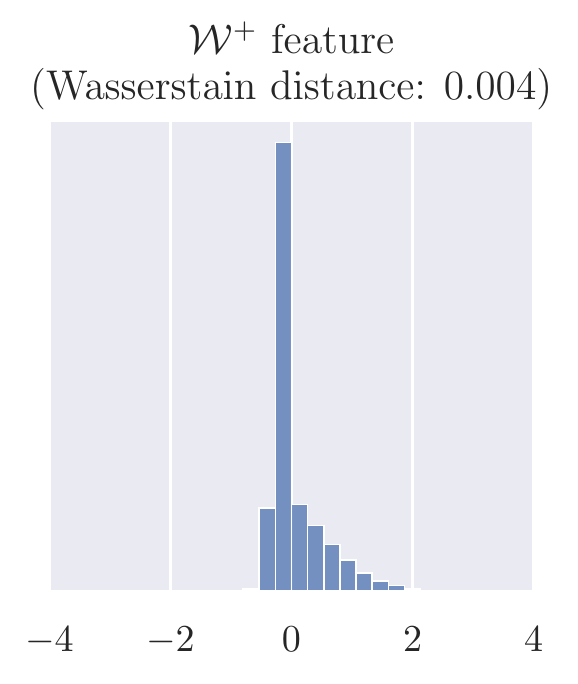} \\
        
        \vspace{0.25em} \\
        & \scriptsize LPIPS: 0.1614 & \scriptsize LPIPS: 0.1830 & \scriptsize LPIPS: 0.2257 & \scriptsize LPIPS: 0.2756 & \scriptsize LPIPS: 0.3353 & \scriptsize LPIPS: 0.4250 \\
        \vspace{0.25em} \\
        
        \rotatebox[]{90}{\parbox[c]{\lsize\textwidth}{\centering Image after interpolation}}\; &
        \includegraphics[align=c,width=\lsize\textwidth]{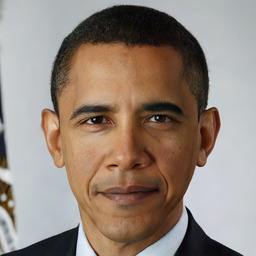} &
        \includegraphics[align=c,width=\lsize\textwidth]{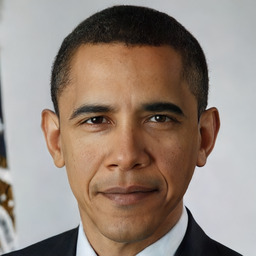} &
        \includegraphics[align=c,width=\lsize\textwidth]{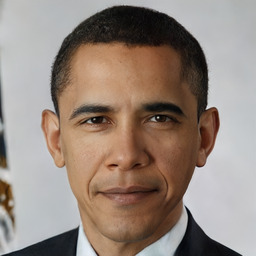} &
        \includegraphics[align=c,width=\lsize\textwidth]{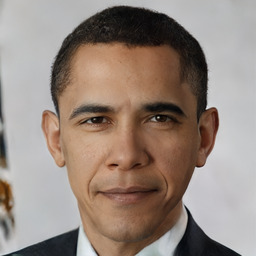} &
        \includegraphics[align=c,width=\lsize\textwidth]{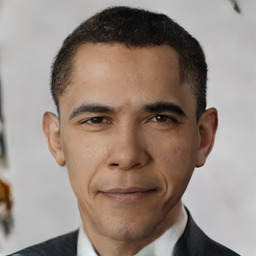} &
        \includegraphics[align=c,width=\lsize\textwidth]{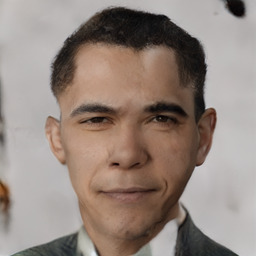} \\

    \end{tabular}
    
    \caption{
        We follow the procedure from Figure~\ref{fig:F5Wp-equalization}, but on the latent representation trained till convergence.
        As stated in Figure~\ref{fig:fitting-init-and-len}, such representations fail to capture semantics of the image,
        and distribution interpolation only worsen the results.
    }
    \label{fig:F5Wp-equalization-convergence}
\end{figure}

\clearpage
\section{Distribution Regularization}
Inspired by findings in the previous chapter, we try to constrain distributions during latent optimization.
We define a \textit{distribution regularization}.
\begin{equation}
\text{distribution-regularization}\funarg{S, T} = \text{MSE}\funarg{\text{sorted}\funarg{S}, \text{sorted}\funarg{T}},
\label{eq:dist-reg}
\end{equation}
where $S$ is a flattened tensor from a representation, and $T$ is a vector representing the target distribution.
We obtain a vector $T$, by
\begin{enumerate}
    \item sampling multiple ($4096$) latent vectors $z \in \mathcal{Z}$
    \item converting them into a given space (\eg $\mathcal{W}$, $\mathcal{F}_5$)
    \item flattening and sorting elements by their values
    \item reducing number of elements by averaging consecutive element groups in a way to obtain a correct number of elements \ie as many as the dimensionality of the space.
\end{enumerate}
Alternatively, we can think of it as L2 regularization with non-zero center, which can change depending on the order of elements.
We use the formula as an additional addend for loss in Algorithm~\ref{alg:latent-fitting}.

\subsection{Experiments}
In Figure~\ref{fig:Wp-disteq} and \ref{fig:Wp-disteq2}, we show the effect of using our regularization in the $\NWp$ space.
We can project images preserving visual and LPIPS quality, while keeping the $\Wp$ distribution as in generated images.
Similarly to Figure~\ref{fig:Wp-equalization}, we also notice difference in the strength of obtained attribute edition effect depending on loss coefficient of our distribution regularization.

Figures~\ref{fig:FWp-disteq} and \ref{fig:FWp-disteq2} show the effect of using distribution regularization for projections into $\FNWp{5}$. The regularization has a strong effect, making longer than 250 iterations optimization possible and avoid the problem described in Figure~\ref{fig:fitting-init-and-len}. 
We show that distributions are preserved when training longer, projected representations produce much higher quality images and it's possible to meaningfully edit them. We improve LPIPS score up to 17.5\% compared to projections in the same space without using the regularization. In total, we improve LPIPS score up to 30\% compared to projections into $\NWp$.

\begin{figure}
    \setlength{\tabcolsep}{0pt}
    \renewcommand{\arraystretch}{0}
    \centering
    
    \def\ssize{0.2}
    \def\lsize{0.2}
    \begin{tabular}{ccccccc}
        & $\lambda_{\text{dist}} = 0$ & $\lambda_{\text{dist}} = 0.1$ & $\lambda_{\text{dist}} = 0.2$ & $\lambda_{\text{dist}} = 0.5$ \\
        \vspace{0.5em} \\
        
        &
        \includegraphics[align=c,width=\ssize\textwidth]{plots/Wp_obama_NWp.pdf} &
        \includegraphics[align=c,width=\ssize\textwidth]{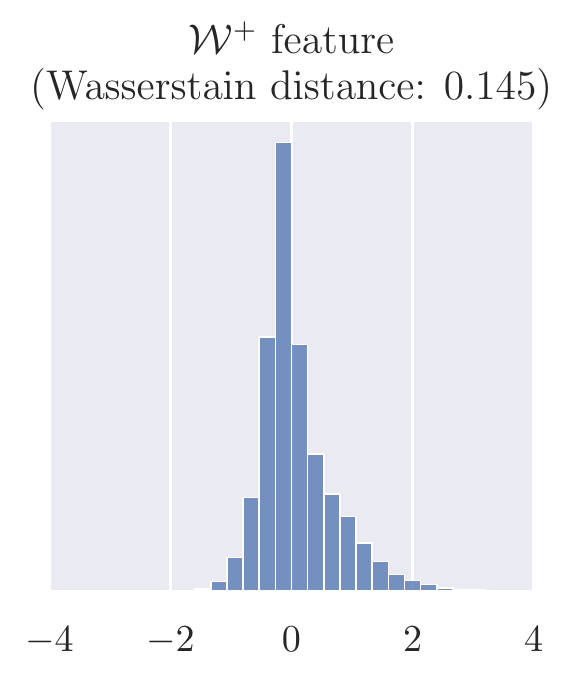} &
        \includegraphics[align=c,width=\ssize\textwidth]{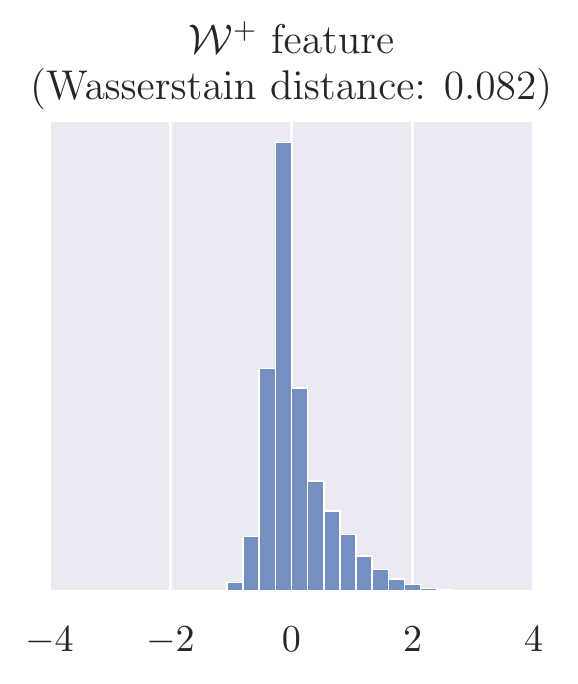} &
        \includegraphics[align=c,width=\ssize\textwidth]{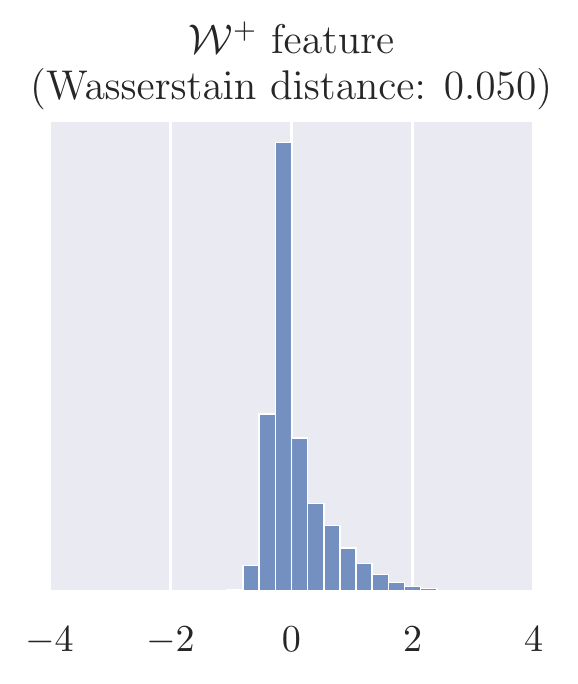} \\
        
        \vspace{0.25em} \\
        & \scriptsize LPIPS: 0.2458 & \scriptsize LPIPS: 0.2536 & \scriptsize LPIPS: 0.2454 & \scriptsize LPIPS: 0.2583 \\
        \vspace{0.25em} \\

        \rotatebox[]{90}{\parbox[c]{\lsize\textwidth}{\centering Projected image into $\NWp$}}\; &
        \includegraphics[align=c,width=\lsize\textwidth]{samples/basic_fit__obama_NWp.jpg} &
        \includegraphics[align=c,width=\lsize\textwidth]{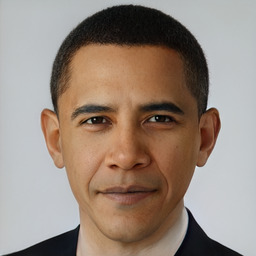} &
        \includegraphics[align=c,width=\lsize\textwidth]{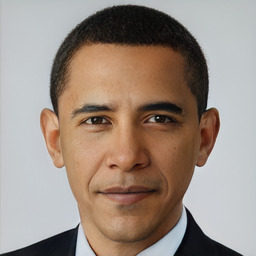} &
        \includegraphics[align=c,width=\lsize\textwidth]{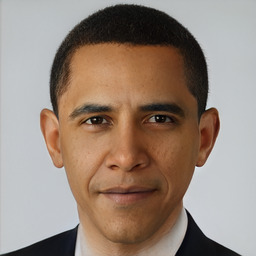} \\
        
        \rotatebox[]{90}{\parbox[c]{\lsize\textwidth}{\centering + Age }}\; &
        \includegraphics[align=c,width=\lsize\textwidth]{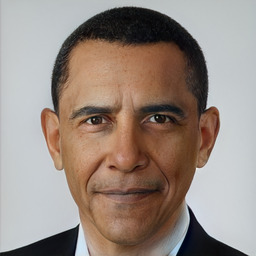} &
        \includegraphics[align=c,width=\lsize\textwidth]{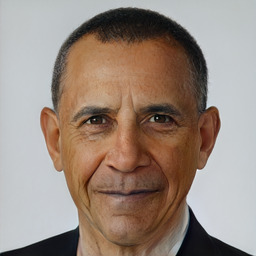} &
        \includegraphics[align=c,width=\lsize\textwidth]{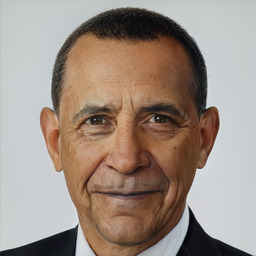} &
        \includegraphics[align=c,width=\lsize\textwidth]{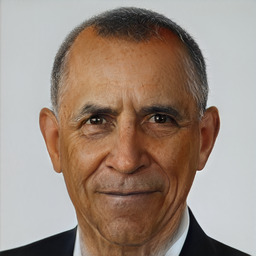} \\
        
        \rotatebox[]{90}{\parbox[c]{\lsize\textwidth}{\centering + Smile }}\; &
        \includegraphics[align=c,width=\lsize\textwidth]{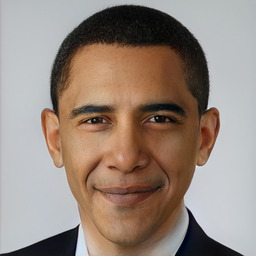} &
        \includegraphics[align=c,width=\lsize\textwidth]{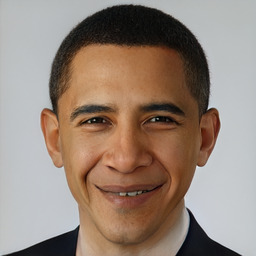} &
        \includegraphics[align=c,width=\lsize\textwidth]{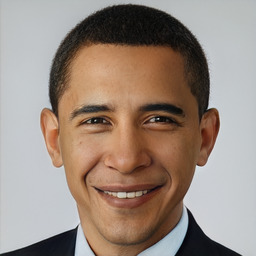} &
        \includegraphics[align=c,width=\lsize\textwidth]{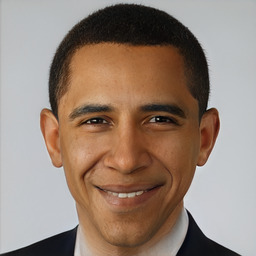} \\
        
        \rotatebox[]{90}{\parbox[c]{\lsize\textwidth}{\centering + Glasses }}\; &
        \includegraphics[align=c,width=\lsize\textwidth]{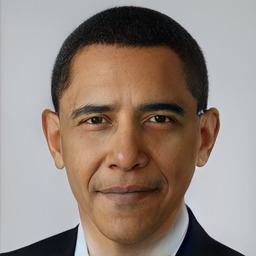} &
        \includegraphics[align=c,width=\lsize\textwidth]{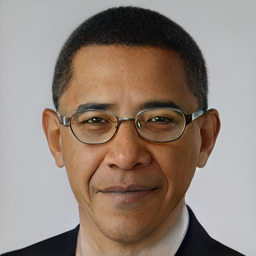} &
        \includegraphics[align=c,width=\lsize\textwidth]{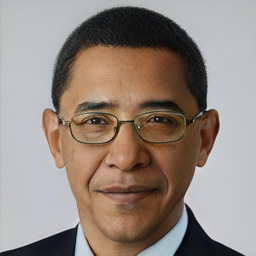} &
        \includegraphics[align=c,width=\lsize\textwidth]{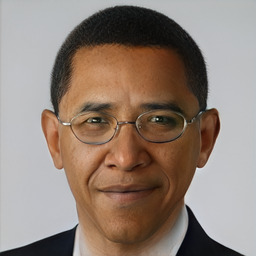} \\
    \end{tabular}
    
    \caption{
        We project images into $\NWp$ with additional loss described in Equation~\ref{eq:dist-reg} on $\Wp$ using coefficient $\lambda_{\text{dist}}$ (different coefficient in different columns).
        We obtain latent representations with distribution of $\Wp$ features very close to the generated images distribution.
        The Wasserstein distance (0.053 in the best case) is almost the same as for the expected generated case (0.047 $\pm$ 0.0284).
        The obtained latent representations have comparable quality and can be meaningfully edited by attribute vectors.
        For a given attribute, all images are edited with the same strength.
        Similarly to Figure~\ref{fig:Wp-equalization}, latent representations using distribution regularization are more sensitive to attribute editions, in particular in the latent representation optimized without distribution regularization, glasses aren't added at all, because of too low strength of the attribute vector.
    }
    \label{fig:Wp-disteq}
\end{figure}

\begin{figure}
    \setlength{\tabcolsep}{0pt}
    \renewcommand{\arraystretch}{0}
    \centering
    
    \def\ssize{0.2}
    \def\lsize{0.2}
    \begin{tabular}{ccccccc}
        & $\lambda_{\text{dist}} = 0$ & $\lambda_{\text{dist}} = 0.1$ & $\lambda_{\text{dist}} = 0.2$ & $\lambda_{\text{dist}} = 0.5$ \\
        \vspace{0.5em} \\
        
        &
        \includegraphics[align=c,width=\ssize\textwidth]{plots/Wp_trump_NWp.pdf} &
        \includegraphics[align=c,width=\ssize\textwidth]{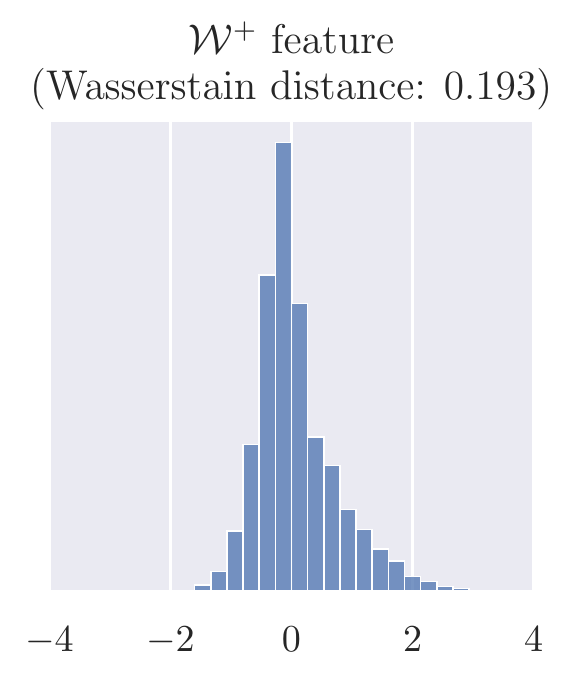} &
        \includegraphics[align=c,width=\ssize\textwidth]{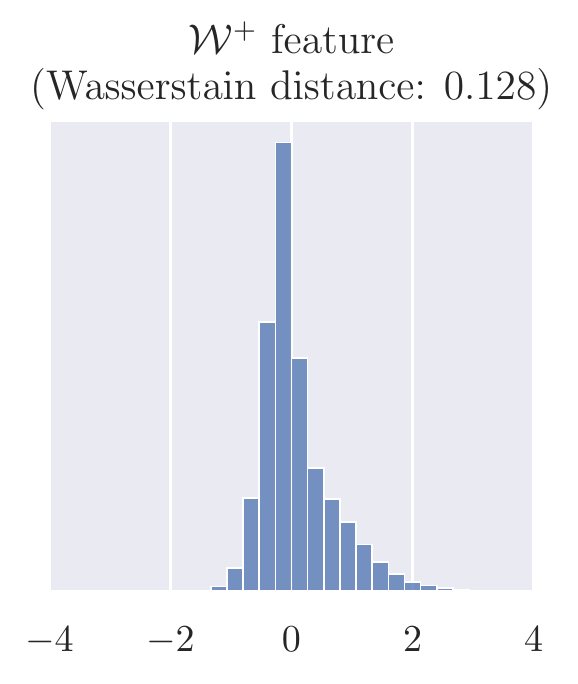} &
        \includegraphics[align=c,width=\ssize\textwidth]{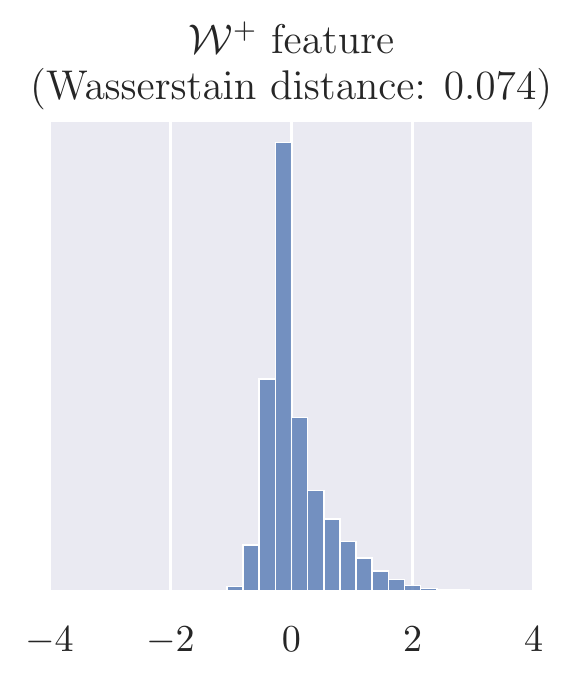} \\
        
        \vspace{0.25em} \\
        & \scriptsize LPIPS: 0.3330 & \scriptsize LPIPS: 0.3433 & \scriptsize LPIPS: 0.3415 & \scriptsize LPIPS: 0.3501
        \vspace{0.25em} \\

        \rotatebox[]{90}{\parbox[c]{\lsize\textwidth}{\centering Projected image}}\; &
        \includegraphics[align=c,width=\lsize\textwidth]{samples/basic_fit__trump_NWp.jpg} &
        \includegraphics[align=c,width=\lsize\textwidth]{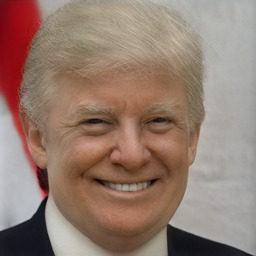} &
        \includegraphics[align=c,width=\lsize\textwidth]{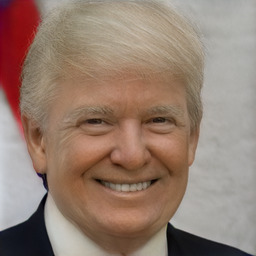} &
        \includegraphics[align=c,width=\lsize\textwidth]{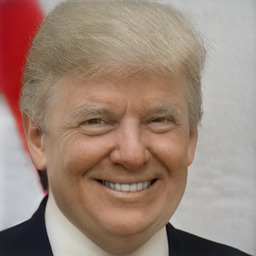} \\
        
        \rotatebox[]{90}{\parbox[c]{\lsize\textwidth}{\centering + Age }}\; &
        \includegraphics[align=c,width=\lsize\textwidth]{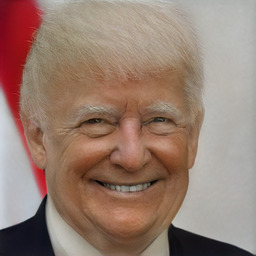} &
        \includegraphics[align=c,width=\lsize\textwidth]{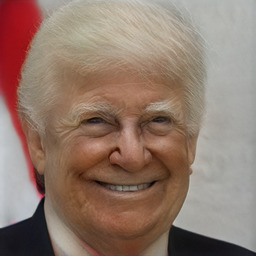} &
        \includegraphics[align=c,width=\lsize\textwidth]{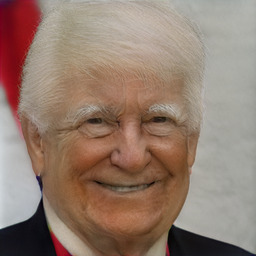} &
        \includegraphics[align=c,width=\lsize\textwidth]{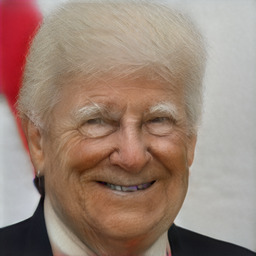} \\
        
        \rotatebox[]{90}{\parbox[c]{\lsize\textwidth}{\centering - Smile }}\; &
        \includegraphics[align=c,width=\lsize\textwidth]{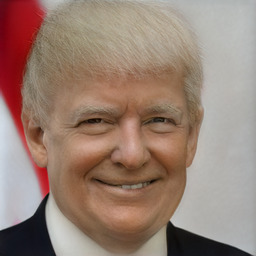} &
        \includegraphics[align=c,width=\lsize\textwidth]{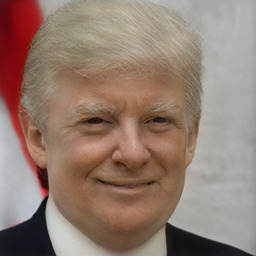} &
        \includegraphics[align=c,width=\lsize\textwidth]{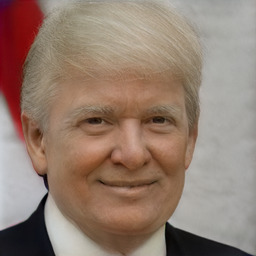} &
        \includegraphics[align=c,width=\lsize\textwidth]{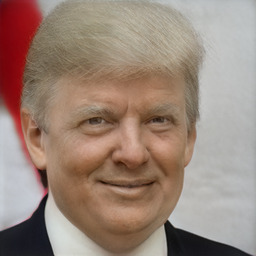} \\
        
        \rotatebox[]{90}{\parbox[c]{\lsize\textwidth}{\centering + Eyes open }}\; &
        \includegraphics[align=c,width=\lsize\textwidth]{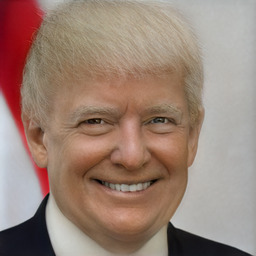} &
        \includegraphics[align=c,width=\lsize\textwidth]{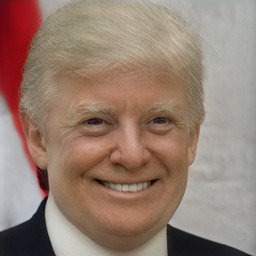} &
        \includegraphics[align=c,width=\lsize\textwidth]{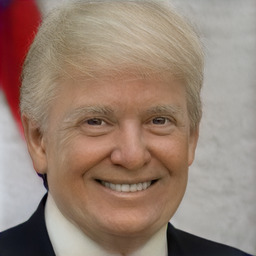} &
        \includegraphics[align=c,width=\lsize\textwidth]{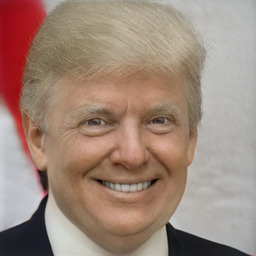} \\
    \end{tabular}
    
    \caption{
        The same as Figure~\ref{fig:Wp-disteq}, but with different image.
    }
    \label{fig:Wp-disteq2}
\end{figure}

\begin{figure}
    \setlength{\tabcolsep}{0pt}
    \renewcommand{\arraystretch}{0}
    \centering
    
    \def\ssize{0.133}
    \def\lsize{0.133}
    \begin{tabular}{cccccc}
        & \parbox[c]{\lsize\textwidth}{\centering \small $\lambda_{\text{dist}} = 0$   \\ \scriptsize 250 iterations}
        & \parbox[c]{\lsize\textwidth}{\centering \small $\lambda_{\text{dist}} = 0$   \\ \scriptsize till convergence}
        & \parbox[c]{\lsize\textwidth}{\centering \small $\lambda_{\text{dist}} = 0.2$ \\ \scriptsize 250 iterations}
        & \parbox[c]{\lsize\textwidth}{\centering \small $\lambda_{\text{dist}} = 0.2$ \\ \scriptsize 1000 iterations}
        & \parbox[c]{\lsize\textwidth}{\centering \small $\lambda_{\text{dist}} = 0.2$ \\ \scriptsize till convergence}
        \\
        \vspace{0.5em} \\
        
        &
        \includegraphics[align=c,width=\ssize\textwidth]{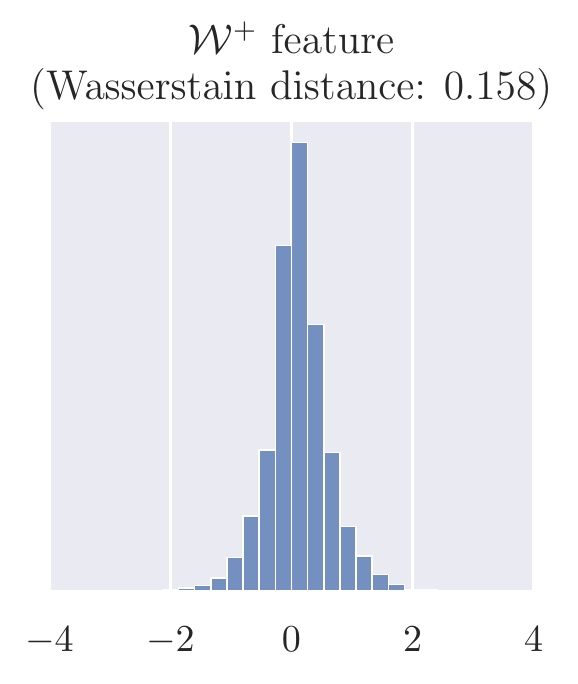} &
        \includegraphics[align=c,width=\ssize\textwidth]{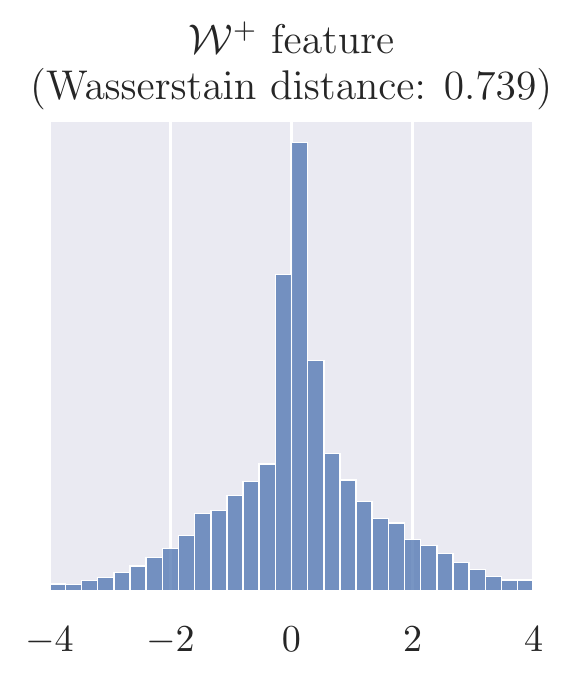} &
        \includegraphics[align=c,width=\ssize\textwidth]{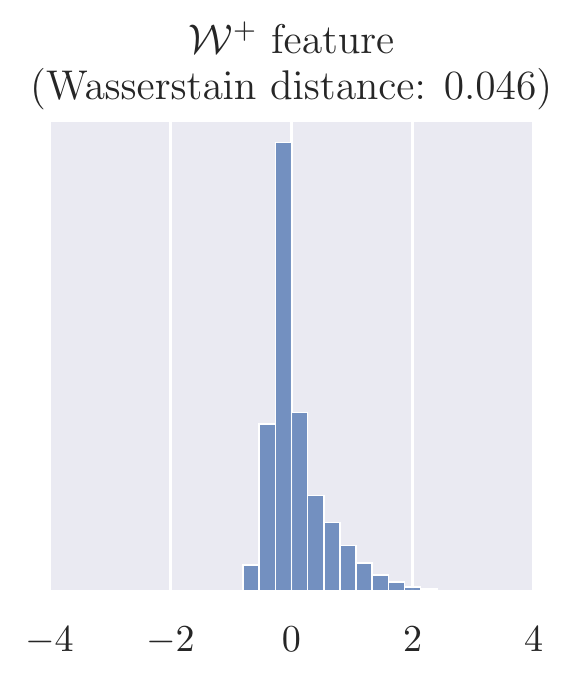} &
        \includegraphics[align=c,width=\ssize\textwidth]{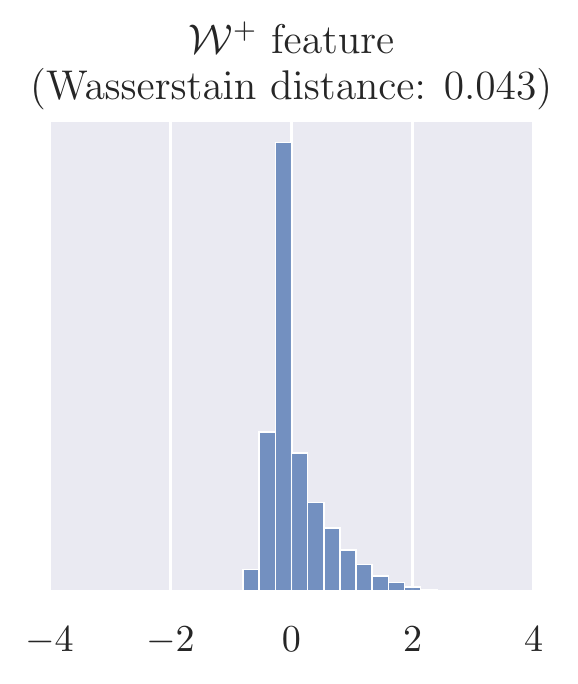} &
        \includegraphics[align=c,width=\ssize\textwidth]{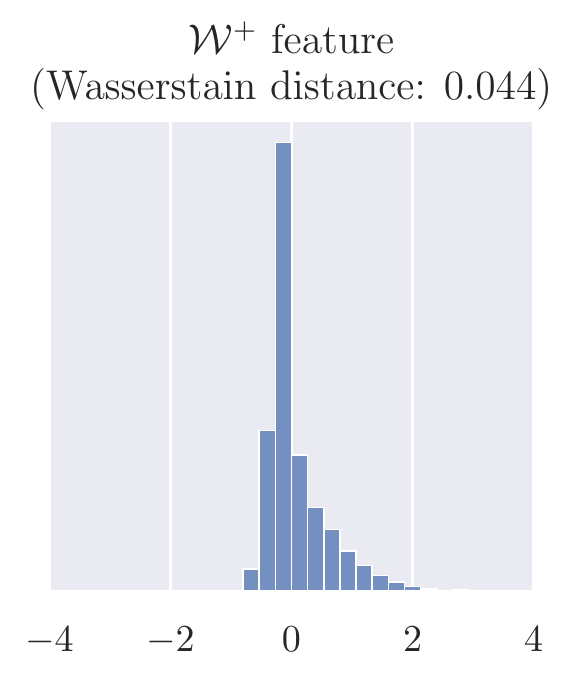} \\
        
        &
        \includegraphics[align=c,width=\ssize\textwidth]{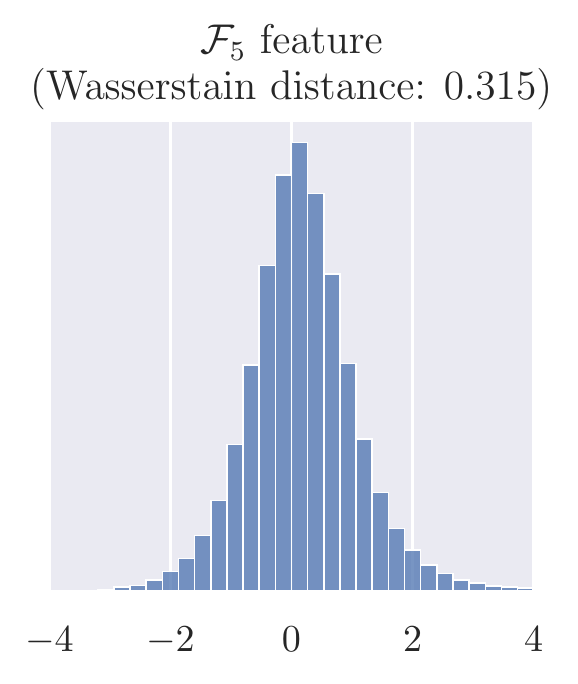} &
        \includegraphics[align=c,width=\ssize\textwidth]{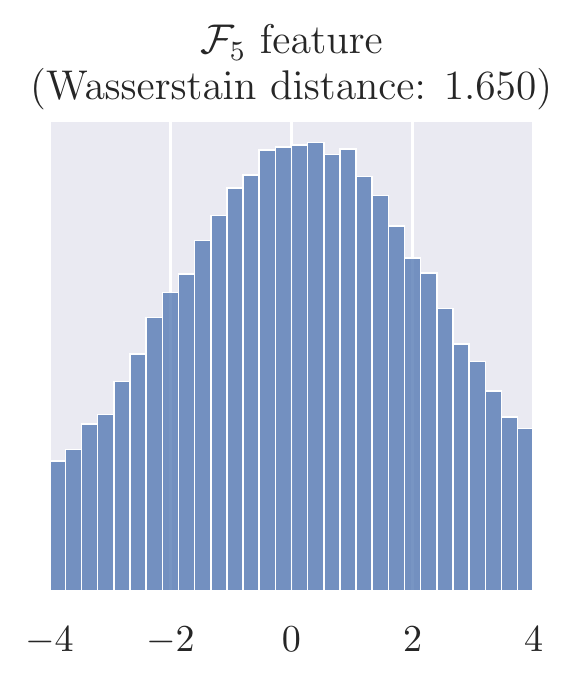} &
        \includegraphics[align=c,width=\ssize\textwidth]{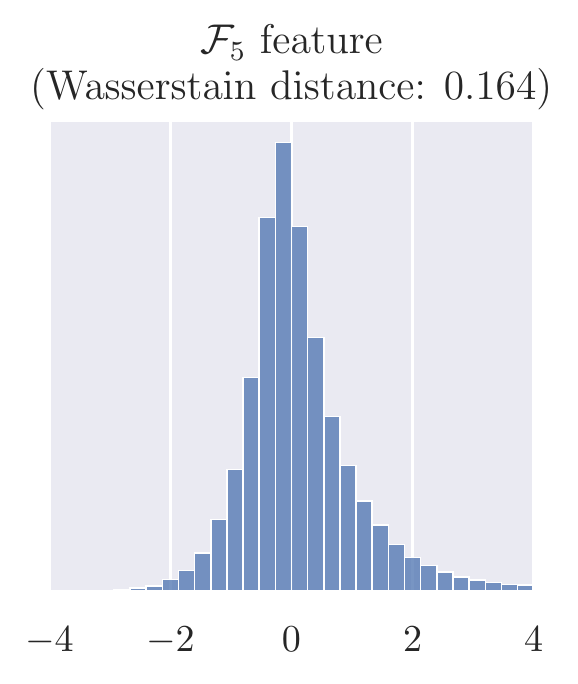} &
        \includegraphics[align=c,width=\ssize\textwidth]{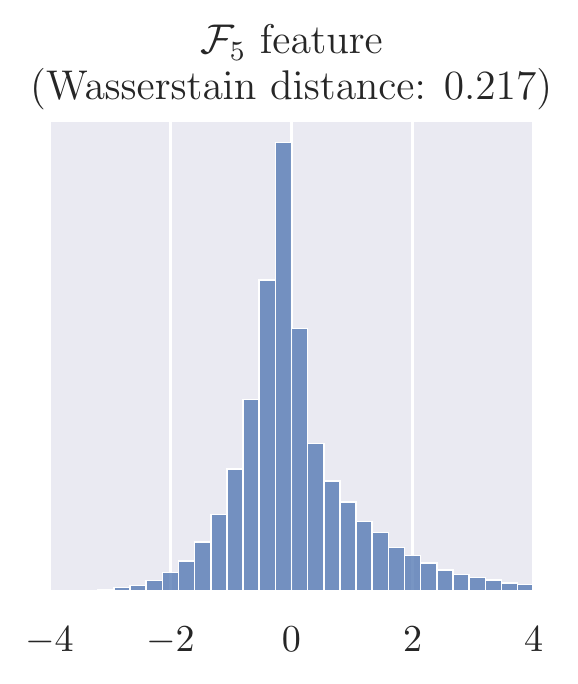} &
        \includegraphics[align=c,width=\ssize\textwidth]{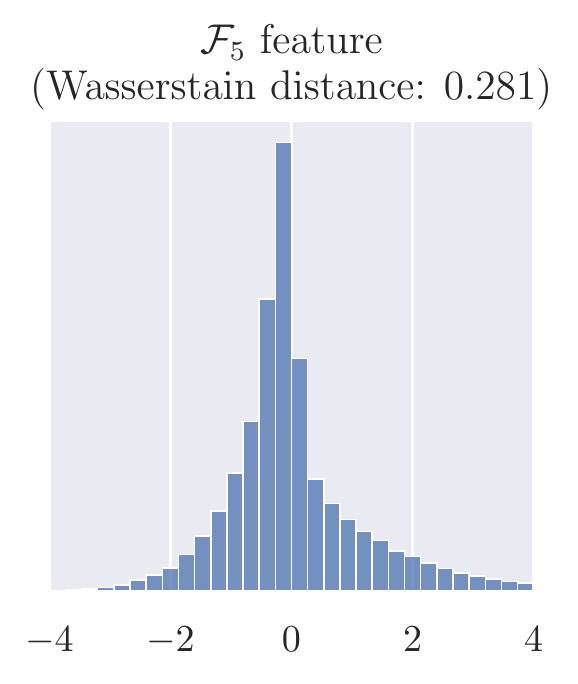} \\
        
        &
        \includegraphics[align=c,width=\ssize\textwidth]{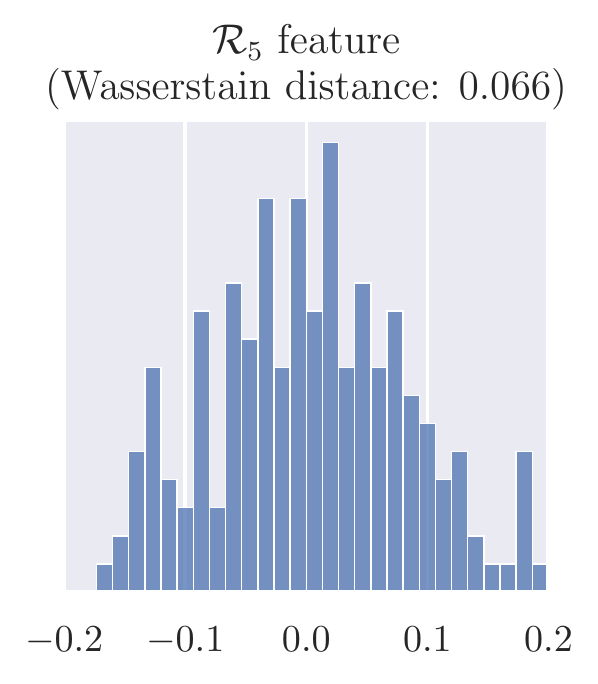} &
        \includegraphics[align=c,width=\ssize\textwidth]{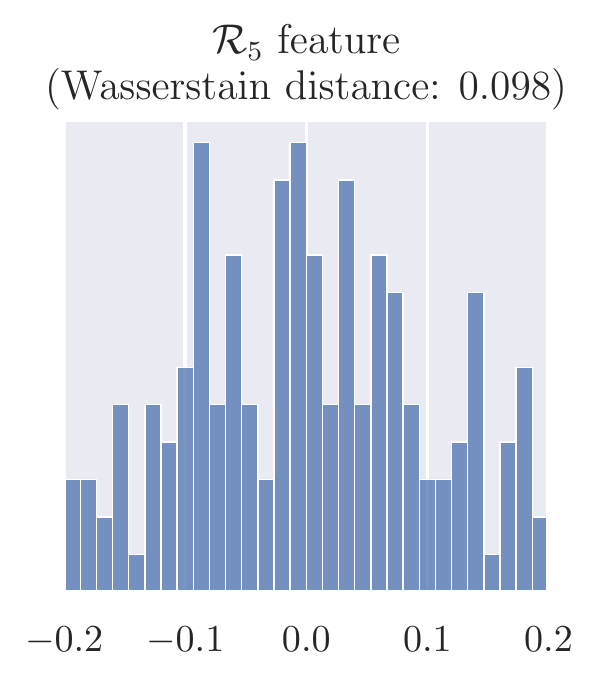} &
        \includegraphics[align=c,width=\ssize\textwidth]{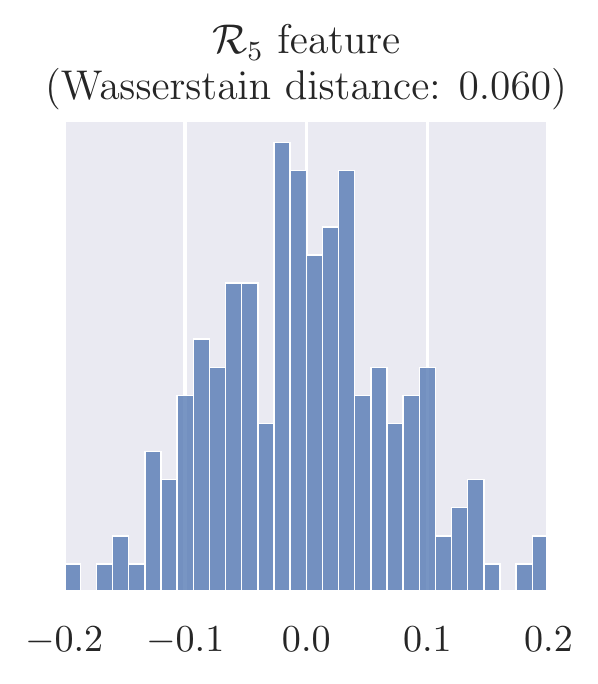} &
        \includegraphics[align=c,width=\ssize\textwidth]{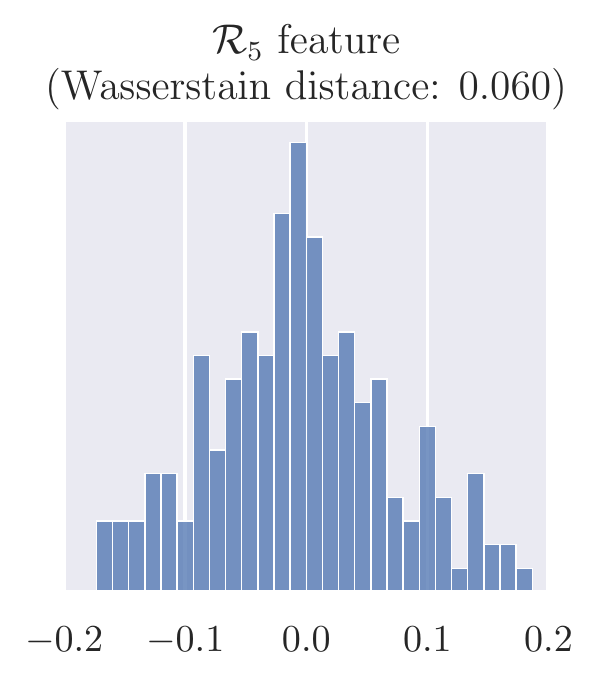} &
        \includegraphics[align=c,width=\ssize\textwidth]{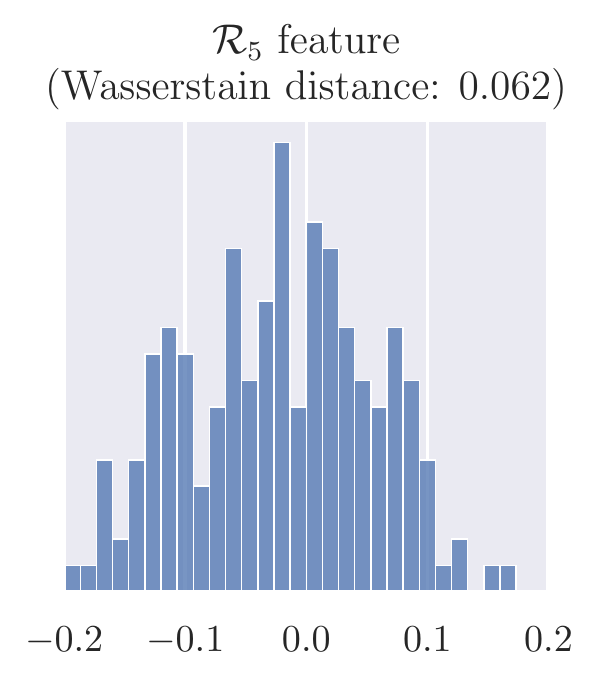} \\
        
        \vspace{0.25em} \\
        & \scriptsize LPIPS: 0.2076 & \scriptsize LPIPS: 0.1614 & \scriptsize LPIPS: 0.2182 & \scriptsize LPIPS: 0.1887 & \scriptsize LPIPS: 0.1712 \\
        \vspace{0.25em} \\
        
        \rotatebox[]{90}{\parbox[c]{\lsize\textwidth}{\centering Projected image}}\; &
        \includegraphics[align=c,width=\lsize\textwidth]{samples/basic_fit__obama_F5NWp.jpg} &
        \includegraphics[align=c,width=\lsize\textwidth]{samples/basic_fit__obama_F5NWp_convergence.jpg} &
        \includegraphics[align=c,width=\lsize\textwidth]{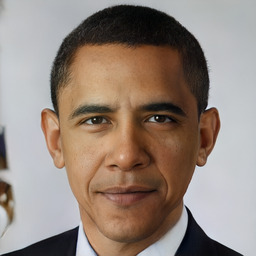} &
        \includegraphics[align=c,width=\lsize\textwidth]{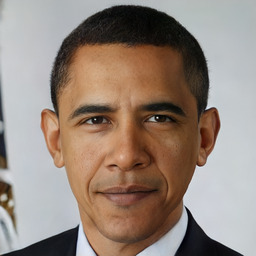} &
        \includegraphics[align=c,width=\lsize\textwidth]{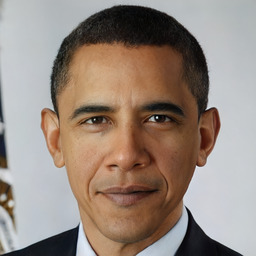} \\
        
        \rotatebox[]{90}{\parbox[c]{\lsize\textwidth}{\centering + Age }}\; &
        \includegraphics[align=c,width=\lsize\textwidth]{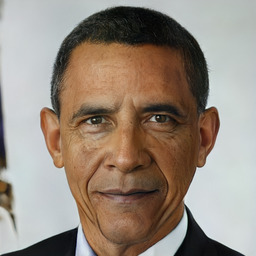} &
        \includegraphics[align=c,width=\lsize\textwidth]{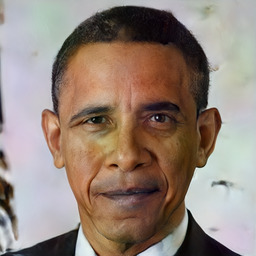} &
        \includegraphics[align=c,width=\lsize\textwidth]{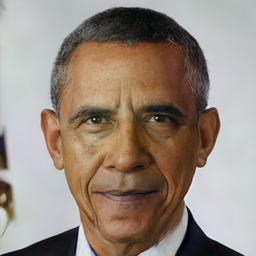} &
        \includegraphics[align=c,width=\lsize\textwidth]{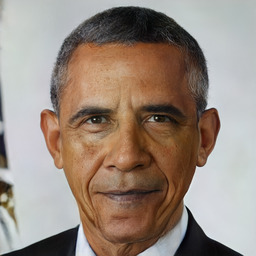} &
        \includegraphics[align=c,width=\lsize\textwidth]{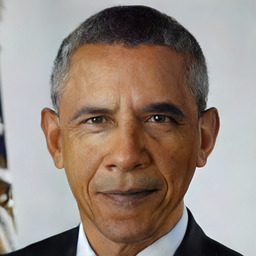} \\
        
        \rotatebox[]{90}{\parbox[c]{\lsize\textwidth}{\centering - Age }}\; &
        \includegraphics[align=c,width=\lsize\textwidth]{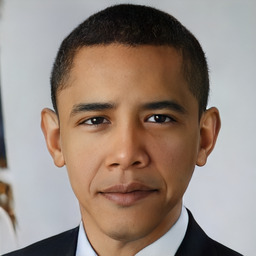} &
        \includegraphics[align=c,width=\lsize\textwidth]{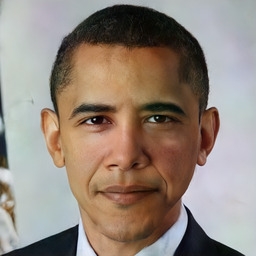} &
        \includegraphics[align=c,width=\lsize\textwidth]{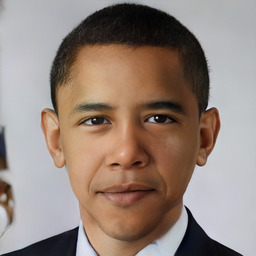} &
        \includegraphics[align=c,width=\lsize\textwidth]{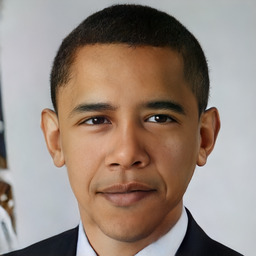} &
        \includegraphics[align=c,width=\lsize\textwidth]{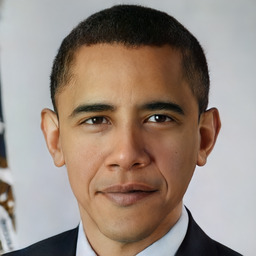} \\
        
        \rotatebox[]{90}{\parbox[c]{\lsize\textwidth}{\centering + Smile }}\; &
        \includegraphics[align=c,width=\lsize\textwidth]{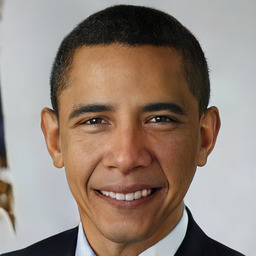} &
        \includegraphics[align=c,width=\lsize\textwidth]{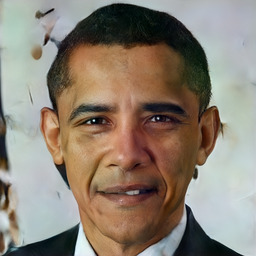} &
        \includegraphics[align=c,width=\lsize\textwidth]{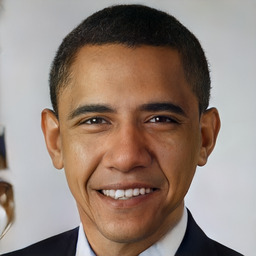} &
        \includegraphics[align=c,width=\lsize\textwidth]{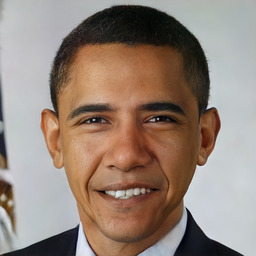} &
        \includegraphics[align=c,width=\lsize\textwidth]{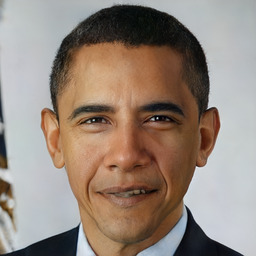} \\
        
        \rotatebox[]{90}{\parbox[c]{\lsize\textwidth}{\centering + Eyes closed }}\; &
        \includegraphics[align=c,width=\lsize\textwidth]{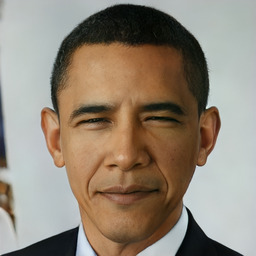} &
        \includegraphics[align=c,width=\lsize\textwidth]{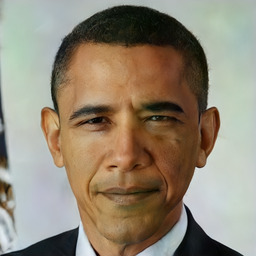} &
        \includegraphics[align=c,width=\lsize\textwidth]{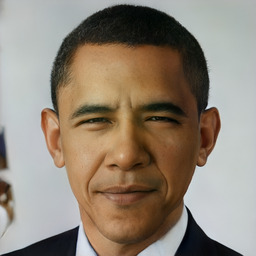} &
        \includegraphics[align=c,width=\lsize\textwidth]{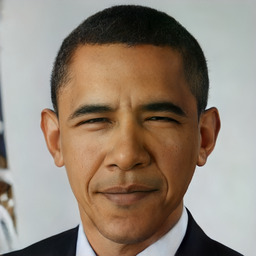} &
        \includegraphics[align=c,width=\lsize\textwidth]{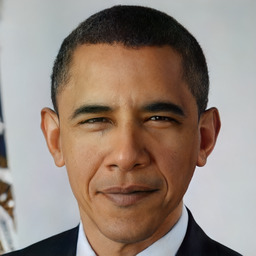} \\
    \end{tabular}
    
    \caption{
        We project images into $\FNWp{5}$ with distribution regularization on $\Wp$, $\mathcal{F}_5$ and $\mathcal{R}_5$ using coefficient $\lambda_{\text{dist}}$.
        Distribution preservation is especially visible for $\mathcal{F}_5$.
        $\mathcal{F}_5$ distribution discrepancy for the latent representation optimized till convergence without distribution regularization (2nd column) is a presumed culprit for not capturing semantic information and problems with attribute editing. Such features are out-of-sample for attribute model, and may not generalize well for changes in style layers.
        We can train longer without losing ability for attribute editing.
        Obtained latent representations optimized with distribution regularization have better quality (up to 17.5\% LPIPS improvement).
        Compared to projections into the $\NWp$ space (LPIPS: 0.2458), LPIPS score improves by 30\%.
    }
    \label{fig:FWp-disteq}
\end{figure}

\begin{figure}
    \setlength{\tabcolsep}{0pt}
    \renewcommand{\arraystretch}{0}
    \centering
    
    \def\ssize{0.14}
    \def\lsize{0.14}
    \begin{tabular}{cccccc}
        & \parbox[c]{\lsize\textwidth}{\centering \small $\lambda_{\text{dist}} = 0$   \\ \scriptsize 250 iterations}
        & \parbox[c]{\lsize\textwidth}{\centering \small $\lambda_{\text{dist}} = 0$   \\ \scriptsize till convergence}
        & \parbox[c]{\lsize\textwidth}{\centering \small $\lambda_{\text{dist}} = 0.2$ \\ \scriptsize 250 iterations}
        & \parbox[c]{\lsize\textwidth}{\centering \small $\lambda_{\text{dist}} = 0.2$ \\ \scriptsize 1000 iterations}
        & \parbox[c]{\lsize\textwidth}{\centering \small $\lambda_{\text{dist}} = 0.2$ \\ \scriptsize till convergence}
        \\
        \vspace{0.5em} \\
        
        &
        \includegraphics[align=c,width=\ssize\textwidth]{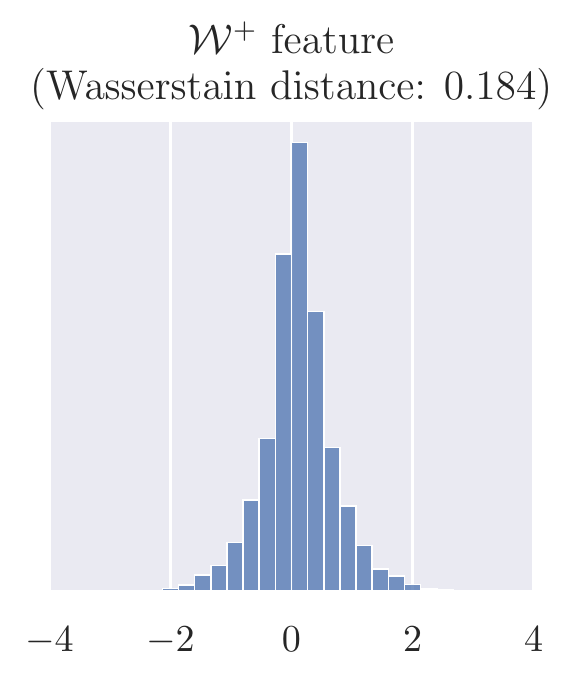} &
        \includegraphics[align=c,width=\ssize\textwidth]{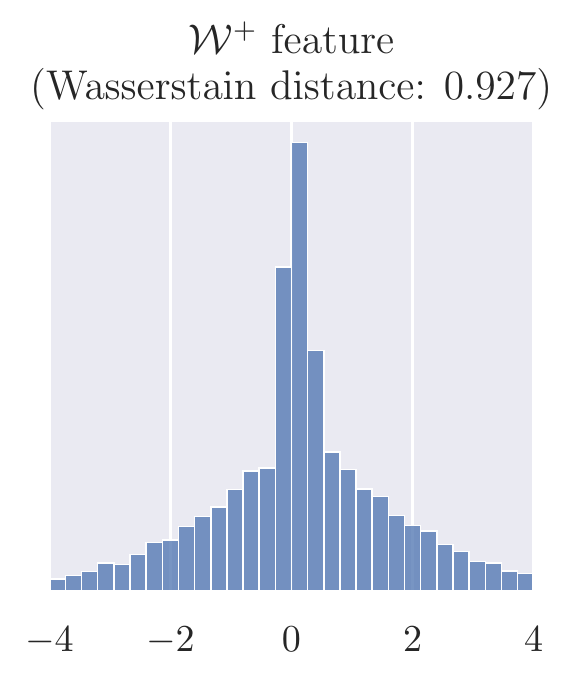} &
        \includegraphics[align=c,width=\ssize\textwidth]{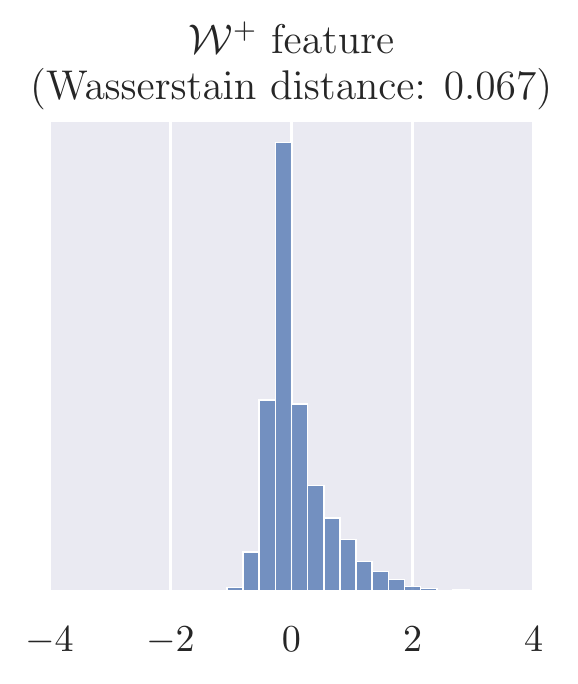} &
        \includegraphics[align=c,width=\ssize\textwidth]{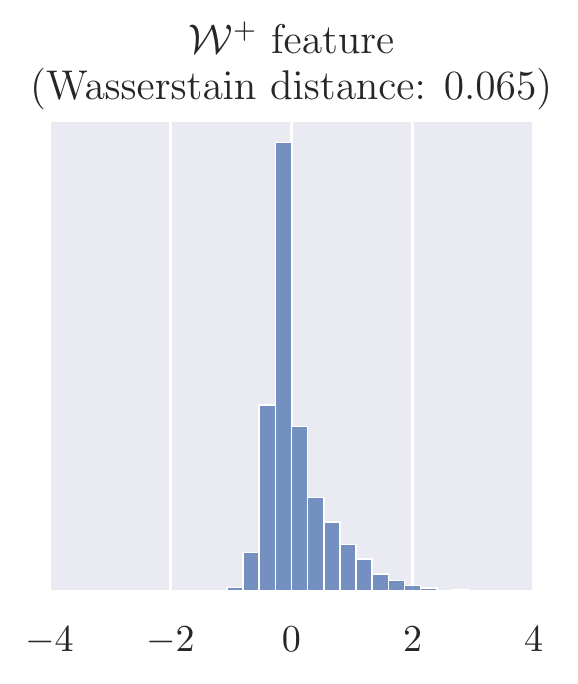} &
        \includegraphics[align=c,width=\ssize\textwidth]{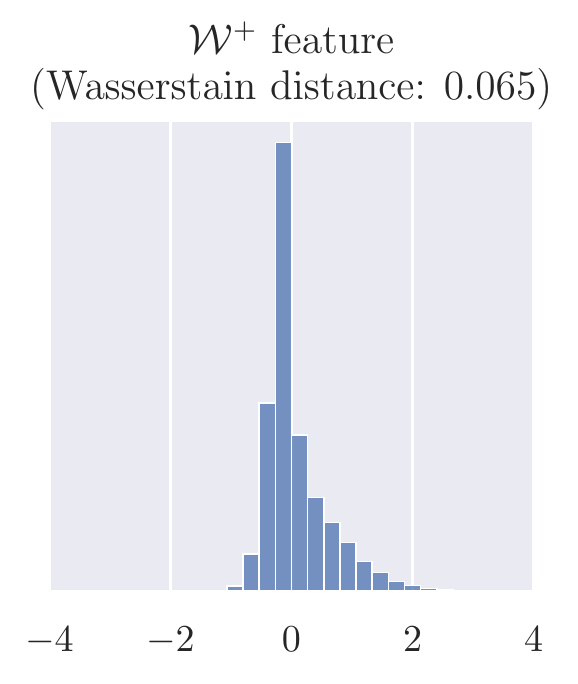} \\
        
        &
        \includegraphics[align=c,width=\ssize\textwidth]{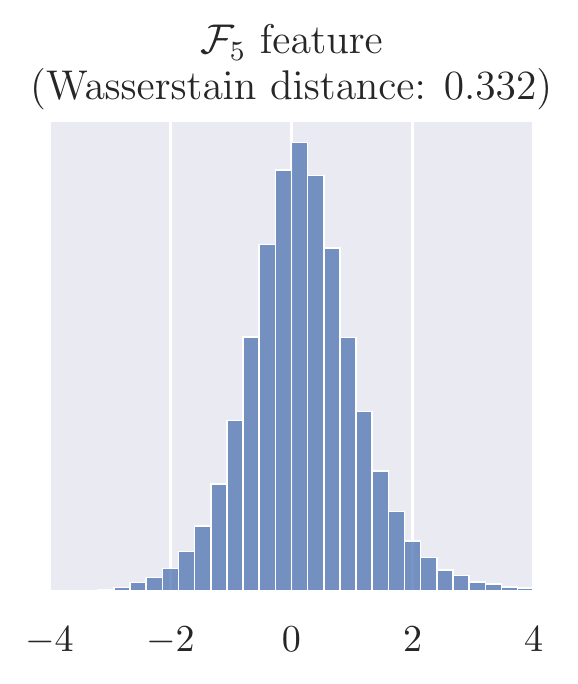} &
        \includegraphics[align=c,width=\ssize\textwidth]{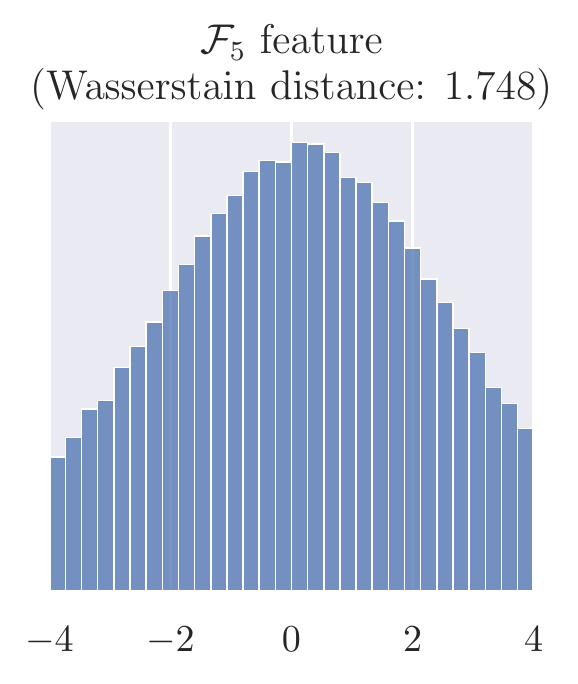} &
        \includegraphics[align=c,width=\ssize\textwidth]{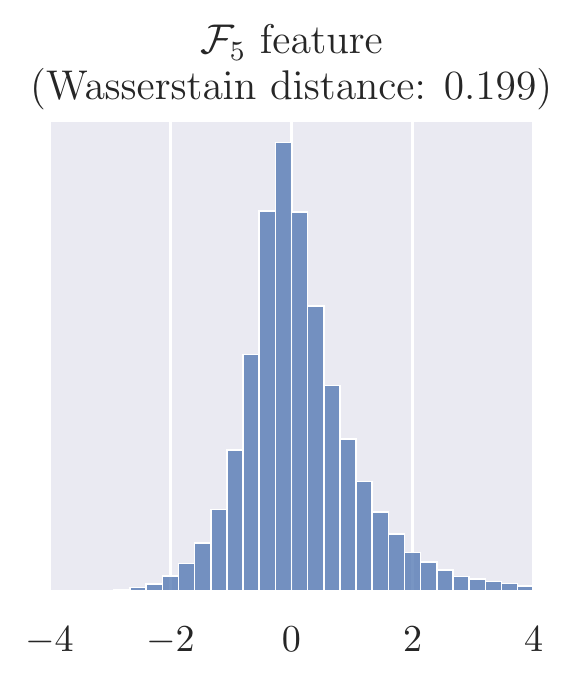} &
        \includegraphics[align=c,width=\ssize\textwidth]{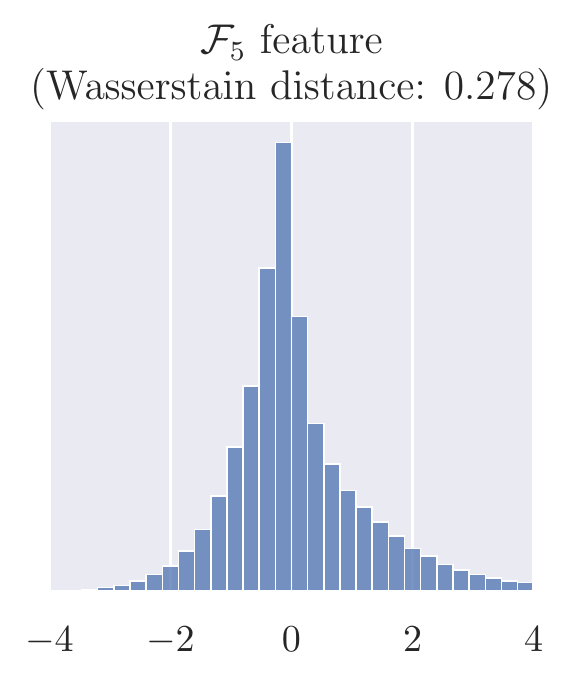} &
        \includegraphics[align=c,width=\ssize\textwidth]{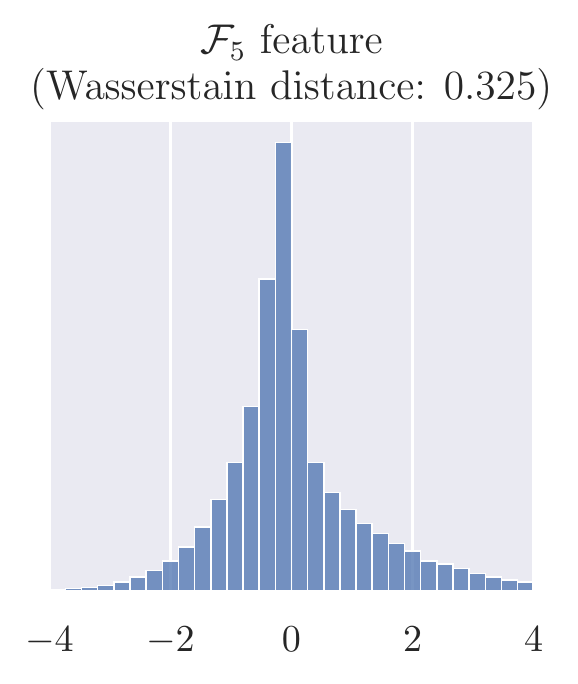} \\
        
        &
        \includegraphics[align=c,width=\ssize\textwidth]{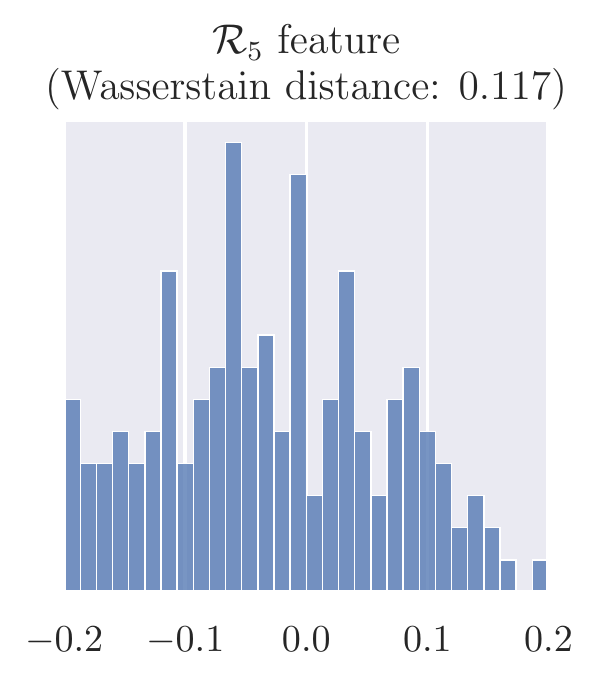} &
        \includegraphics[align=c,width=\ssize\textwidth]{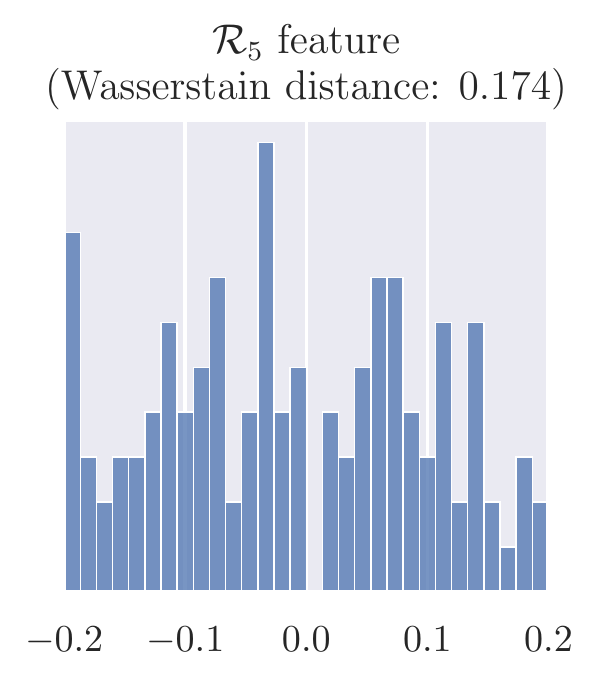} &
        \includegraphics[align=c,width=\ssize\textwidth]{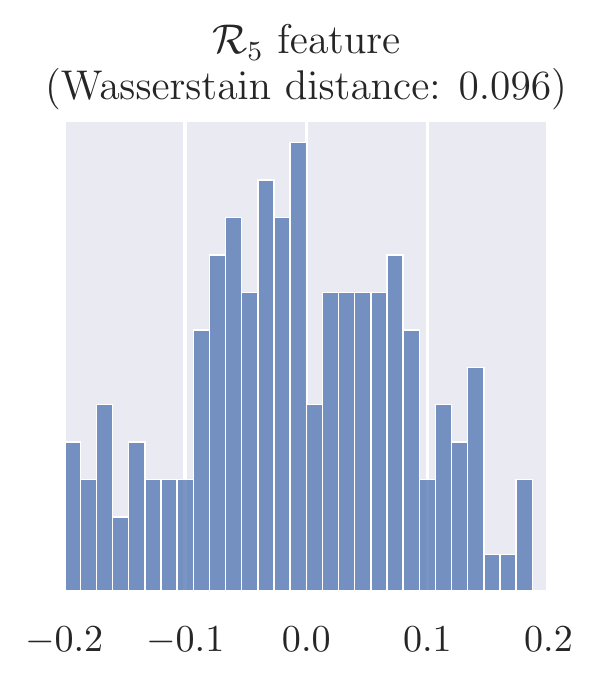} &
        \includegraphics[align=c,width=\ssize\textwidth]{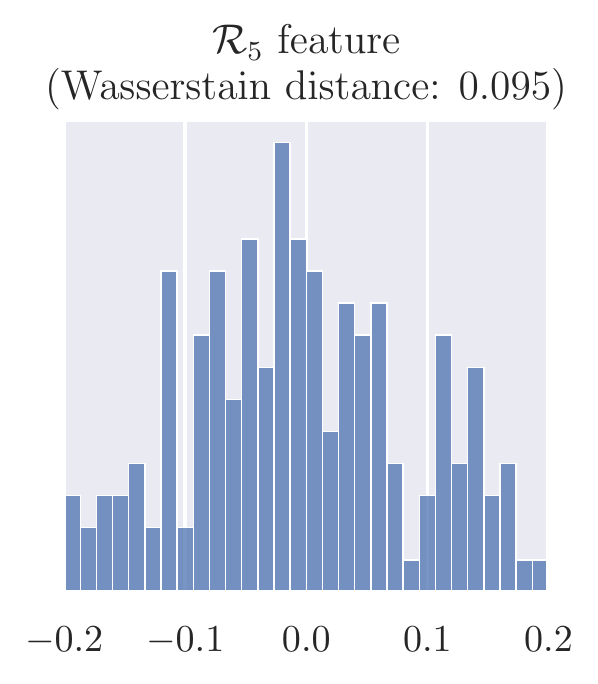} &
        \includegraphics[align=c,width=\ssize\textwidth]{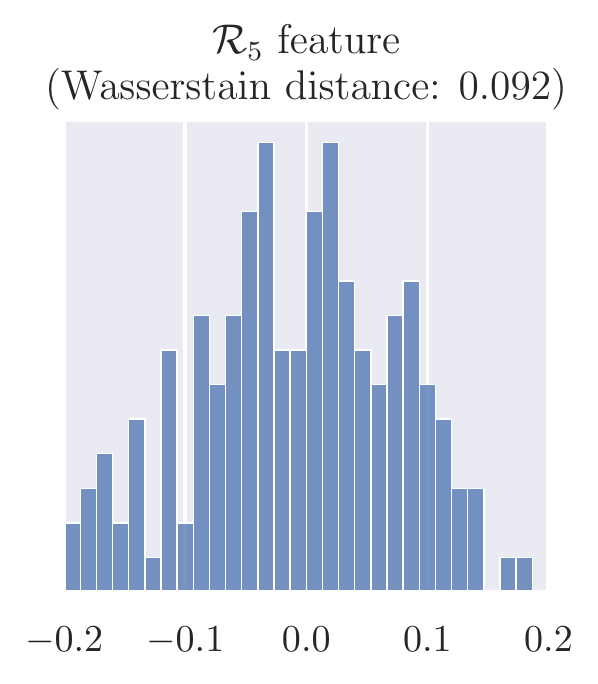} \\
        
        \vspace{0.25em} \\
        & \scriptsize LPIPS: 0.2985 & \scriptsize LPIPS: 0.2430 & \scriptsize LPIPS: 0.3132 & \scriptsize LPIPS: 0.2767 & \scriptsize LPIPS: 0.2496 \\
        \vspace{0.25em} \\
        
        \rotatebox[]{90}{\parbox[c]{\lsize\textwidth}{\centering Projected image}}\; &
        \includegraphics[align=c,width=\lsize\textwidth]{samples/basic_fit__trump_F5NWp.jpg} &
        \includegraphics[align=c,width=\lsize\textwidth]{samples/basic_fit__trump_F5NWp_convergence.jpg} &
        \includegraphics[align=c,width=\lsize\textwidth]{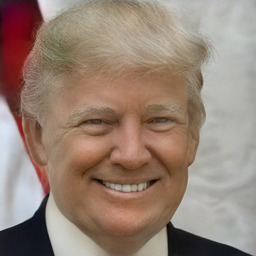} &
        \includegraphics[align=c,width=\lsize\textwidth]{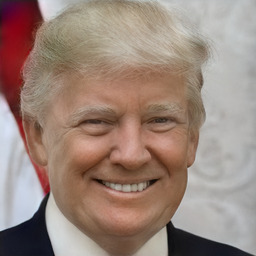} &
        \includegraphics[align=c,width=\lsize\textwidth]{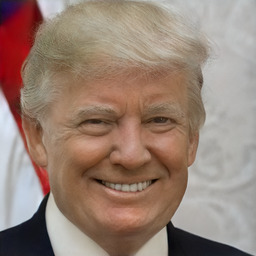} \\
        
        \rotatebox[]{90}{\parbox[c]{\lsize\textwidth}{\centering + Age }}\; &
        \includegraphics[align=c,width=\lsize\textwidth]{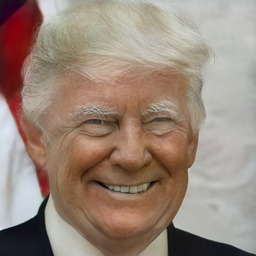} &
        \includegraphics[align=c,width=\lsize\textwidth]{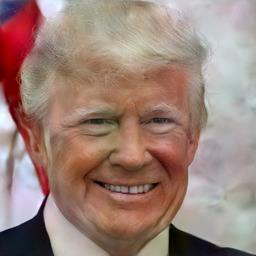} &
        \includegraphics[align=c,width=\lsize\textwidth]{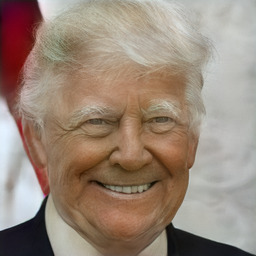} &
        \includegraphics[align=c,width=\lsize\textwidth]{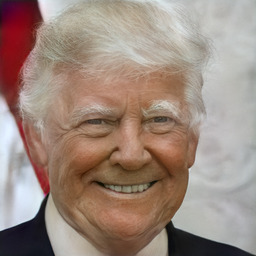} &
        \includegraphics[align=c,width=\lsize\textwidth]{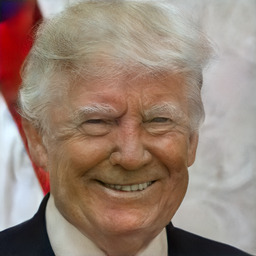} \\
        
        \rotatebox[]{90}{\parbox[c]{\lsize\textwidth}{\centering - Age }}\; &
        \includegraphics[align=c,width=\lsize\textwidth]{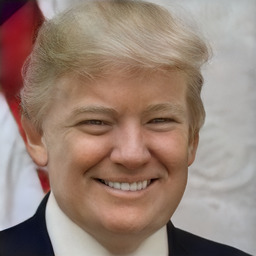} &
        \includegraphics[align=c,width=\lsize\textwidth]{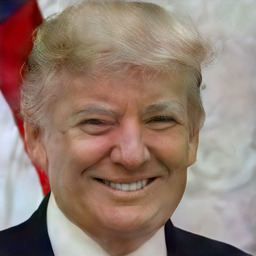} &
        \includegraphics[align=c,width=\lsize\textwidth]{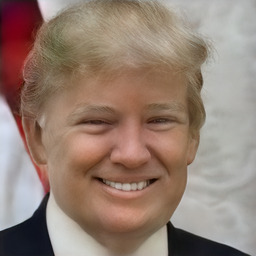} &
        \includegraphics[align=c,width=\lsize\textwidth]{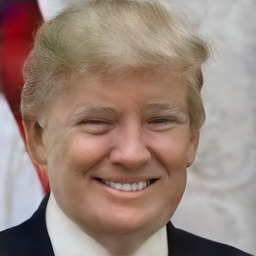} &
        \includegraphics[align=c,width=\lsize\textwidth]{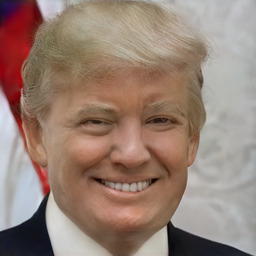} \\
        
        \rotatebox[]{90}{\parbox[c]{\lsize\textwidth}{\centering - Smile }}\; &
        \includegraphics[align=c,width=\lsize\textwidth]{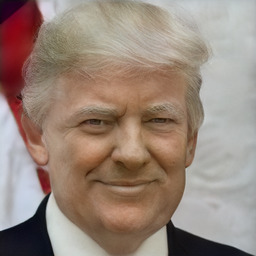} &
        \includegraphics[align=c,width=\lsize\textwidth]{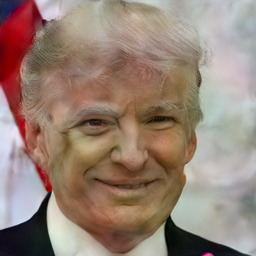} &
        \includegraphics[align=c,width=\lsize\textwidth]{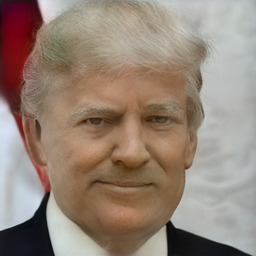} &
        \includegraphics[align=c,width=\lsize\textwidth]{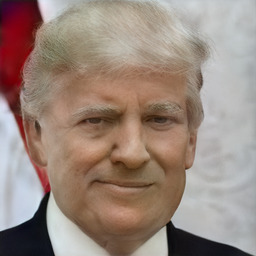} &
        \includegraphics[align=c,width=\lsize\textwidth]{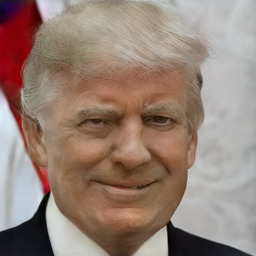} \\
        
        \rotatebox[]{90}{\parbox[c]{\lsize\textwidth}{\centering + Eyes opened }}\; &
        \includegraphics[align=c,width=\lsize\textwidth]{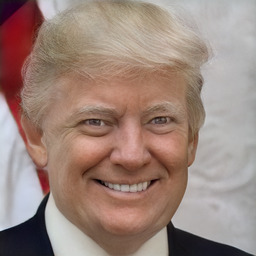} &
        \includegraphics[align=c,width=\lsize\textwidth]{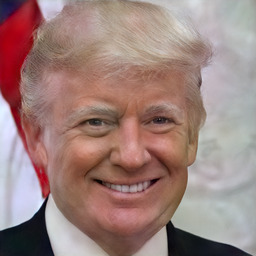} &
        \includegraphics[align=c,width=\lsize\textwidth]{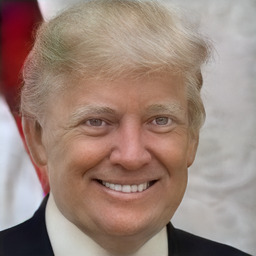} &
        \includegraphics[align=c,width=\lsize\textwidth]{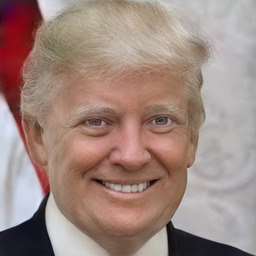} &
        \includegraphics[align=c,width=\lsize\textwidth]{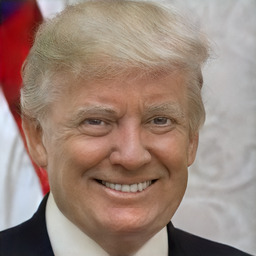} \\
    \end{tabular}
    
    \caption{
        The same as Figure~\ref{fig:FWp-disteq}, but with different image.
    }
    \label{fig:FWp-disteq2}
\end{figure}

%
%
%
%
%
%
%
%
%

\chapter{GAN Training on Spatial Representations}
\label{chap:train}
In this chapter, we train StyleGAN2 models with spatial latent spaces.
We propose to use a training procedure from StyleGAN2-ADA \cite{DBLP:journals/corr/abs-2006-06676},
extending latent representations by $\mathcal{F}_i$ and $\mathcal{R}_i$, \ie we train in the $\mathcal{FNZ}_i$ space.
Our idea is to sample feature maps from a simple distribution, \eg $\mathcal{N}\funarg{0, I}$.
Because the modified generator is a fully convolutional neural network,
and the input feature map is sampled in a spatially independent way,
distant objects on the output image also should be independent.

\section{Approach}

As we showed in previous sections, even early feature maps keep semantically meaningful signal.  
For face generation task, independent sampling in feature maps prevents the model from forming and arranging parts on their expected spatial positions,
\eg eyes should be at the same height, nose should be between eyes, \etc.
However for some dataset, such properties are not needed. For example satellite image datasets, which can be translated and rotated freely, and distant objects on images are usually visually not dependent on each other.

There are three main reasons why we suspect that such representation may be more suitable for such datasets:
\begin{itemize}
    \item In original StyleGAN2, the first feature map is learned, therefore it may contain some
        unwanted semantic information, \eg center of images having different properties than edge of images.
        Representations containing $\mathcal{F}_i$ and $\mathcal{R}_i$ are equivariant to translation.
        We show an example of a related artifact for the non-spatial model in Figure~\ref{fig:train-vanilla-background}.
    \item Spatial representations have more degrees of freedom, having ability to cover the input distribution better.
        For example the original StyleGAN2 training latent space $\mathcal{Z}$ has only 512 parameters whereas
        $\mathcal{F}_3$ has $512 \times 8 \times 8 = 32\,768$ and $\mathcal{F}_5$ has $512 \times 16 \times 16 = 131\,072$.
    \item Input latent maps can be generated with different sizes, allowing to generate consistent images of arbitrary sizes.
\end{itemize}


One possibility to sample input feature tensors is to use $\mathcal{N}\funarg{0, I}$.
However, we suspect that full spatial independence in the distribution may result in different output image distribution,
for example seas, lakes or forests take often a significant part of the image, and independence in input feature maps prevents from learning common information across wider area.

\def\gaussiandist{blurred normal noise distribution}
\def\Gaussiandist{Blurred normal noise distribution}

Therefore, we propose a \textit{\gaussiandist}.
To sample a variable, we randomize a feature map from $\mathcal{N}\funarg{0, I}$, and
apply gaussian blur for all channels with linearly increasing blurring coefficients.
Algorithm~\ref{alg:gaussiandist-sampling} shows a procedure with details.

We visualize a sample element from that distribution in Figure~\ref{fig:gaussiandist-sample}.
We want to give both spatially dependent and independent information on various scale,
in a way that the closer to each other points are, the more common signal they have.
For example, the generator may use later channels to decide water and land placement,
middle channels for city roads, and early channels for stochastic variance.

\begin{algorithm}
    \caption{\Gaussiandist{} sampling}
    \label{alg:gaussiandist-sampling}
    \begin{algorithmic}
        \Require{ \\
            \;\; $C, H, W$ -- number of channels and spatial dimensions of a $\mathcal{F}_i$ feature map \\
            \;\; $k$ -- padding for gaussian blur \\
            \;\; $\sigma_{\mathrm{max}}$ -- max standard deviation for gaussian blur \\
        }
        \vspace{0.5em}
        \State $x \gets \text{sample from $\mathcal{N}\funarg{0, I}$ of size $C \times (H + 2k) \times (W + 2k)$}$
        \State $y \gets \text{an empty $C \times H \times W$ tensor}$
        \For{$i \text{ in } [0, \ldots, C - 1]$}
        \State $\sigma \gets \frac{i}{C - 1} \sigma_{\mathrm{max}}$
        \State $K \gets \text{$(2k+1) \times (2k+1)$ gaussian blur kernel with standard deviation $\sigma$}$
        \State $z \gets \text{convolve $x_i$ using kernel $K$}$
        \State $y_i \gets z \cdot \sqrt{\sum_{k \in K}k^2}$
        \Comment{\algparbox{0.475\textwidth}{This multiplication is needed to obtain a unit standard deviation in the expected case. See \ref{preserve-std-for-blur} for proof and details.}}
        \EndFor
        \State \Return $y$
    \end{algorithmic}
\end{algorithm}

\begin{figure}
    \setlength{\tabcolsep}{0.15em}
    \renewcommand{\arraystretch}{0}
    \centering
    
    \def\ssize{0.2}
    \def\shsize{0.41}
    \begin{tabular}{ccccc}
        \vspace{0.5em} \\
        & $n = 0$ & $n = 32$ & $n = 64$ & $n = 96$ \\
        \vspace{0.3em} \\
        
        \multirow{2}{*}{\includegraphics[height=\shsize\textwidth]{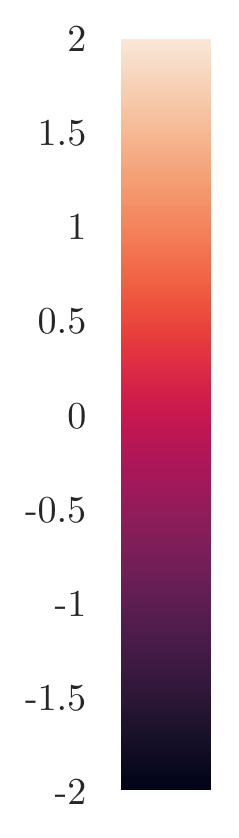}} &
        \includegraphics[align=t,width=\ssize\textwidth]{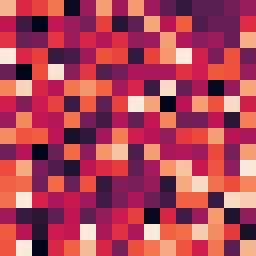} &
        \includegraphics[align=t,width=\ssize\textwidth]{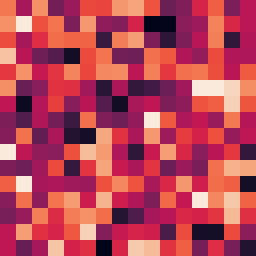} &
        \includegraphics[align=t,width=\ssize\textwidth]{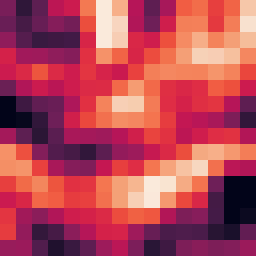} &
        \includegraphics[align=t,width=\ssize\textwidth]{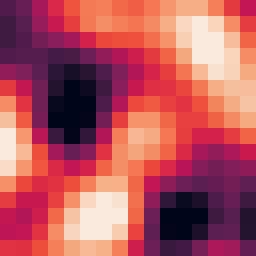} \\
        
        \vspace{0.5em} \\
        & $n = 128$ & $n = 192$ & $n = 256$ & $n = 511$ \\
        \vspace{0.3em} \\
        & 
        \includegraphics[align=t,width=\ssize\textwidth]{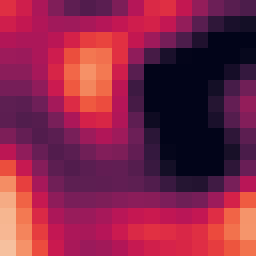} &
        \includegraphics[align=t,width=\ssize\textwidth]{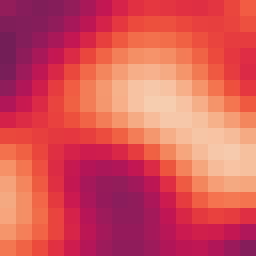} &
        \includegraphics[align=t,width=\ssize\textwidth]{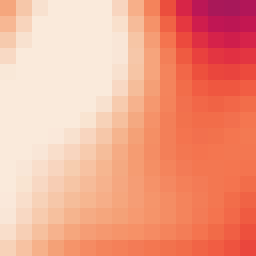} &
        \includegraphics[align=t,width=\ssize\textwidth]{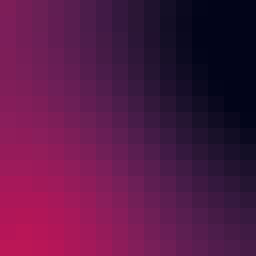} \\
    \end{tabular}
    
    \caption{
        Visualization of the $512 \times 16 \times 16$ ($\mathcal{F}_5$) tensor sampled from \gaussiandist.
        $n$ denotes the number of the visualized channel (zero indexed).
        Further channels are more smooth, giving a similar signal to distant points.
    }
    \label{fig:gaussiandist-sample}
\end{figure}

\section{Experiments}

In our experiments, we use SpaceNet \cite{spacenet} dataset in RGB format. We train five models: a baseline (using original StyleGAN2 space), and models with latent space built upon $\mathcal{F}_3$ and $\mathcal{F}_5$, using both $\mathcal{N}\funarg{0, I}$ and \gaussiandist{} sampling methods. For both spatial models, we always set the input RGB map to 0.

We use exactly the same hyperparameters and resolution (1024px $\times$ 1024px) as in FFHQ config F in StyleGAN2-ADA \cite{DBLP:journals/corr/abs-2006-06676},
with the exception that instead of training for 25\,000 iterations,
we first train for 20\,000 iterations using vanilla architecture, and then train further every of 5 models (including vanilla model) for additional 5\,000 iterations.
That way, we spent a total of 45\,000 iterations, instead of 125\,000 which would be the case when training all models from scratch, running out of our computing budget and taking significantly longer.

\begin{table}
    \centering
    \begin{tabular}{|l|c|}
    \hline
    \textbf{Model} & \textbf{FID} \\ \hline
    Vanilla StyleGAN2 & 14.3 \\
    $\mathcal{F}_5$, sampled from $\mathcal{N}\funarg{0, I}$ & 61.1 \\
    $\mathcal{F}_5$, sampled from \gaussiandist & 40.7 \\
    $\mathcal{F}_3$, sampled from $\mathcal{N}\funarg{0, I}$ & 12.7 \\
    $\mathcal{F}_3$, sampled from \gaussiandist & \textbf{10.1} \\ \hline
    \end{tabular}
    \caption{
        Fréchet Inception Distance (FID) (lower is better) for trained generator models.
        As in StyleGAN \cite{DBLP:journals/corr/abs-1812-04948}, we calculate
        the FIDs using 50\,000 random images from the training set and 50\,000 generated images.
        In particular, we use the same pretrained Inception-V3 model as in StyleGAN2-ADA \cite{DBLP:journals/corr/abs-2006-06676}.
        $\mathcal{F}_3$ sampled from \gaussiandist{} outperforms the original StyleGAN2 model by 29\% in terms of FID score.
        Using \gaussiandist{} instead of $\mathcal{N}\funarg{0, I}$ performs 20--30\% better.
    }
    \label{table:train-results}
\end{table}

\begin{figure}
    \setlength{\tabcolsep}{0.05em}
    \renewcommand{\arraystretch}{0}
    \centering
    
    \def\ssize{0.24}
    \begin{tabular}{cccc}
        \includegraphics[width=\ssize\textwidth]{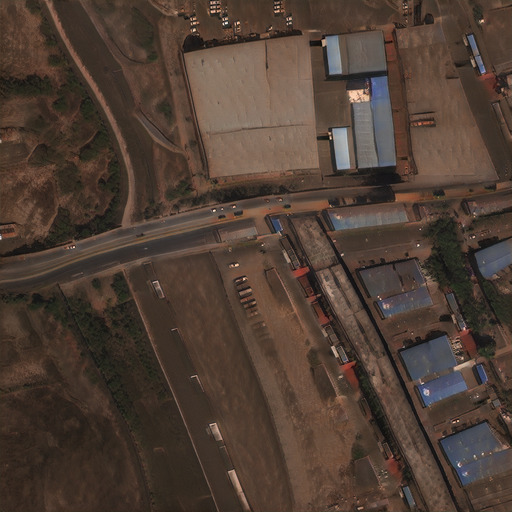} &
        \includegraphics[width=\ssize\textwidth]{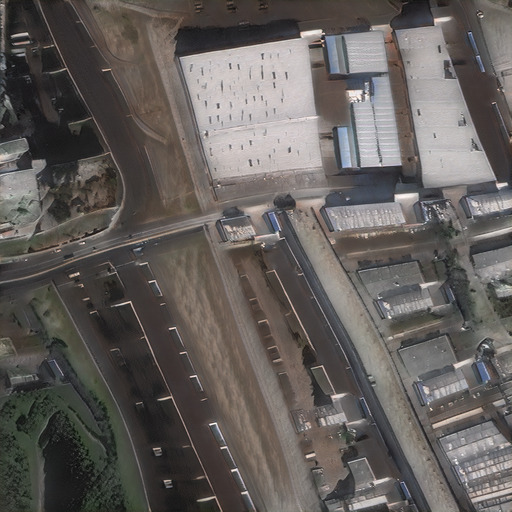} &
        \includegraphics[width=\ssize\textwidth]{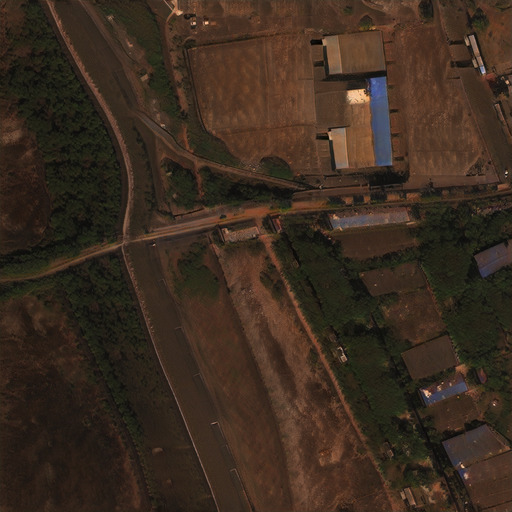} &
        \includegraphics[width=\ssize\textwidth]{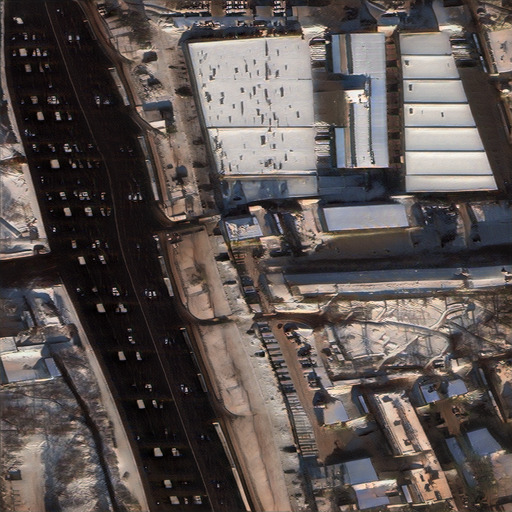} \\
    \end{tabular}
    \caption{
        We took trained vanilla StyleGAN2 generator and randomly generated 64 samples.
        From such small amount, we are able to select 4 images that are spatially very similar to each other (\eg buildings and roads are in the same location, and have the same shape).
        We suspect that building locations and shapes are encoded in the trained $\mathcal{F}_1$ tensor, as it's the only place in StyleGAN2 where spatial information can be encoded.
        Note that we don't see similar behavior for models trained in spatial spaces nor in the training dataset as ground truth images are obtained by random cropping very large images.
    }
    \label{fig:train-vanilla-background}
\end{figure}

SpaceNet consists of very large images, sometimes even one image per the entire area of interest, often having more than a few thousands megapixels.
Instead of preparing $1024 \times 1024$ patches beforehand, in our implementation we use images converted into Zarr \cite{github:zarr} data format, which allows us to crop patches during training from large images in a performant way.
We sample data proportionally to total image sizes, cropping patches uniformly, \ie every $1024 \times 1024$ patch is equally probable. We aim to avoid overfitting, especially to spatial elements.

We use the Fréchet Inception Distance (FID) \cite{DBLP:journals/corr/HeuselRUNKH17} score for evaluation.
We show results of our method in Table~\ref{table:train-results}. Samples of generated images are in Figure~\ref{fig:train-images}.
We obtain up to 29\% improvement in FID score using $\mathcal{F}_3$ sampled with \gaussiandist{} compared to the original StyleGAN2 model and training procedure. 

In Figures~\ref{fig:train-high-res-images}, \ref{fig:train-high-res-images2}, \ref{fig:train-high-res-images3}, and \ref{fig:train-high-res-images4},
we utilize spatiality of the latent space to sample a spatially large input maps
to generate high resolution images.

\begin{figure}
    \setlength{\tabcolsep}{0.15em}
    \renewcommand{\arraystretch}{0}
    \centering
    
    \def\ssize{0.28}
    \begin{tabular}{cccc}
        \rotatebox[]{90}{Vanilla}\; &
        \includegraphics[align=c,width=\ssize\textwidth]{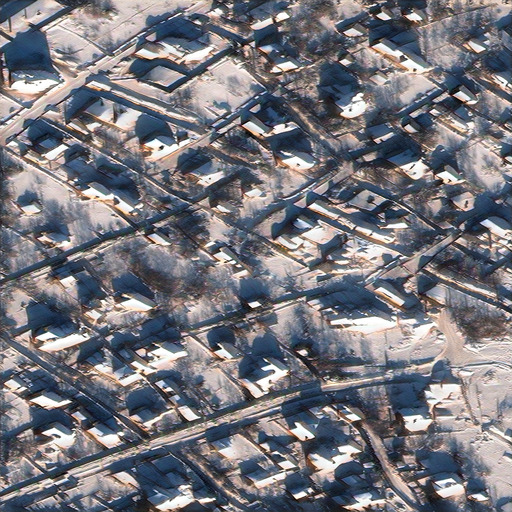} &
        \includegraphics[align=c,width=\ssize\textwidth]{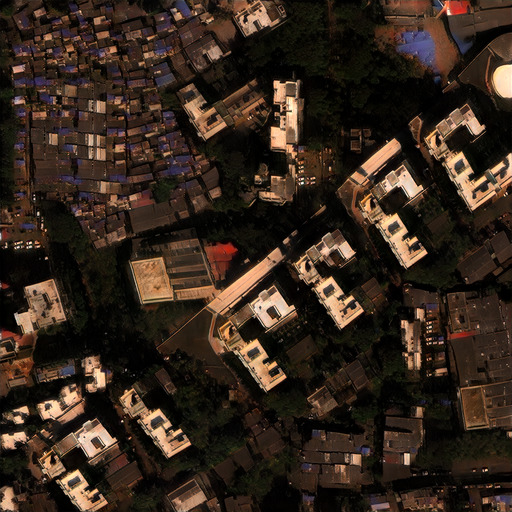} &
        \includegraphics[align=c,width=\ssize\textwidth]{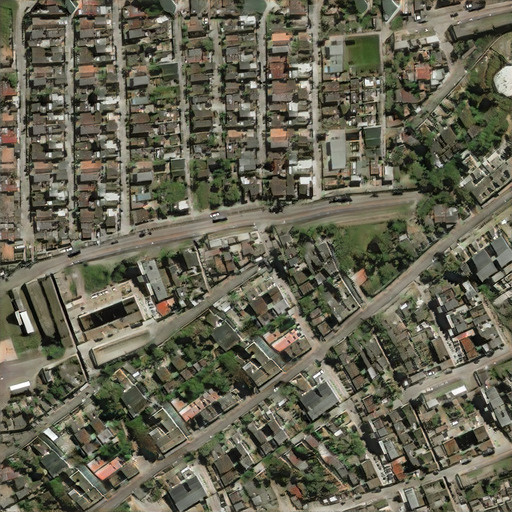} \\
        \vspace{0.15em} \\
        
        \rotatebox[]{90}{\parbox{\ssize\textwidth}{\centering $\mathcal{F}_3$ sampled from \gaussiandist}}\; &
        \includegraphics[align=c,width=\ssize\textwidth]{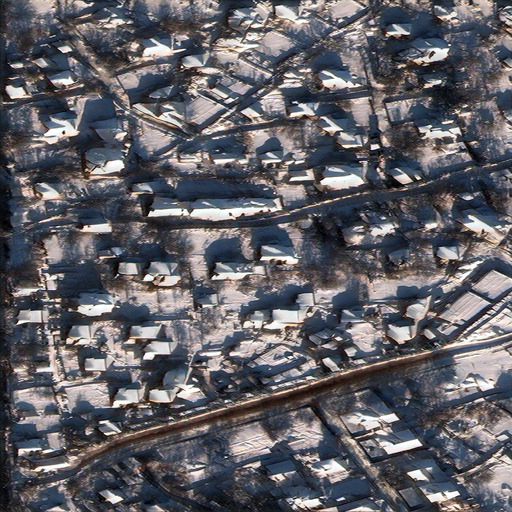} &
        \includegraphics[align=c,width=\ssize\textwidth]{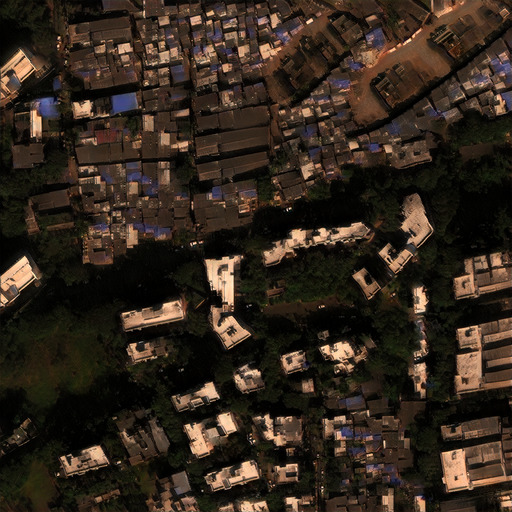} &
        \includegraphics[align=c,width=\ssize\textwidth]{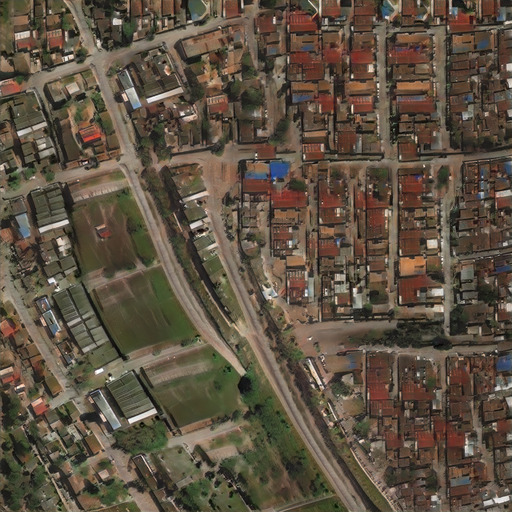} \\
        \vspace{0.15em} \\
        
        \rotatebox[]{90}{\parbox{\ssize\textwidth}{\centering $\mathcal{F}_5$ sampled from $\mathcal{N}\funarg{0, I}$}}\; &
        \includegraphics[align=c,width=\ssize\textwidth]{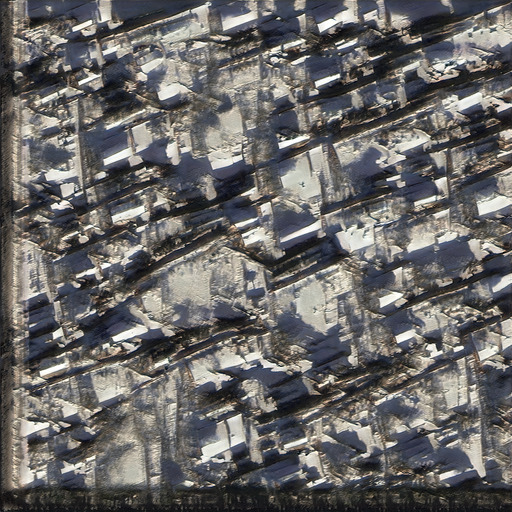} &
        \includegraphics[align=c,width=\ssize\textwidth]{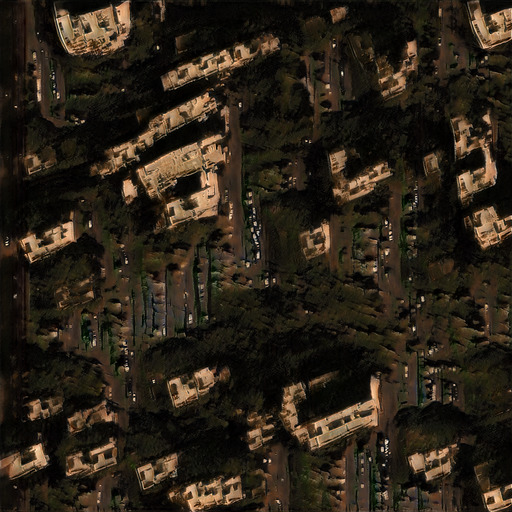} &
        \includegraphics[align=c,width=\ssize\textwidth]{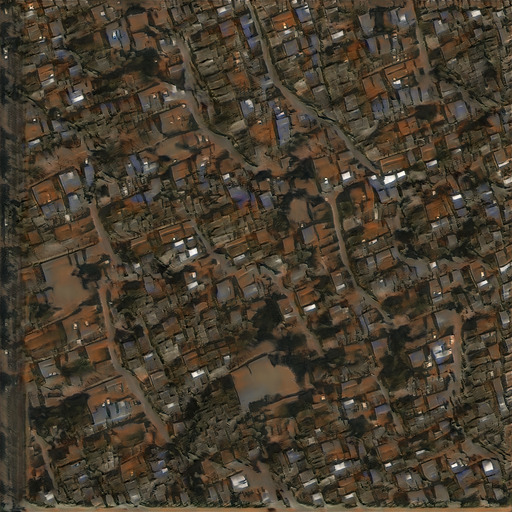} \\
        \vspace{0.15em} \\
        
        \rotatebox[]{90}{\parbox{\ssize\textwidth}{\centering $\mathcal{F}_5$ sampled from \gaussiandist}}\; &
        \includegraphics[align=c,width=\ssize\textwidth]{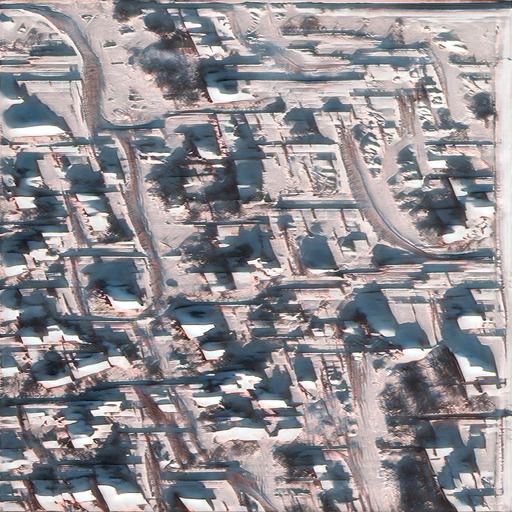} &
        \includegraphics[align=c,width=\ssize\textwidth]{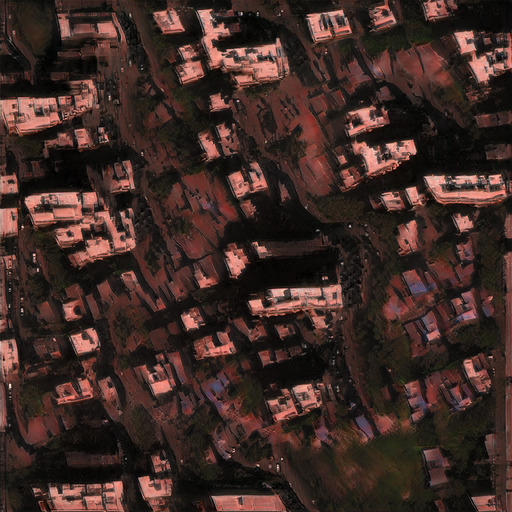} &
        \includegraphics[align=c,width=\ssize\textwidth]{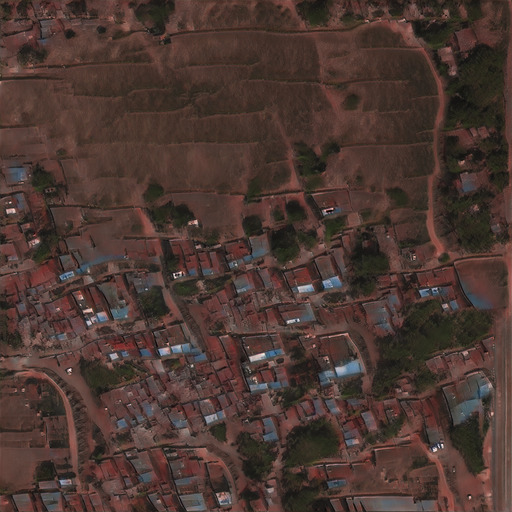} \\
    \end{tabular}
    \caption{
        Generated image examples using different models.
        Input tensors sampled from $\mathcal{N}\funarg{0, I}$ (3rd row) cannot capture spatial variance well.
    }
    \label{fig:train-images}
\end{figure}

\begin{figure}
    \setlength{\tabcolsep}{0.15em}
    \renewcommand{\arraystretch}{0}
    \centering
    
    \def\ssize{0.99}
    \includegraphics[width=\ssize\textwidth]{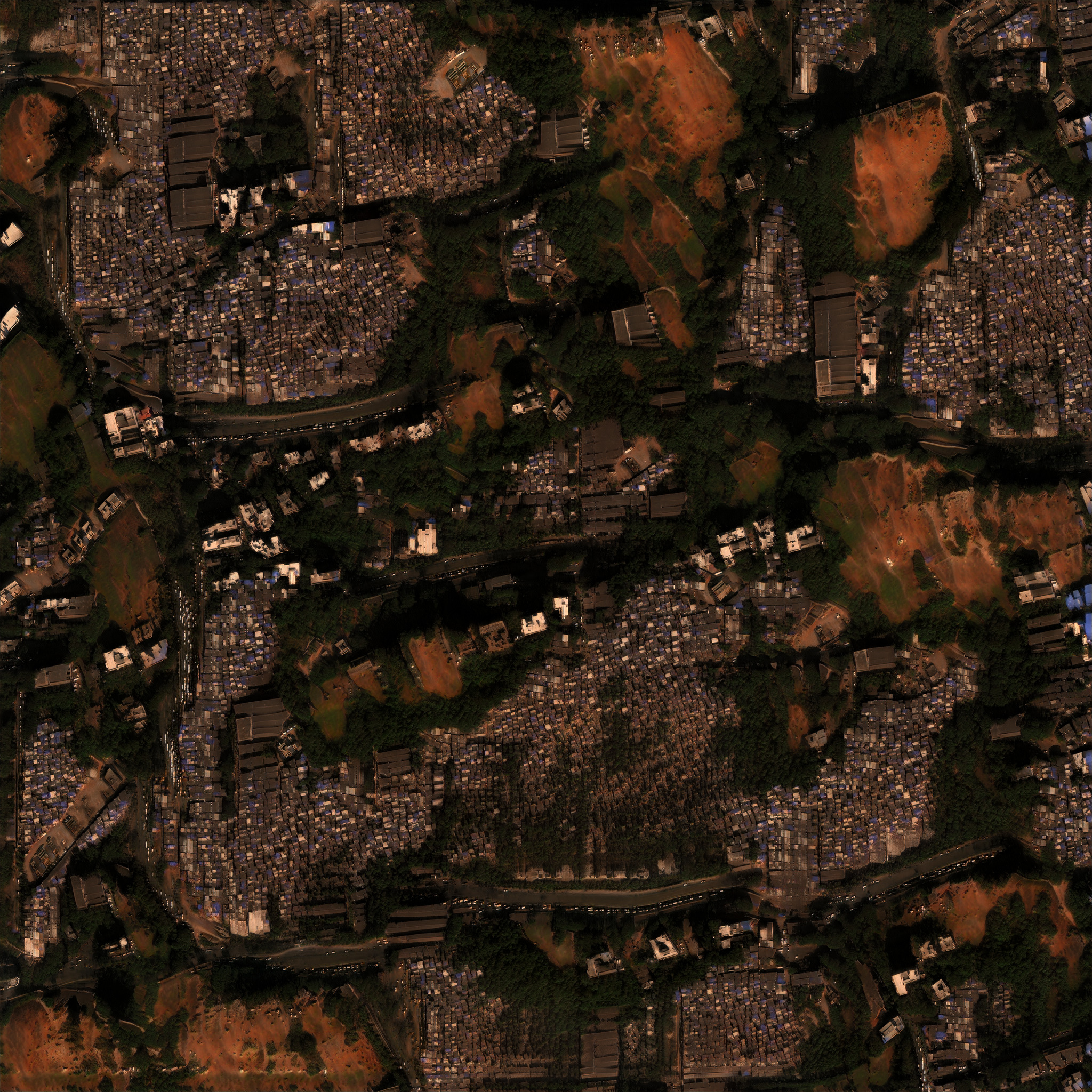}
    \caption{
        We take a model trained on $\mathcal{F}_3$ sampled from \gaussiandist.
        We generate 4 times larger (on edge) input map than during training, \ie $512 \times 32 \times 32$ instead of $512 \times 8 \times 8$.
        It results in $4096 \times 4096$ image depicting correspondingly wider area.
    }
    \label{fig:train-high-res-images}
\end{figure}

\begin{figure}
    \setlength{\tabcolsep}{0.15em}
    \renewcommand{\arraystretch}{0}
    \centering
    
    \def\ssize{0.99}
    \includegraphics[width=\ssize\textwidth]{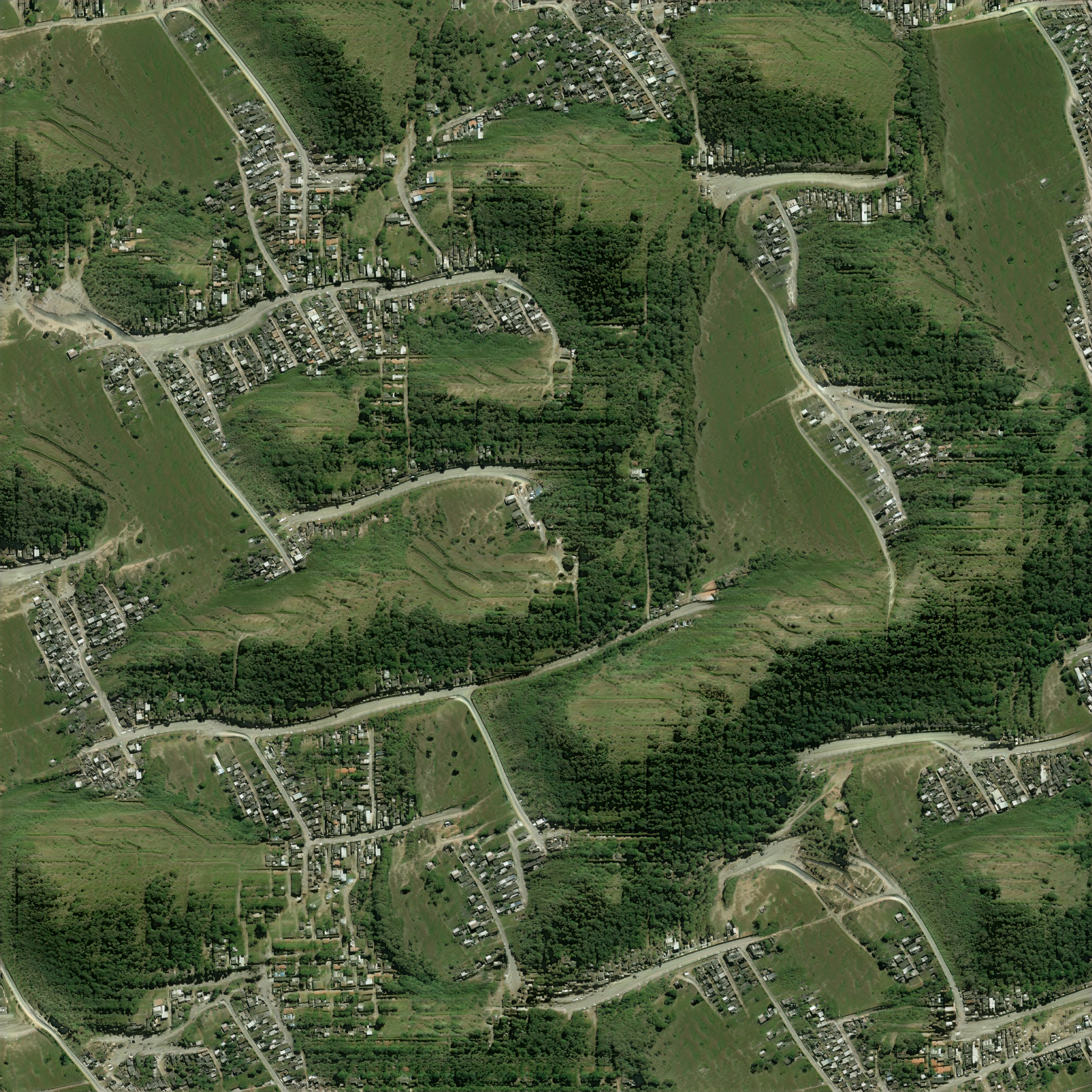} \\
    \caption{
        The same as Figure~\ref{fig:train-high-res-images}, but with the different latent representation.
    }
    \label{fig:train-high-res-images2}
\end{figure}

\begin{figure}
    \setlength{\tabcolsep}{0.15em}
    \renewcommand{\arraystretch}{0}
    \centering
    
    \def\ssize{0.99}
    \includegraphics[width=\ssize\textwidth]{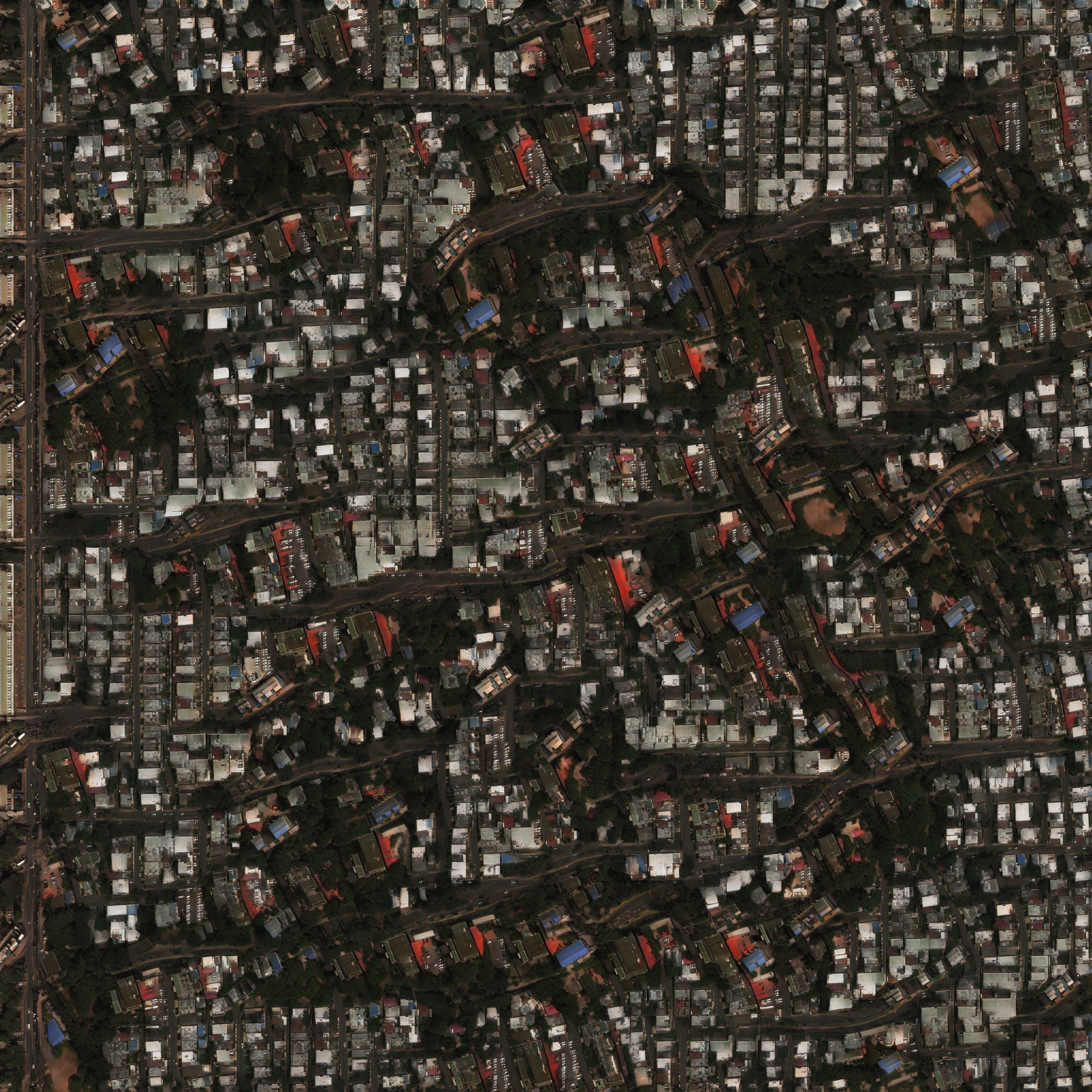} \\
    \caption{
        We follow the procedure from Figure~\ref{fig:train-high-res-images}, however we use the model trained on $\mathcal{F}_3$ sampled from $\mathcal{N}\funarg{0, I}$.
        The obtained image is much more spatially homogeneous.
    }
    \label{fig:train-high-res-images3}
\end{figure}

\begin{figure}
    \setlength{\tabcolsep}{0.15em}
    \renewcommand{\arraystretch}{0}
    \centering
    
    \def\ssize{0.99}
    \includegraphics[width=\ssize\textwidth]{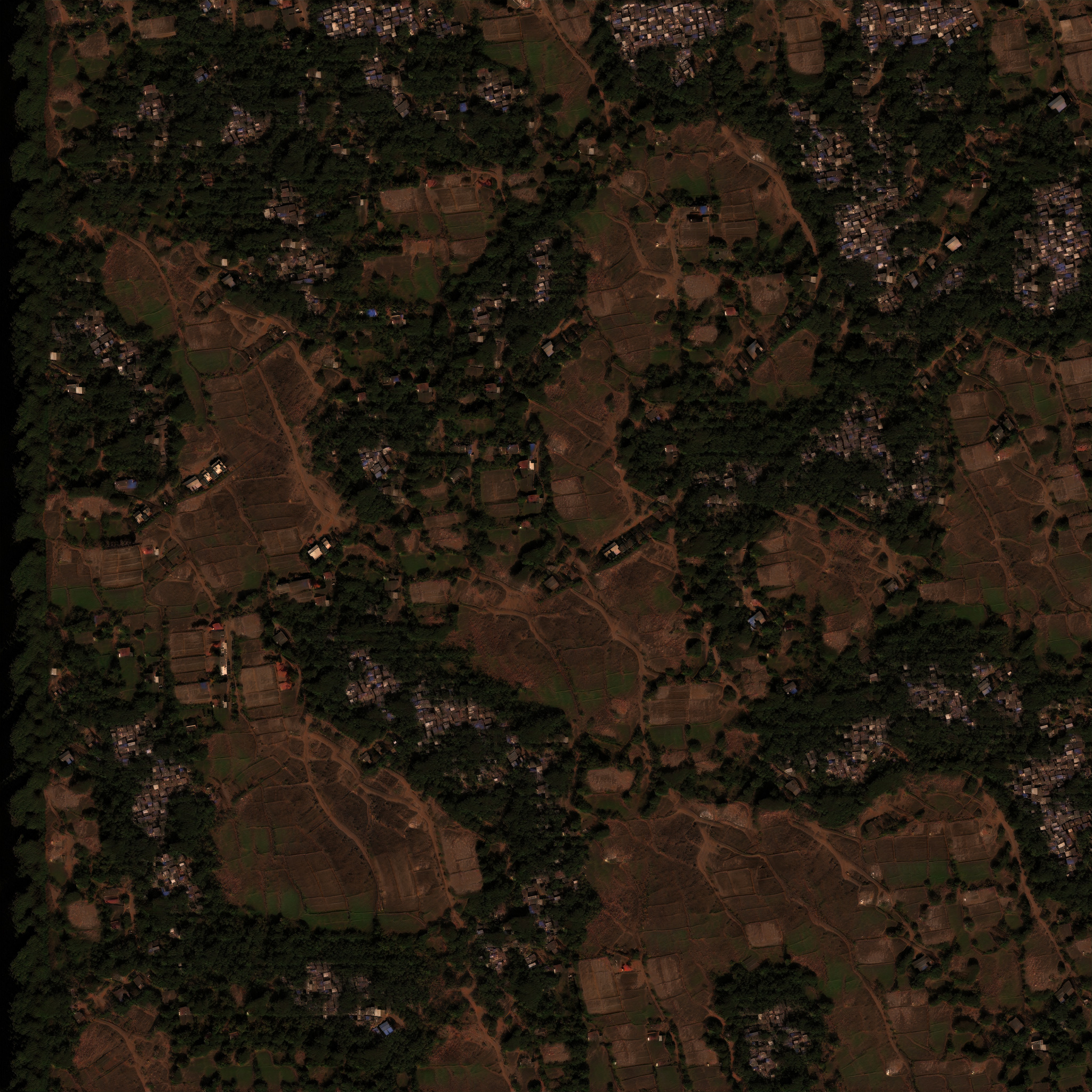} \\
    \caption{
        The same as Figure~\ref{fig:train-high-res-images3}, but with the different latent representation.
        Despite of sampling from $\mathcal{N}\funarg{0, I}$, a small fraction of images still manage to contain visually realistic spatial variance.
    }
    \label{fig:train-high-res-images4}
\end{figure}

\chapter{Summary}

We define the $\SWp$ latent space by extending style vectors into style maps and families of $\FWp{}$ and $\FSWp{}$ latent spaces, which add feature and RGB maps in StyleGAN2.
We call these spaces spatial, because they contain 2D structure for which a given spatial coordinate
affects the same coordinate and its surrounding in the generated image.
Additionally $\FWp{}$ and $\FSWp{}$ spaces are equivariant to translation, \ie translating them spatially will translate the generated image in the same way.

We validate them for quality, mixing, attribute edition and manipulation capabilities.
We propose an attribute model, which is capable of editing our spatial latent spaces
and is trained using an attribute direction vector that can be obtained using other methods.
Our attribute model learns a transformation for feature and RGB maps in representations
to apply a targeted attribute effect.

We propose a method for projecting images into $\FNWp{}$.
The method uses a projection encoder for initial latent representation projection and iterative optimization.
The projection encoder captures useful spatial semantic of the images \eg face and face parts positions and background segmentations.
Latent representation from the encoder is a great initial representation for optimization algorithm, and prevents from falling into meaningless local minima.

We show that our image projection method can obtain better results than projection into non-spatial latent spaces,
shows equivariance to translation and can deal with out-of-sample images like an image with multiple people,
or faces with non-standard amount of face parts.
Our attribute model can be used for such representations and obtain realistic effect.

Further, we identify a potential general issue that projected latent representations differ greatly from generated representations in $\mathcal{W}$ feature distribution.
We argue that such projected representations may perform worse for methods that use generated representations for training.
For example methods for obtaining attribute vectors can be trained on generated representations,
therefore they may consider other representations (\eg projected representations) out-of-sample.

We show a simple method for interpolating a sample distribution into the target distribution,
and observe that besides of lowering difference in feature histograms, it retains some visual similarity.
We extend that idea and define it as a regularization loss for latent representation optimization step for image projection algorithm.
We observe that for non-spatial space $\NWp$, we can almost entirely equalize features, without decreasing image quality.
With our regularization, attribute direction vector editions get more sensitive and presumably more on-target.
Similarly, we use our distribution regularization on style vectors, feature maps and RGB maps for $\FNWp{}$ latent representation optimization.
We notice that the regularization helps greatly and allows us to optimize representations much longer while keeping semantics,
which improves the image quality significantly and resulting representations are still suitable for image editing purposes.

Our projection method into $\FNWp{5}$ using distribution regularization can improve the image quality even as much as 30\%
compared to projection into commonly used $\NWp$ on selected samples, and still keep semantics for image manipulation.

We propose a training procedure in spatial space for StyleGAN2.
Such models can outperform vanilla StyleGAN2 on spatially homogeneous datasets like satellite imagery.
We propose a \gaussiandist{} for sampling input tensors for our spatial representations.
It blurs $\mathcal{N}\funarg{0, I}$ noise maps, using different coefficient for every channel, allowing to obtain greater
similarity between spatially close elements and less similarity between spatially far elements, possibly better capturing geometric shapes.
Our models improve the FID score on SpaceNet dataset by 29\%,
in addition to desired properties such as equivariance to translation, and ability to generate consistent images of arbitrary size.

\section{Future Works}
\subsection{Image Projection into $\FNSWp{}$}
One potentially promising research direction that we started, and still requires more investigation to achieve satisfying results,
is image projection into $\FNSWp{}$.
Having spatial style maps in the representation, drastically increases the number of degrees of freedom, making the projection much harder.
We tried to come up with good regularization methods for style maps.
For instance, we tried to use lower resolution style maps and upscale them to a proper size when passing to the style layer.
The other regularization was to apply an additional loss defined as MSE loss between a style map and its blurred version,
which should result in increased similarity in styles between close points.
We noticed that selecting one blur coefficient for all layers is not enough,
as later layers capture more fine style, and are more prone to overfitting.
Note that we defined blur coefficient as a percentage of spatial size, not as a the number of features,
because later style maps have more features.
We additionally tried to lower learning rate with style map depth,
however that didn't change results much as feature maps $\mathcal{F}$ are more easily able to learn information overlapping with earlier style maps that occurs after them. 
At the end, we weren't able to get a stable projection procedure, that achieves noticeably better results than our other methods.

We believe that there's a lot of potential in researching this subject further.
It may result in much better projection quality.
An exciting use case, could be obtaining style maps in an unsupervised way for analyzing image content.
For example, it could be used for satellite or street-view images for detection or segmenting architectural building styles.
It would also make style unification in the image possible.


\subsection{Better Input Map Distribution for Models}
We showed that GAN models trained on spatial latent spaces using \gaussiandist{} perform better than models trained using
$\mathcal{N}\funarg{0, I}$.
We believe that input distribution selection has still a lot of room for improvement.
A simple idea is to include more kinds of noise maps, for example different kind of blurring of $\mathcal{N}\funarg{0, I}$ maps, \eg box blur, radial blur, or even motion blur to add a random direction.
All blurring methods make maps smoother, therefore we suspect that adding some sharp shapes can be beneficial, for example scattering random polygons.
The model could possibly also benefit from more complex maps, \eg using generators of so-called grunge textures.
Presumably, models should be able to learn an appropriate combinations of channels, extracting most useful information to generate images from spatially matching distribution.

Another idea is to parameterize input tensor generators, and make these parameters learnable.
For example, for gaussian blur, it can be $\sigma$ coefficient.
Leaving coefficient adjustment to the model, may results
in better quality, and easier tuning to different datasets.
Note that such method requires differentiable map generators.
This is easy to define for \eg blurring methods, however it may be more difficult for more complex generators.

\appendix

\chapter{}

\section{Spatial Style Layer Implementation Details}
\label{style-layer-impl}

Authors of StyleGANs implement the style layer by convolution weights modulation using styles followed by demodulation.
However such approach cannot be done in spatial setup,
as we would have to modulate convolution weights at different spatial coordinate differently. It gets even more problematic because convolutions have usually $3 \times 3$ kernels, so for every spatial coordinate we would need to keep the copy all convolution weights.

Instead, we propose to modulate and demodulate activation tensors instead of convolution weights.
Modulation can be done easily by multiplying input activation tensors by the style vector, which is equivalent of multiplying/modulating convolution weights. Such multiplication can be done on every spatial element independently.

Computing demodulation weights for output activation tensors is more problematic as it requires to calculate them directly from modulated convolution weights.

First we need to expand a convolution weights into a 7D tensor [batch, out-channels, in-channels, conv-kernel-height, conv-kernel-width, spatial-height, spatial-width].
Now, we can multiple this tensor by similarly expanded style map, and use the same formula as for convolution weights to find demodulation coefficients for every sample, output channel, spatial height, and spatial width.

However this general idea cannot be directly used because a single forward pass requires a lot of memory,
and such implementation runs out of memory even on high-end GPUs, even with only one element in a mini-batch.

To tackle that problem we split large tensors into multiple smaller tensors on one dimension, process them independently, and then concatenate.
We split along spatial-width dimension in our implementation.
Note that it's possible to do this only on some dimensions to ensure that the concatenated output tensor won't change when using this method.
Such memory optimization makes it feasible to run, however it still doesn't solve memory problems for backpropagation, where
a computational graph still has to keep all tensors in the memory to calculate gradients.
We implement a custom method for a backward pass, that similarly splits tensors (on the input channel dimension).

Figure~\ref{code:demod} shows our implementation of a function returning demodulation coefficients for the output activation tensor.
The layer takes convolution weights and style maps.

\begin{figure}[h]
\begin{python}
class Demodulation(torch.autograd.Function):
    max_tensor_size = 4e8  # change to control memory consumption
    
    @staticmethod
    def forward(ctx, weight, style):
        # weight : B x Cout x Cinp x Hconv x Wconv
        # style  : B        x Cinp                 x H x W
        
        batch, in_channel, height, width = style.shape
        size = batch * in_channel * height * width *
               weight.shape[1] * weight.shape[3] * weight.shape[4]
        step = max(1, round(width * Demodulation.max_tensor_size / size))
        demod = torch.cat([
            torch.rsqrt(
                (weight.view(*weight.size(), 1, 1) * 
                 style.view(batch, 1, in_channel, 1, 1, height, width
                           )[..., i : i + step]
                ).pow(2).sum([2, 3, 4]) + 1e-8)
            for i in range(0, width, step)], -1)
        ctx.save_for_backward(demod, weight, style)
        return demod
    
    @staticmethod
    def backward(ctx, demod_grad):
        demod, weight, style = ctx.saved_tensors
        # demod  : B x Cout                        x H x W
        # weight : B x Cout x Cinp x Hconv x Wconv
        # style  : B        x Cinp                 x H x W
        
        batch, in_channel, height, width = style.shape
        size = batch * in_channel * height * width * 
               weight.shape[1] * weight.shape[3] * weight.shape[4]
        step = max(1,
                   round(in_channel * Demodulation.max_tensor_size / size))
        
        mdemod3 = -demod.pow(3)
        weight_grad = torch.cat([
            (weight.view(*weight.size(), 1, 1)[:, :, i : i + step] *
             style.view(batch, 1, in_channel, 1, 1, height, width
                       )[:, :, i : i + step].pow(2) *
             mdemod3.view(batch, demod.shape[1], 1, 1, 1, height, width)
            ).mean([5, 6])
            for i in range(0, in_channel, step)], 2)
        
        style_grad = torch.cat([
            (weight.view(*weight.size(), 1, 1)[:, :, i : i + step].pow(2) *
             style.view(batch, 1, in_channel, 1, 1, height, width
                       )[:, :, i : i + step] *
             mdemod3.view(batch, demod.shape[1], 1, 1, 1, height, width)
            ).sum([3, 4]).mean(1)
            for i in range(0, in_channel, step)], 1) * width * height
        
        return (
            torch.einsum('abfg,abcde->abcde', demod_grad, weight_grad),
            torch.einsum('abfg,acfg->acfg', demod_grad, style_grad),
        )
\end{python}
\caption{
    Spatial demodulation weight layer implementation in PyTorch \cite{NEURIPS2019_9015} using \pyth{torch.autograd.Function} API.
}
\label{code:demod}
\end{figure}

\clearpage

\section{Preserving Noise Map Distribution for Mixing}
\label{preserve-std}
\begin{theorem}
    If $x, y \sim \mathcal{N}\funarg{0, I}$, $x, y \in \mathbb{R}^{H \times W}$ are independent and $m \in [0, 1]^{H \times W}$ then
    \begin{equation}
        \frac{\mix\funarg{x, y, m}}{\sqrt{2m^2 - 2m + 1}} \sim \mathcal{N}\funarg{0, I}.
    \end{equation}
\end{theorem}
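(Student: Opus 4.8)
The plan is to reduce the statement to a one-dimensional computation, since every operation involved --- the masked mixing $\mix\funarg{x, y, m} = x + m \odot \parens{y - x}$, the rescaling by $\sqrt{2m^2 - 2m + 1}$, and membership in $\mathcal{N}(0, I)$ itself --- acts coordinatewise. First I would fix a coordinate $(i, j)$ and write the corresponding entry of $\mix\funarg{x, y, m}$ as $\parens{1 - m_{ij}} x_{ij} + m_{ij}\, y_{ij}$, a deterministic linear combination of the two independent standard Gaussians $x_{ij}$ and $y_{ij}$.

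Then I would invoke the standard fact that a linear combination $a X + b Y$ of independent Gaussian random variables is again Gaussian, with mean $a\,\EX[X] + b\,\EX[Y] = 0$ and variance $a^2 \Var(X) + b^2 \Var(Y) = a^2 + b^2$. Substituting $a = 1 - m_{ij}$ and $b = m_{ij}$ gives variance $\parens{1 - m_{ij}}^2 + m_{ij}^2 = 2 m_{ij}^2 - 2 m_{ij} + 1$, so dividing the entry by $\sqrt{2 m_{ij}^2 - 2 m_{ij} + 1}$ yields a variable distributed as $\mathcal{N}(0, 1)$. I would note in passing that $2 t^2 - 2 t + 1 \geq \tfrac{1}{2} > 0$ for all $t \in [0, 1]$, so the rescaling is well-defined.

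Finally I would assemble the coordinates: the $(i, j)$-th entry of $\mix\funarg{x, y, m} / \sqrt{2m^2 - 2m + 1}$ depends only on $x_{ij}$ and $y_{ij}$, and since the whole family $\{x_{ij}, y_{ij}\}_{i, j}$ is mutually independent (the joint law of $(x, y)$ being a product of standard normals), the output entries are mutually independent; each being $\mathcal{N}(0, 1)$, the full tensor is $\mathcal{N}(0, I)$. Since the argument is in essence a single variance computation together with closure of the Gaussian family under linear combinations, I do not anticipate any genuine obstacle; the only points that merit explicit mention are the strict positivity of $2m^2 - 2m + 1$ on $[0, 1]$ and the preservation of independence, which holds precisely because the transformation is applied separately in each spatial coordinate.
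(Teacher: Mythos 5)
Your proposal is correct and follows essentially the same route as the paper's proof: reduce to a single coordinate, compute the variance $(1-m)^2 + m^2 = 2m^2 - 2m + 1$ of the linear combination, check the mean is zero, and invoke closure of independent Gaussians under linear combinations. Your added remark that $2m^2 - 2m + 1 \geq \tfrac{1}{2} > 0$ on $[0,1]$ is a small, welcome extra that the paper omits, but it does not change the argument.
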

\begin{proof}
    Because all elements/scalars in tensors $x$ and $y$ are independent,
    it's sufficient to prove it, assuming that $x$, $y$ and $m$ are scalars, \ie $H = W = 1$.
    Let $x, y \sim \mathcal{N}\funarg{0, 1}$, and $m \in [0, 1]$.
    In such case $\mix\funarg{x, y, m} = x + m \parens{y - x} = (1-m)x + m y$. Further
    $\Var\funarg{(1-m)x + my} = \Var\funarg{(1-m)x} + \Var\funarg{my} = (1-m)^2\Var\funarg{x} + m^2\Var\funarg{y} = 2m^2 - 2m + 1$.
    Therefore $\Var\funarg{\frac{\mix\funarg{x, y, m}}{\sqrt{2m^2 - 2m + 1}}} = \frac{2m^2 - 2m + 1}{(\sqrt{2m^2 - 2m + 1})^2} = 1$.
    Similarly $\mathbb{E}\funarg{\frac{\mix\funarg{x, y, m}}{\sqrt{2m^2 - 2m + 1}}} = \frac{(1-m) \cdot 0 + m \cdot 0}{\sqrt{2m^2 - 2m + 1}} = 0$.
    Because linear combination of independent normal random variables is also a normal random variable $\frac{\mix\funarg{x, y, m}}{\sqrt{2m^2 - 2m + 1}} \sim \mathcal{N}\funarg{0, 1}$.
\end{proof}

\section{Preserving Noise Input Standard Deviation for Convolutions}
\label{preserve-std-for-blur}
\begin{theorem}
    Let $x : \mathbb{Z} \times \mathbb{Z} \rightarrow \mathbb{R}$ where $x_{a, b} \sim \mathcal{N}\funarg{0, 1}$ for any $a, b \in \mathbb{Z}$, $x_{a, b}$ is independent from $x_{c, d}$ for any $a,b,c,d \in \mathbb{Z}$, $(a,b) \neq (c,d)$, and $K \in \mathbb{R}^{H \times W}$, $H, W \in \mathbb{N}_+$ then
    \begin{equation}
        \Var\funarg{(x * K)_{i,j}} = \sum_{a \in [1, \ldots, H], b \in [1, \ldots, W]} K_{a,b}^2,
    \end{equation}
    for any $i, j \in \mathbb{Z}$, where $*$ is the convolution operator.
\end{theorem}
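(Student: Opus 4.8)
The plan is to expand the convolution as a finite linear combination of distinct entries of $x$ and then use independence to split the variance. Concretely, by definition of the (discrete) convolution there is a fixed finite index set such that
\begin{equation}
(x * K)_{i,j} = \sum_{a \in [1,\ldots,H],\, b \in [1,\ldots,W]} K_{a,b}\, x_{i - a + c_1,\, j - b + c_2},
\end{equation}
for constants $c_1, c_2$ depending only on the chosen convention (zero-indexing vs. one-indexing, ``correlation'' vs. ``convolution''); since we only care about the sum of $K_{a,b}^2$ over all of $K$, the precise convention is irrelevant. The key observation is that the map $(a,b) \mapsto (i-a+c_1,\, j-b+c_2)$ is injective, so the random variables $x_{i-a+c_1,\,j-b+c_2}$ appearing above are pairwise distinct entries of $x$, hence mutually independent and each $\mathcal{N}\funarg{0,1}$.

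Next I would compute the variance. Because the summands are independent (so all covariance cross-terms vanish) and each has unit variance,
\begin{equation}
\Var\funarg{(x * K)_{i,j}} = \sum_{a,b} K_{a,b}^2 \, \Var\funarg{x_{i-a+c_1,\,j-b+c_2}} = \sum_{a,b} K_{a,b}^2,
\end{equation}
which is the claim; the statement holds for every $i,j \in \mathbb{Z}$ since the argument is translation-independent. (If one also wants it, $\EX\funarg{(x*K)_{i,j}} = 0$ follows immediately by linearity of expectation.)

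I do not expect a genuine obstacle here — this is essentially the standard fact that a convolution against a white-noise field has variance equal to the squared $\ell^2$-norm of the kernel. The only point requiring a little care is the bookkeeping in the first step: making sure the index shift is injective so that independence of the summands is legitimate, and handling the indexing convention so that the sum really ranges over all entries of $K$. Once that is set up, the variance computation is a one-line application of the additivity of variance for independent random variables. This lemma is exactly what justifies the normalization factor $\sqrt{\sum_{k \in K} k^2}$ used in Algorithm~\ref{alg:gaussiandist-sampling}.
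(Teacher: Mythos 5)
Your proposal is correct and follows essentially the same route as the paper: expand $(x*K)_{i,j}$ as a finite linear combination of distinct (hence independent) unit-variance entries of $x$ and apply additivity of variance, so all cross-terms vanish and $\Var\funarg{(x*K)_{i,j}} = \sum_{a,b} K_{a,b}^2$. The only difference is cosmetic — the paper writes the shifted index as $x_{i+a,j+b}$ and does not spell out the injectivity of the index map, which you make explicit.
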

\begin{proof}
    \begin{align}
        \Var\funarg{(x * K)_{i,j}} &=
        \Var\funarg{\sum_{a \in [1, \ldots, H], b \in [1, \ldots, W]} x_{i+a,j+b} \cdot K_{a,b}} = \notag\\
        &= \sum_{a \in [1, \ldots, H], b \in [1, \ldots, W]} K_{a,b}^2 \cdot \Var\funarg{x_{i+a,j+b}} =
        \sum_{a \in [1, \ldots, H], b \in [1, \ldots, W]} K_{a,b}^2.
    \end{align}
\end{proof}
Similarly to \ref{preserve-std}, we could prove that
$\Var\funarg{\frac{(x * K)_{i,j}}{\sqrt{\sum_{k \in K} k^2}}} \sim \mathcal{N}\funarg{0, 1}$,
for all $(i,j)$. However $(x * K)_{a,b}$ and $(x * K)_{c,d}$ are not independent from each other if $|a-c| < H$ or $|b-d| < W$.

%
%

\begin{sloppypar}
\addcontentsline{toc}{chapter}{Bibliography}
\printbibliography
\end{sloppypar}

\end{document}